\definecolor{ballblue}{HTML}{338EA7}
\definecolor{lightseagreen}{HTML}{759D39}
\definecolor{lightred}{HTML}{DD7769}
\definecolor{org}{HTML}{F8A145}
\definecolor{blu}{HTML}{63ACE5}
\definecolor{c1}{HTML}{41B3A3}
\definecolor{c2}{HTML}{3500D3}
\definecolor{test}{HTML}{E5B8FD}
\definecolor{bdy}{HTML}{FDADAD}
\definecolor{corr}{HTML}{B2E1EA}
\definecolor{wrng}{HTML}{CFE8BD}
\definecolor{app_blue}{RGB}{0,20,115}
\theoremstyle{plain}
\newtheorem{theorem}{Theorem}[section]
\newtheorem{lemma}[theorem]{Lemma}
\theoremstyle{definition}
\theoremstyle{remark}
\newcommand{\argmax}{\mathop{\mathrm{argmax}}}
\icmltitlerunning{Focal-SAM: Focal Sharpness-Aware Minimization for Long-Tailed Classification}
\begin{document}

\doparttoc 
\faketableofcontents

% \twocolumn[
% \icmltitle{Submission and Formatting Instructions for \\
           % International Conference on Machine Learning (ICML 2024)}

\twocolumn[
\icmltitle{Focal-SAM: Focal Sharpness-Aware Minimization for Long-Tailed Classification}

% It is OKAY to include author information, even for blind
% submissions: the style file will automatically remove it for you
% unless you've provided the [accepted] option to the icml2024
% package.

% List of affiliations: The first argument should be a (short)
% identifier you will use later to specify author affiliations
% Academic affiliations should list Department, University, City, Region, Country
% Industry affiliations should list Company, City, Region, Country

% You can specify symbols, otherwise they are numbered in order.
% Ideally, you should not use this facility. Affiliations will be numbered
% in order of appearance and this is the preferred way.
\icmlsetsymbol{equal}{*}

\begin{icmlauthorlist}
\icmlauthor{Sicong Li}{iie,scs ucas}
\icmlauthor{Qianqian Xu}{ict}
\icmlauthor{Zhiyong Yang}{scst ucas}
\icmlauthor{Zitai Wang}{ict} \\
\icmlauthor{Linchao Zhang}{cetc} 
\icmlauthor{Xiaochun Cao}{sysu}
\icmlauthor{Qingming Huang}{scst ucas,bdkm,ict}
%\icmlauthor{}{sch}
%\icmlauthor{}{sch}
%\icmlauthor{}{sch}
\end{icmlauthorlist}

\icmlaffiliation{iie}{Institute of Information Engineering, CAS}

\icmlaffiliation{scs ucas}{School of Cyber Security, University of Chinese Academy of Sciences}

\icmlaffiliation{ict}{Key Lab. of Intelligent Information Processing, Institute of Computing Tech., CAS}

\icmlaffiliation{scst ucas}{School of Computer Science and Tech., University of Chinese Academy of Sciences}

\icmlaffiliation{cetc}{Artificial Intelligence Institute of China Electronics Technology Group Corporation,}

\icmlaffiliation{sysu}{School of Cyber Science and Tech., Shenzhen Campus of Sun Yat-sen University}

\icmlaffiliation{bdkm}{BDKM, University of Chinese Academy of Sciences}

\icmlcorrespondingauthor{Qianqian Xu}{xuqianqian@ict.ac.cn}
\icmlcorrespondingauthor{Qingming Huang}{qmhuang@ucas.ac.cn}

% You may provide any keywords that you
% find helpful for describing your paper; these are used to populate
% the "keywords" metadata in the PDF but will not be shown in the document
\icmlkeywords{Machine Learning, ICML}

\vskip 0.3in
]

% this must go after the closing bracket ] following \twocolumn[ ...

% This command actually creates the footnote in the first column
% listing the affiliations and the copyright notice.
% The command takes one argument, which is text to display at the start of the footnote.
% The \icmlEqualContribution command is standard text for equal contribution.
% Remove it (just {}) if you do not need this facility.

\printAffiliationsAndNotice{}  % leave blank if no need to mention equal contribution

\begin{abstract}
    Real-world datasets often follow a long-tailed distribution, making generalization to tail classes difficult. Recent methods resorted to long-tail variants of Sharpness-Aware Minimization (SAM), such as ImbSAM and CC-SAM, to improve generalization by flattening the loss landscape. However, these attempts face a trade-off between computational efficiency and control over the loss landscape. On the one hand, ImbSAM is efficient but offers only coarse control as it excludes head classes from the SAM process. On the other hand,  CC-SAM provides fine-grained control through class-dependent perturbations but at the cost of efficiency due to multiple backpropagations. Seeing this dilemma, we introduce Focal-SAM, which assigns different penalties to class-wise sharpness, achieving fine-grained control without extra backpropagations, thus maintaining efficiency. Furthermore, we theoretically analyze Focal-SAM's generalization ability and derive a sharper generalization bound. Extensive experiments on both traditional and foundation models validate the effectiveness of Focal-SAM.
    %Our theoretical analysis shows improved generalization capabilities, and extensive experiments on both traditional and foundation models validate the effectiveness of Focal-SAM.
\end{abstract}

% Extensions like Imbalanced SAM (ImbSAM), which excludes all head classes from SAM, and Class-Conditional SAM (CC-SAM), which applies SAM with different perturbation radii for each class, have also been introduced.
    % which adjusts the penalty on each class's sharpness based on the number of associated samples. This method efficiently balances the smoothness of the loss landscape across head and tail classes. 

% new method called Focal-SAM. It adjusts the penalty for each class's sharpness based on its size, assigning greater penalties to tail classes.
    % This approach, which we name \textbf{Focal-SAM}, achieves more effective smoothing of the loss landscape for tail classes while preserving the flatness for head classes, ultimately leading to improved overall performance. We further provide a theoretical analysis of Focal-SAM's generalization ability, yielding a sharper bound. Experiments validate Focal-SAM's effectiveness, demonstrating competitive performance on various popular long-tailed datasets.

    % attempts to tackle this with a coarse-grained approach, simply excluding head classes from SAM.

\section{Introduction}

% In the past decades, deep learning has achieved remarkable success in various fields, including image classification~\cite{he2015deep, DBLP:journals/corr/SimonyanZ14a, DBLP:conf/nips/KrizhevskySH12}, medical image processing~\cite{ronneberger2015u, DBLP:journals/corr/HavaeiDWBCBPJL15, milletari2016v} and object detection~\cite{DBLP:journals/corr/RenHG015, DBLP:journals/corr/RedmonDGF15, DBLP:journals/corr/LiuAESR15}. However, this success often relies on carefully processed balanced datasets. In real-world scenarios, data often exhibits a "long-tailed" distribution~\cite{zhang2023deep}, where the number of samples per category varies significantly, with a few categories having an abundance of samples and most having only a small number. Long-tailed learning focuses on effectively training models on such imbalanced datasets.

In the past decades, deep learning has achieved remarkable success in various fields, including image classification~\cite{he2015deep}, medical image processing~\cite{ronneberger2015u}, and object detection~\cite{DBLP:journals/corr/RenHG015}. However, this success often relies on carefully curated, balanced datasets. In real-world scenarios, data often exhibits a \textit{long-tailed} distribution, where a few categories have abundant samples while most categories contain only a small number of examples. Long-tailed learning focuses on effectively training models on such imbalanced datasets~\cite{zhang2023deep, DBLP:journals/corr/abs-2408-00483}. Numerous approaches have been proposed to address this challenge, including re-sampling~\cite{Buda_2018}, re-balancing~\cite{Cui2019ClassBalancedLB, Ren2020balms, wang2023ddc}, representation learning~\cite{zhu2022balanced, cui2023generalized}, ensemble learning~\cite{wang2021longtailed, zhang2022self}, and fine-tuning foundation models~\cite{DBLP:conf/iclr/DongZYZ23,DBLP:conf/icml/Shi00SH024}.

Recently, \citet{rangwani2022escaping} visualized the loss landscape of different classes and observed that tail classes often suffer from saddle points. Since the loss landscape is closely related to the generalization of modern neural networks \cite{keskar2017largebatch, DBLP:conf/iclr/JiangNMKB20}, they apply Sharpness-Aware Minimization (SAM)~\cite{foret2021sharpnessaware} to help tail classes escape from saddle points. Later, since the original SAM operates on all classes, ImbSAM~\cite{zhou2023imbsam} excludes the head classes to better focus on flattening the landscape of the tail classes. However, when combined with popular re-balancing methods~\cite{cao2019learning, kini2021label, DBLP:conf/iclr/MenonJRJVK21}, this coarse-grained approach often overemphasizes the tail classes, leading to poor head and overall performance. To achieve fine-grained control, CC-SAM~\cite{zhou2023class} uses class-dependent perturbation. However, the per-class perturbation requires at least $C$ additional backpropagations, where $C$ denotes the number of classes, making it rather computationally expensive. This raises a natural question: \textit{Can we design a method that achieves both fine-grained control and computational efficiency?}

\begin{figure*}[t]
    \centering

    \includegraphics[width=0.9\textwidth]{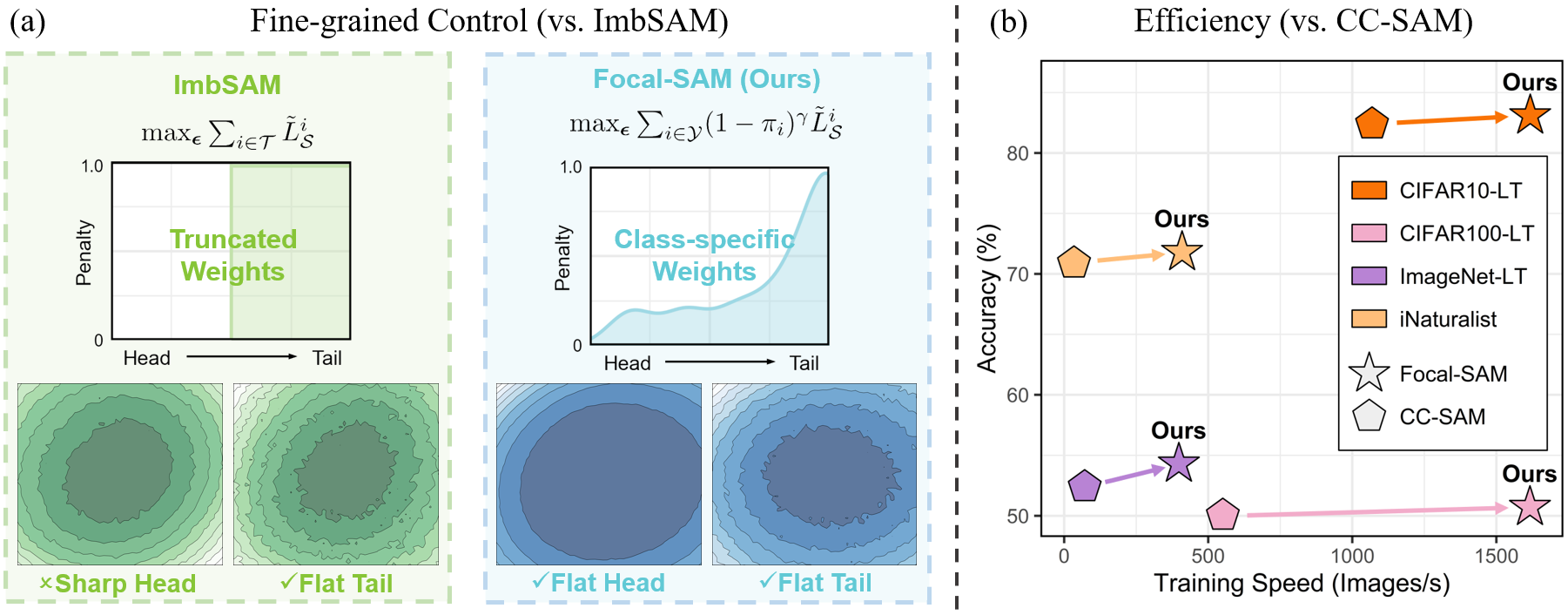}
    
    \caption{(a) ImbSAM applies the sharpness penalty only to tail classes, leading to a sharp loss landscape for head classes. In contrast, Focal-SAM assigns class-specific weights to the sharpness penalty, resulting in smooth loss landscapes for both head and tail classes. (b) Focal-SAM replaces per-class perturbations in CC-SAM with class-specific sharpness penalties, significantly enhancing computational efficiency while achieving better performance.}
    \label{fig: key idea}
    
\end{figure*}

Targeting this goal, we integrate the focal mechanism~\cite{DBLP:conf/iccv/LinGGHD17} with SAM, inducing a novel approach named Focal-SAM. Specifically, we introduce the focal sharpness term, which is defined as the weighted sum of class-wise sharpness, where the weights decrease in a focal-like manner from head to tail classes. On the one hand, Focal-SAM controls the flatness of different classes in a fine-grained way, better balancing the performance between head and tail classes than ImbSAM, as shown in Fig.\ref{fig: key idea}(a). On the other hand, Focal-SAM replaces the per-class perturbations in CC-SAM with per-class sharpness penalties, making it much more efficient than CC-SAM, as illustrated in Fig.\ref{fig: key idea}(b). Furthermore, we provide an informative generalization bound based on the PAC-Bayesian theory. This bound not only decreases at a faster rate than those of SAM and CC-SAM ($\tilde{\mathcal{O}}(1 / n)$ \textit{vs.} $\mathcal{O}(1 / \sqrt{n})$, where $n$ is the number of training samples) but also demonstrates the influence of the hyperparameters and trace of the Hessian.

Finally, we conduct extensive experiments on various benchmark datasets to validate the effectiveness of Focal-SAM, including training ResNet models from scratch and fine-tuning the foundation model CLIP~\cite{DBLP:conf/icml/RadfordKHRGASAM21}. The results show that Focal-SAM consistently outperforms other SAM-based methods across multiple datasets and models in long-tailed recognition tasks. Prior arts \cite{DBLP:conf/cvpr/ZhouYL022, DBLP:conf/iccv/KhattakWNK0K23, DBLP:conf/cvpr/ParkKK24} have demonstrated that fine-tuning CLIP often performs well on the target domain but struggles with domain shifts. Therefore, we also assess model performance on OOD test sets when fine-tuning foundation models, referred to as long-tailed domain generalization tasks. The results indicate that Focal-SAM improves performance by approximately 0.5\%$\sim$4.3\% when combined with baselines on OOD test sets. These further suggest that Focal-SAM can enhance generalization, leading to better performance under domain shifts.

In summary, our key contributions are as follows:
\begin{itemize}[leftmargin=*]
    \item Systematic studies illustrate the limitations of ImbSAM and CC-SAM. ImbSAM fails to flatten the loss landscape for head classes, while CC-SAM is highly computationally expensive.

    \item We propose Focal-SAM, a simple yet effective method that provides fine-grained control of loss landscape and maintains computational efficiency. Theoretical analysis further offers a sharp generalization bound of Focal-SAM.
    
    \item Extensive experiments validate the effectiveness of the proposed Focal-SAM, ranging from training ResNet models from scratch to fine-tuning foundation models.
    % We conduct extensive experiments to evaluate Focal-SAM's performance on various long-tailed datasets, including training ResNet models from scratch and fine-tuning foundation models. The results confirm Focal-SAM's strong effectiveness.
    
\end{itemize}

\section{Related Work}

% This section reviews existing research relevant to our work. We first discuss several approaches to long-tailed learning approach, followed by an examination of loss landscape sharpness and Sharpness-Aware Minimization (SAM) with its variants for long-tailed learning.

\subsection{Long-Tailed Learning}

% Long-tailed learning deals with the challenge of imbalanced training datasets. These datasets often have a significant disparity between the number of samples for frequent classes (head classes) and rare classes (tail classes). This imbalance can easily lead models to become biased towards head classes and perform poorly on tail classes. 

Several approaches address long-tailed learning challenges, such as re-sampling~\cite{Buda_2018, wang2019dynamic, liu2022breadcrumbs}, re-balancing~\cite{Cui2019ClassBalancedLB, Ren2020balms, wang2023ddc, DBLP:conf/nips/WangX00CH22, DBLP:conf/nips/HanX0BWJH24, DBLP:conf/icml/HouX0BHH22, DBLP:conf/aaai/LyuX0LH25, DBLP:journals/pami/YangXHBHCH23, DBLP:journals/pami/YangXBWHCH23, DBLP:journals/pami/YangXBCH22, DBLP:conf/icml/ZhaoWWXL0024, DBLP:conf/nips/DaiX0CH23, DBLP:conf/nips/ShaoX0WGH23, DBLP:conf/iclr/0004YL0T0W24}, data augmentation~\cite{kim2020m2m, hong2022safa, ahn2023cuda, DBLP:conf/nips/Wang0WWW0024, DBLP:conf/iclr/WangWXWZWW24}, representation learning~\cite{cui2021parametric, zhu2022balanced, cui2023generalized, peifeng2023feature, DBLP:conf/iclr/ZhangZYWZ0W24}, ensemble learning~\cite{wang2021longtailed, zhang2022self, li2022nested, aimar2023balanced, DBLP:conf/icml/0001XWLHBCH24, DBLP:conf/nips/ZhaoWWWWL0W24}, and fine-tuning foundation models~\cite{DBLP:conf/iclr/DongZYZ23,DBLP:conf/icml/Shi00SH024}. This paper focuses on loss modification, a technique that modifies the loss function to guide the model's attention towards tail classes, consequently improving their performance. Various methods have been proposed, such as LDAM~\cite{cao2019learning}, which enlarges the margin for tail classes to enhance their generalization performance. ~\citet{cao2019learning} further introduce a training scheme called Deferred Re-weighting (DRW) used in conjunction with LDAM to improve model performance. However, ~\citet{DBLP:conf/iclr/MenonJRJVK21} argue that previous loss modification techniques sacrifice consistency in minimizing the balanced error. They propose the LA~\cite{DBLP:conf/iclr/MenonJRJVK21} loss, which introduces adjustments to the standard cross-entropy loss to ensure Fisher consistency for balanced error minimization. Building on this work, the VS~\cite{kini2021label} loss further improves upon the LA loss by incorporating both additive and multiplicative adjustments, beneficial during the initial and terminal phases of training respectively. Most recently, \citet{wang2023ddc} provide a comprehensive generalization analysis of these losses.

In this paper, we leverage these loss functions while aiming to specifically improve their generalization ability for long-tailed classification tasks.

% However, modifying the loss function often leads to a trade-off, where performance on head classes suffer. Recently, more effective methods have been proposed. For example, data augmentation techniques~\cite{kim2020m2m, ahn2023cuda} incorporate additional information during training to improve long-tailed learning performance. Representation learning approaches~\cite{cui2021parametric, zhu2022balanced, cui2023generalized} enable models to learn more effective and robust features. Finally, ensemble learning methods~\cite{wang2021longtailed, zhang2022self, aimar2023balanced} leverage multiple models to enhance overall performance.

% TODO: conduct experiments for GLMC + SAM, and add GLMC to this part 

\subsection{Sharpness of Loss Landscape}

Generalization in deep neural networks has always been a crucial focus in machine learning research. Recent studies~\cite{keskar2017largebatch, DBLP:conf/iclr/JiangNMKB20} have empirically and theoretically demonstrated that flatter minima in the loss landscape typically lead to better generalization. Inspired by this, Sharpness-Aware Minimization (SAM)~\cite{foret2021sharpnessaware} is developed to find flatter minima, achieving superior performance across various tasks.

In the context of long-tailed learning, Rangwani et al.~\cite{rangwani2022escaping} suggest combining SAM with re-balancing techniques to help the model escape saddle points and improve generalization. Imbalanced SAM (ImbSAM)~\cite{zhou2023imbsam} incorporates class priors into SAM by dividing classes into head and tail groups. It applies SAM exclusively to the tail classes while maintaining standard optimization for head classes, aiming to specifically enhance the generalization of tail classes. Class-Conditional SAM (CC-SAM)~\cite{zhou2023class} applies SAM to each class individually, using class-specific perturbation radii. These radii increase from head to tail classes, enabling fine-grained control over the loss landscape for each class.

This work also extends the SAM framework for long-tailed classification. Our method aims to achieve fine-grained control over the loss landscape while maintaining computational efficiency.

\begin{figure*}[ht]
    \centering

    % 第一行子图
    \subfigure[SAM: Head Classes]{
        \includegraphics[width=0.23\textwidth]{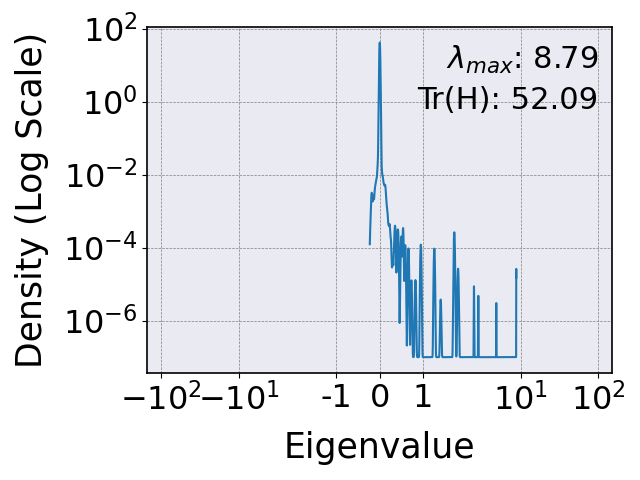}
        \label{fig: sam head}
    }%
    \subfigure[ImbSAM: Head Classes]{
        \includegraphics[width=0.23\textwidth]{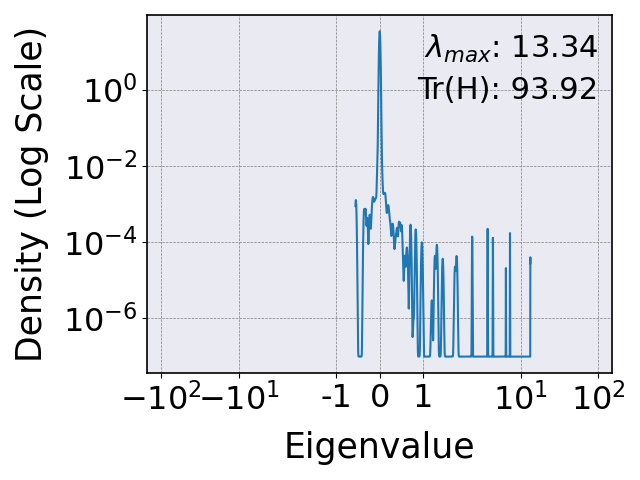}
        \label{fig: imbsam head}
    }%
    \subfigure[CC-SAM: Head Classes]{
        \includegraphics[width=0.23\textwidth]{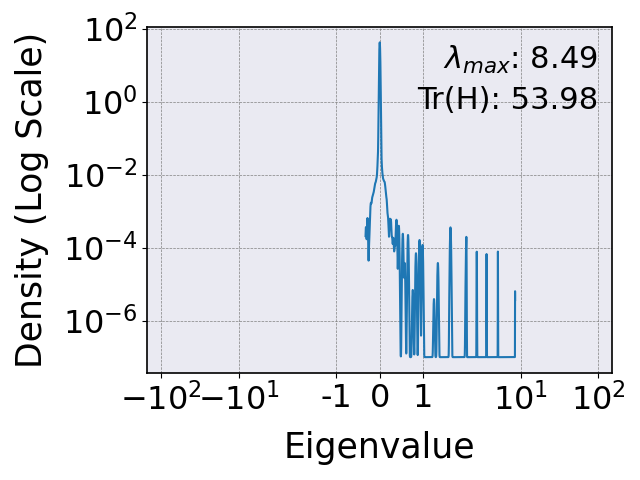}
        \label{fig: cc-sam head}
    }%
    \subfigure[Focal-SAM: Head Classes]{
        \includegraphics[width=0.23\textwidth]{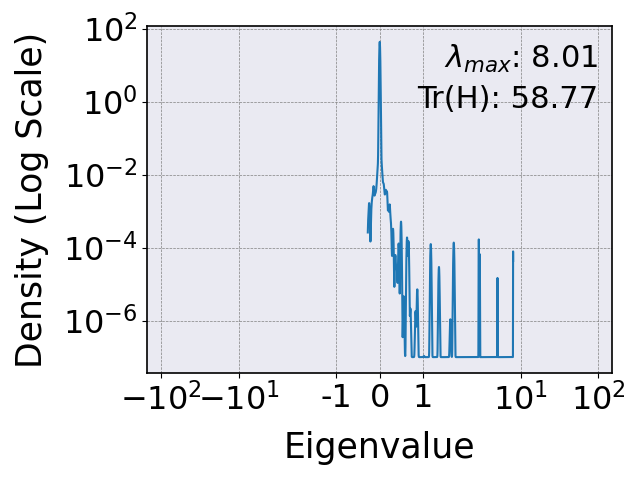}
        \label{fig: fsam head}
    }

    % 第二行子图
    \subfigure[SAM: Tail Classes]{
        \includegraphics[width=0.23\textwidth]{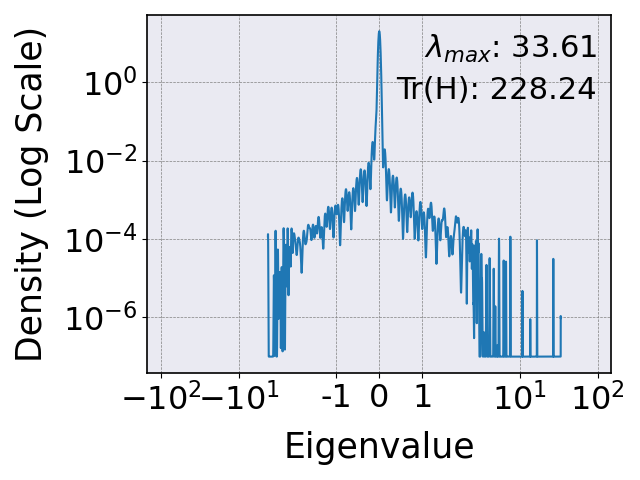}
        \label{fig: sam tail}
    }%
    \subfigure[ImbSAM: Tail Classes]{
        \includegraphics[width=0.23\textwidth]{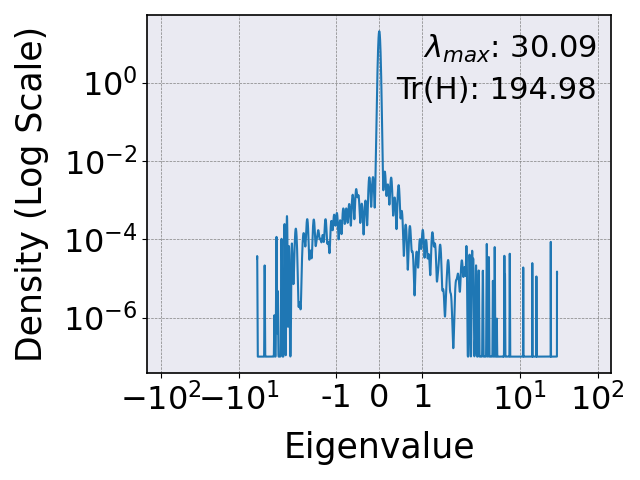}
        \label{fig: imbsam tail}
    }%
    \subfigure[CC-SAM: Tail Classes]{
        \includegraphics[width=0.23\textwidth]{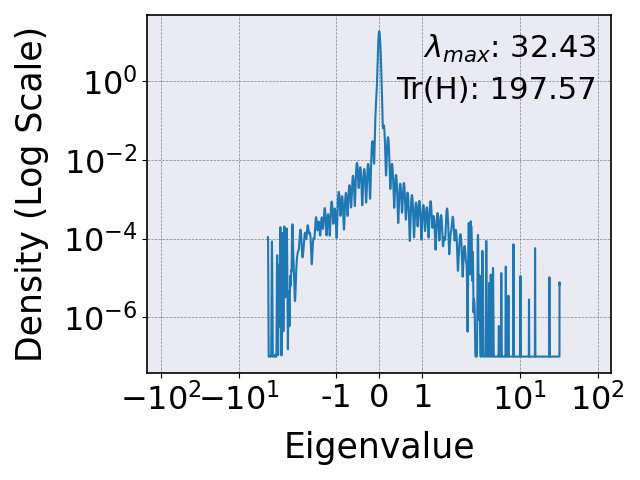}
        \label{fig: cc-sam tail}
    }%
    \subfigure[Focal-SAM: Tail Classes]{
        \includegraphics[width=0.23\textwidth]{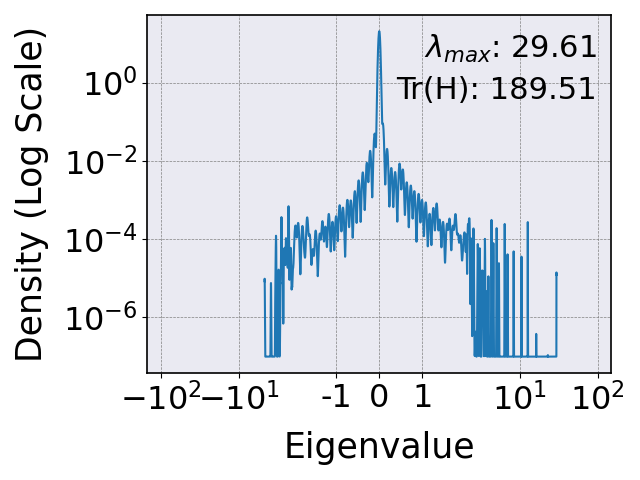}
        \label{fig: fsam tail}
    }

    \caption{Eigen Spectral Density of Hessian for head and tail classes of ResNet models trained with various SAM variants on CIFAR-10 LT using VS loss. A smaller $\lambda_{max}$ and $Tr(H)$ generally indicate a flatter loss landscape.}
    \label{fig: eigen spectral density}

\end{figure*}

\begin{table*}[ht]
  \centering
  \caption{Average training time per epoch (in seconds) for different SAM variants across four long-tailed datasets using ResNet models. For CC-SAM, we follow its protocol by perturbing only the last few layers to improve its efficiency.}
    \begin{tabular}{l|rrrr}
    \toprule
    \textbf{Methods} & \textbf{CIFAR-10 LT} & \textbf{CIFAR-100 LT} & \textbf{ImageNet-LT} & \textbf{iNaturalist} \\
    \midrule
    SAM   & 5.66s (1.00$\times$) & 4.81s (1.00$\times$)  & 170.04s (1.00$\times$)  & 831.67s (1.00$\times$)  \\
    ImbSAM & 7.80s (1.37$\times$) & 6.68s (1.39$\times$)  & 293.11s (1.72$\times$)  & 1088.61s (1.31$\times$)  \\
    CC-SAM & 11.61s (2.05$\times$) & 19.70s (4.10$\times$)  & 1626.54s (9.57$\times$)  & 12869.89s (15.47$\times$)  \\
    \textbf{Focal-SAM (Ours)} & 7.67s (1.36$\times$) & 6.71s (1.40$\times$)  & 291.05s (1.71$\times$)  & 1068.92s (1.29$\times$)  \\
    \bottomrule
    \end{tabular}%
  \label{tab: average training time per epoch}%
\end{table*}%

\section{Motivation}    \label{sec: motivation}

\subsection{Problem Setup}

We define the sample space as $\mathcal{X}$ and the label space as $\mathcal{Y} = \{1, 2, \cdots, C\}$. In the long-tailed recognition task, the training set follows an imbalanced distribution $\mathcal{D}$ and consists of data pairs denoted as $\mathcal{S} = \{(\bm{x}_i, y_i)\}_{i=1}^n$, where $y_i \in \mathcal{Y}$ is the label for sample $\bm{x}_i \in \mathcal{X}$, and $n$ is the total number of training samples. Let $\mathcal{D}_{bal}$ denote the uniform test distribution. Following prior work~\cite{cao2019learning, DBLP:conf/cvpr/HongHCSKC21}, given a class $y$, $\mathcal{D}$ and $\mathcal{D}_{bal}$ share the same class-conditional distribution, denoted as $\mathcal{D}_{y} \triangleq P(\bm{x} | y)$. We use $n_y$ to represent the number of samples in the $y$-th class and $\pi_y = n_y / n$ to denote the ratio of the $y$-th class in the training set. Without loss of generality, we assume $n_1 \ge n_2 \ge \cdots \ge n_C$, with $n_1 \gg n_C$. 

The model parameters are denoted by $\bm{w}$, with a total of $k$ parameters. The loss for sample $(\bm{x}, y)$ is defined as $\ell(\bm{w}; \bm{x}, y)$. The training loss over dataset $\mathcal{S}$ is given by $L_S(\bm{w}) \triangleq \frac{1}{n} \sum_{i=1}^n \ell(\bm{w}; \bm{x}_i, y_i)$. Similarly, the loss specifically for samples from the $y$-th class within $\mathcal{S}$ is defined as $L_S^y(\bm{w}) \triangleq \frac{1}{n} \sum_{y_i = y} \ell(\bm{w}; \bm{x}_i, y_i)$. We further define the expected loss over $\mathcal{D}$, $\mathcal{D}_{bal}$ and $\mathcal{D}_{y}$ as $L_{\mathcal{D}}(\bm{w}) \triangleq \mathbb{E}_{(\bm{x}, y) \sim \mathcal{D}}[\ell(\bm{w}; \bm{x}, y)]$, $L_{\mathcal{D}_{bal}}(\bm{w}) \triangleq \mathbb{E}_{(\bm{x}, y) \sim \mathcal{D}_{bal}}[\ell(\bm{w}; \bm{x}, y)]$ and $L_{\mathcal{D}_{y}}(\bm{w}) \triangleq \mathbb{E}_{\bm{x} \sim \mathcal{D}_{y}}[\ell(\bm{w}; \bm{x}, y)]$, respectively. Our goal is to optimize parameters $\bm{w}$ on dataset $\mathcal{S}$ such that $L_{\mathcal{D}_{bal}}(\bm{w})$ is minimized, leading to good performance on the balanced test set.

\subsection{Limitations in ImbSAM and CC-SAM}

\textbf{ImbSAM.} ImbSAM divides classes into head and tail groups, denoted as $\mathcal{H}$ and $\mathcal{T}$. It applies SAM only to the tail group to focus on flattening loss landscape for these classes. Its objective function is:
\begin{equation}    \label{eq: loss of imbsam}
        L_{S}^{IS}(\bm{w}) \triangleq L_{S}^{\mathcal{H}}(\bm{w}) + \max_{\Vert \bm{\epsilon} \Vert_2 \le \rho} L_{S}^{\mathcal{T}}(\bm{w} + \bm{\epsilon})
\end{equation}
From Eq.\eqref{eq: loss of imbsam}, ImbSAM excludes all head classes from SAM. As a result, \textbf{the loss landscape for head classes becomes sharper}, which may reduce their generalization performance. To validate this, we analyze the spectral density of the Hessian $H$~\cite{pmlr-v97-ghorbani19b}, a common measure for the flatness of the loss landscape. We also consider two key metrics: the largest eigenvalue $\lambda_{max}$ and the trace $Tr(H)$. Higher values of $\lambda_{max}$ and $Tr(H)$ generally indicate a sharper loss landscape. Following prior work~\cite{rangwani2022escaping}, we compute the eigen spectral density of the Hessian for head and tail classes on the CIFAR-10 LT dataset using the VS loss function. The results are shown in Fig.\ref{fig: eigen spectral density}.

A comparison between Fig.\ref{fig: sam tail} and Fig.\ref{fig: imbsam tail} reveals that ImbSAM effectively reduces $Tr(H)$ and $\lambda_{max}$ for the tail classes, suggesting a flatter loss landscape. However, when comparing Fig.\ref{fig: sam head} and Fig.\ref{fig: imbsam head}, we observe that with ImbSAM, the values of $Tr(H)$ and $\lambda_{max}$ for head classes are significantly higher. This indicates that ImbSAM's exclusion of head classes from SAM sharpens their loss landscape, potentially degrading their generalization performance.

% \begin{table*}[ht]
%   \centering
%   \caption{Average training time per epoch (in seconds) for different SAM variants across four long-tailed datasets. For CC-SAM, we follow its protocol by perturbing only the last few layers to improve its efficiency.}
%     \begin{tabular}{l|rrrr}
%     \toprule
%     \textbf{Methods} & \textbf{CIFAR10-LT} & \textbf{CIFAR100-LT} & \textbf{ImageNet-LT} & \textbf{iNaturalist} \\
%     \midrule
%     SAM   & 5.66s (1.00$\times$) & 4.81s (1.00$\times$)  & 170.04s (1.00$\times$)  & 831.67s (1.00$\times$)  \\
%     ImbSAM & 7.80s (1.37$\times$) & 6.68s (1.39$\times$)  & 293.11s (1.72$\times$)  & 1088.61s (1.31$\times$)  \\
%     CC-SAM & 11.61s (2.05$\times$) & 19.70s (4.10$\times$)  & 16233.03s (95.50$\times$)  & 65714.18s (79.00$\times$)  \\
%     Focal-SAM & 7.67s (1.36$\times$) & 6.71s (1.40$\times$)  & 291.05s (1.71$\times$)  & 1068.92s (1.29$\times$)  \\
%     \bottomrule
%     \end{tabular}%
%   \label{tab: average training time per epoch}%
% \end{table*}%

\textbf{CC-SAM.} CC-SAM applies SAM to each class individually, using class-specific perturbation radii. The objective function is defined as:
\begin{equation}    \label{eq: loss of cc-sam}
    L_S^{CS}(\bm{w}) \triangleq \sum_{i=1}^C \max_{\Vert \bm{\epsilon} \Vert_2 \le \rho_i^{*}} \frac{1}{\pi_i} \cdot L_S^i(\bm{w} + \bm{\epsilon}) 
\end{equation}
% Here, $\rho_i^*$ is the perturbation radius for class $i$, which is inversely related to $\pi_i$. This design helps finely balance the flatness of the loss landscape between head and tail classes.
The optimal perturbation $\hat{\bm{\epsilon}}_i(\bm{w})$ for each class $i$ is also class-wise and estimated as $\rho_i^* \nabla_{\bm{w}} L_{S}^{i}(\bm{w}) / \Vert \nabla_{\bm{w}} L_{S}^{i}(\bm{w}) \Vert_2$. The model parameters are updated with the learning rate $\eta$ as:
\begin{equation}    \label{eq: parameter update for cc-sam}
    \bm{w} \leftarrow \bm{w} - \eta \sum_{i=1}^C \frac{1}{\pi_i} \cdot \nabla_{\bm{w}} L_S^i(\bm{w})|_{\bm{w} + \hat{\bm{\epsilon}}_i(\bm{w})}
\end{equation}
This fine-grained method flattens the loss landscape of head and tail classes more effectively, as shown in Fig.\ref{fig: cc-sam head} and Fig.\ref{fig: cc-sam tail}. However, \textbf{CC-SAM is much more computationally demanding than SAM}. According to Eq.\eqref{eq: parameter update for cc-sam}, per parameters update requires computing the gradient for each class $i$'s loss at $\bm{w} + \hat{\bm{\epsilon}}_i(\bm{w})$, \textit{i.e.}, $\nabla_{\bm{w}} L_S^i(\bm{w})|_{\bm{w} + \hat{\bm{\epsilon}}_i(\bm{w})}$. Therefore, CC-SAM requires at least $C$ backpropagations per update, whereas SAM only needs two. Thus, CC-SAM has a much higher computational cost than SAM. For details on the backpropagation requirements for SAM and ImbSAM, please see App.\ref{appendix sec: backpropagation requirements for SAM and ImbSAM}.

To confirm this, we measure the average training time per epoch for various SAM variants across four datasets using ResNet models. For CC-SAM, we follow its protocol by perturbing only the last few layers to enhance efficiency. As shown in Tab.\ref{tab: average training time per epoch}, despite perturbing fewer parameters, CC-SAM takes about 2$\sim$15$\times$ more time than SAM, depending on the dataset. The training time ratio of CC-SAM to SAM grows with the number of classes in the batch. These high computational costs make CC-SAM particularly impractical for large-scale datasets or fine-tuning foundation models.

% \begin{table*}[ht]
%   \centering
%   \caption{Average training time per epoch (in seconds) for different SAM variants across four long-tailed datasets.}
%     \begin{tabular}{l|rrrr}
%     \toprule
%     \textbf{Methods} & \textbf{CIFAR10-LT} & \textbf{CIFAR100-LT} & \textbf{ImageNet-LT} & \textbf{iNaturalist} \\
%     \midrule
%     SAM   & 5.66s (1.00$\times$) & 4.81s (1.00$\times$)  & 170.04s (1.00$\times$)  & 831.67s (1.00$\times$)  \\
%     ImbSAM & 7.80s (1.37$\times$) & 6.68s (1.39$\times$)  & 293.11s (1.72$\times$)  & 1088.61s (1.31$\times$)  \\
%     CC-SAM & 61.50s (10.87$\times$) & 213.32s (44.35$\times$)  & 16233.03s (95.50$\times$)  & 65714.18s (79.00$\times$)  \\
%     Focal-SAM & 7.67s (1.36$\times$) & 6.71s (1.40$\times$)  & 291.05s (1.71$\times$)  & 1068.92s (1.29$\times$)  \\
%     \bottomrule
%     \end{tabular}%
%   \label{tab: average training time per epoch}%
% \end{table*}%

\begin{figure*}[h]
    \centering

    % 第一列
    \subfigure[CIFAR-10 LT]{
        \includegraphics[width=0.23\textwidth]{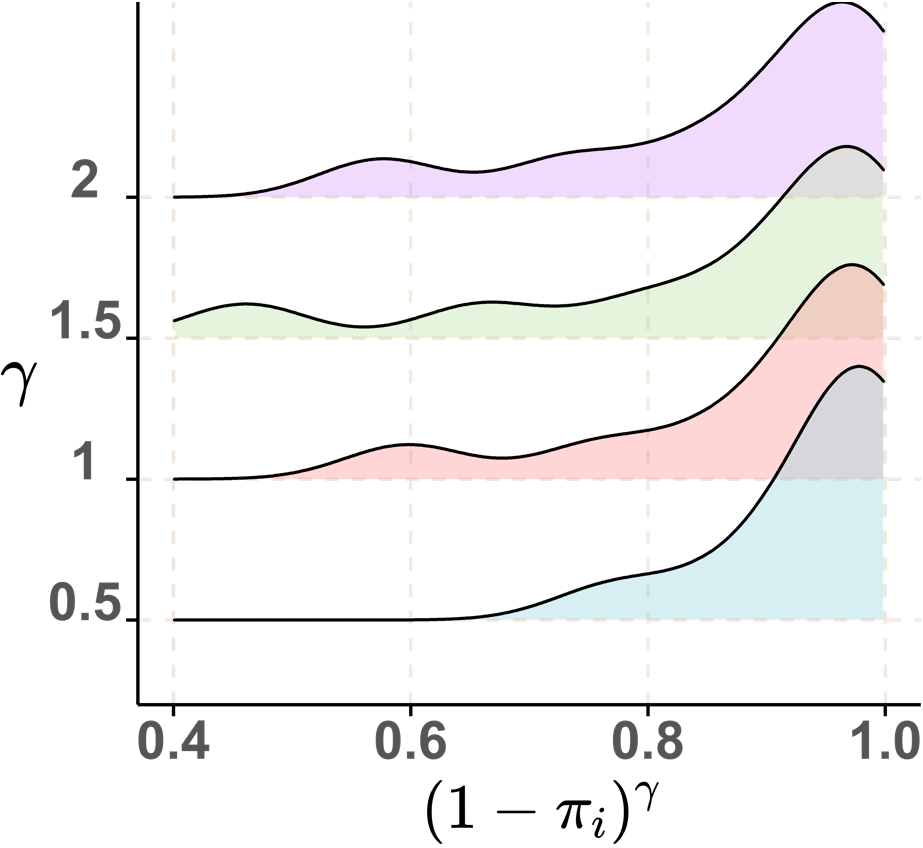}
    }%
    % 第二列
    \subfigure[CIFAR-100 LT]{
        \includegraphics[width=0.23\textwidth]{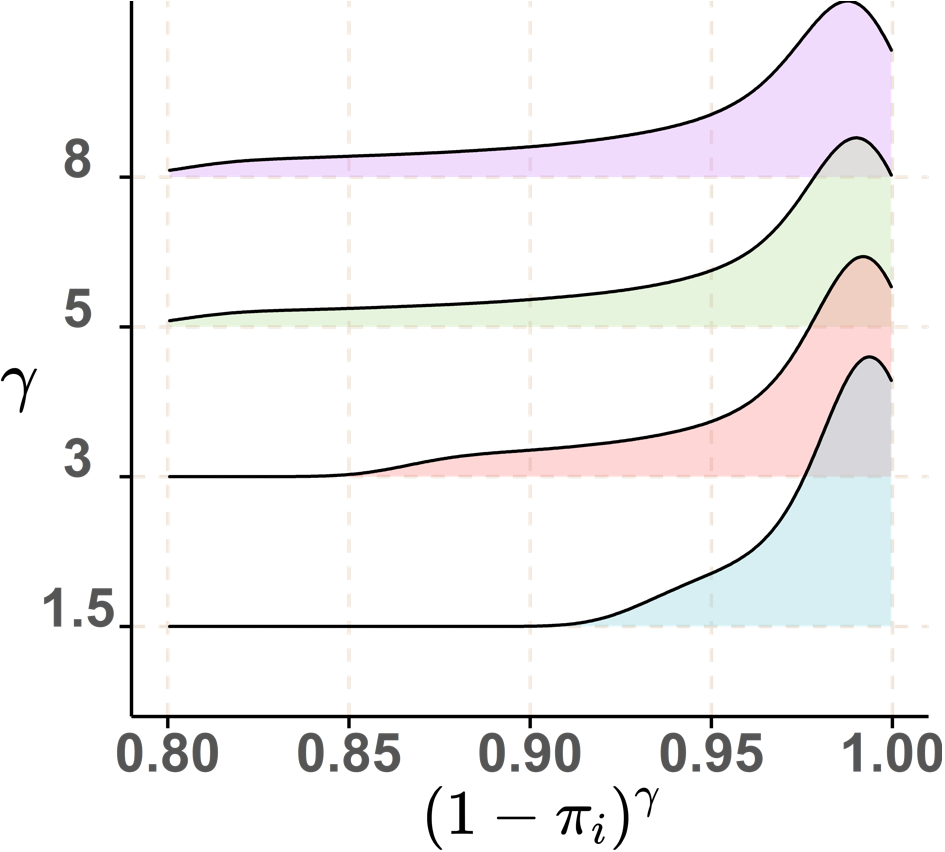}
    }%
    % 第三列
    \subfigure[ImageNet-LT]{
        \includegraphics[width=0.23\textwidth]{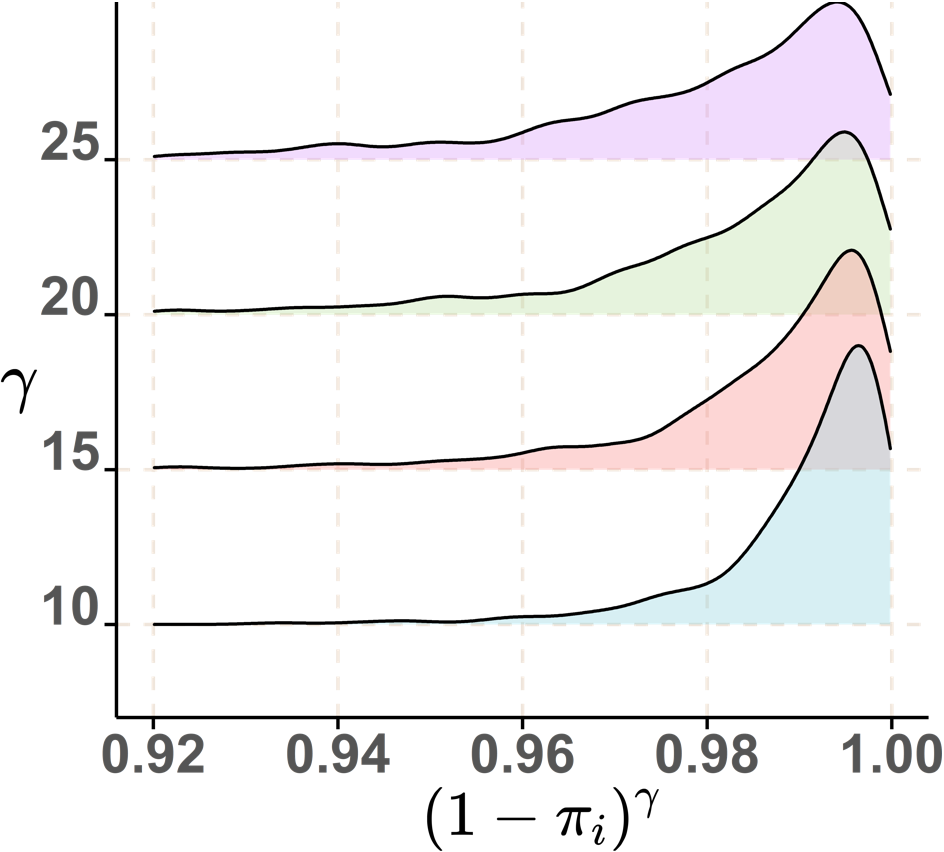}
    }%
    % 第四列
    \subfigure[iNaturalist]{
        \includegraphics[width=0.23\textwidth]{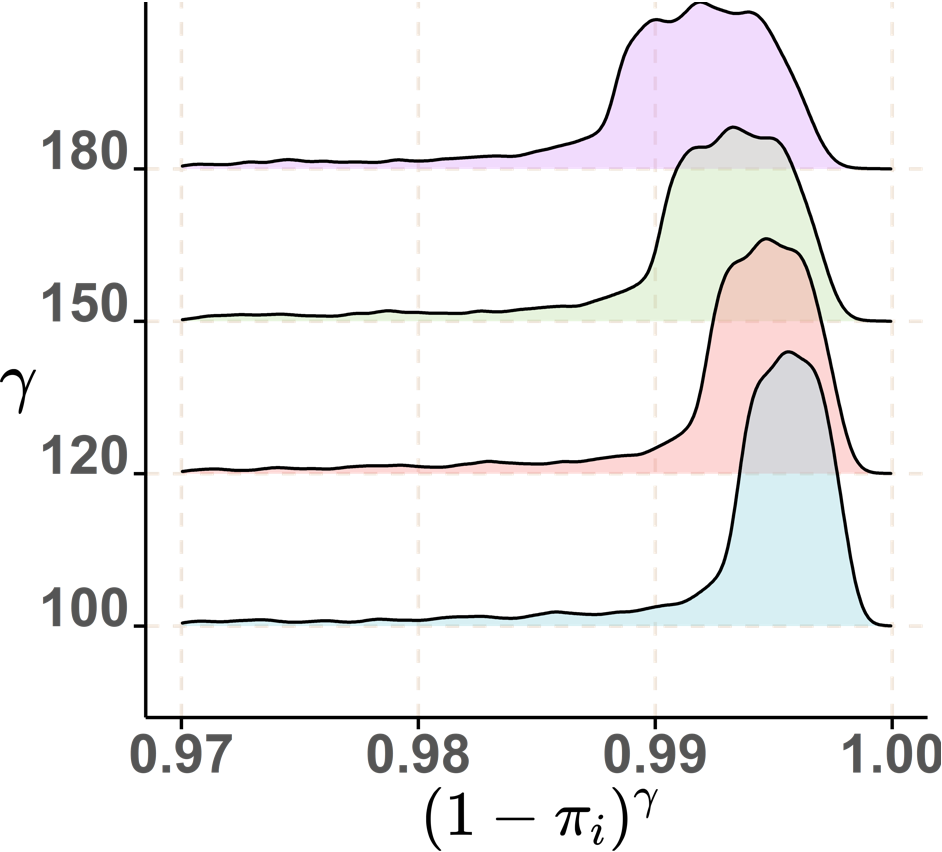}
    }
    
    \caption{The probability density distributions of $(1 - \pi_i)^\gamma$ for various $\gamma$ values on CIFAR-10 LT, CIFAR-100 LT, ImageNet-LT, and iNaturalist.}
    \label{fig: variation of (1 - pi_i)^gamma}

\end{figure*}

\section{Methodology}

\subsection{Focal Sharpness-Aware Minimization} \label{subsec: FSAM}

Motivated by the analysis in Sec.\ref{sec: motivation}, we develop a new method called Focal-SAM. This approach achieves fine-grained control over the flatness between head and tail classes while maintaining computational efficiency, as shown in Tab.\ref{tab: average training time per epoch} and Fig.\ref{fig: eigen spectral density}.  

To this end, we first introduce the concept of \textbf{class-wise sharpness}, defined as the loss difference between the original model parameters $\bm{w}$ and the perturbed ones, to quantify the sharpness of loss landscapes across different classes:
\begin{equation}
    \tilde{L}_S^i(\bm{w}, \bm{\epsilon}) \triangleq L_S^i(\bm{w} + \bm{\epsilon}) - L_S^i(\bm{w}), i \in \mathcal{Y}.
\end{equation}
Next, we propose a new sharpness term called \textbf{focal sharpness}:
\begin{equation}        \label{eq: sharpness of Focal-SAM}
        \tilde{L}_{S}^{FS}(\bm{w}) 
        = \max_{\Vert \bm{\epsilon} \Vert_2 \le \rho} \sum_{i=1}^C (1 - \pi_i)^\gamma \tilde{L}_S^i(\bm{w}, \bm{\epsilon}),
\end{equation}
where $(1 - \pi_i)^\gamma$ is the focal weight that provides fine-grained control over class-wise sharpness, and $\gamma$ is a tunable hyperparameter. When $\gamma$ increases, the distribution of focal weight $(1 - \pi_i)^\gamma$ will skew more to tail classes. Fig.\ref{fig: variation of (1 - pi_i)^gamma} illustrates how the probability density distributions of $(1 - \pi_i)^\gamma$ varies with respect to $\gamma$ on various long-tailed datasets. 

%Benefiting from this focal sharpness term, the model can achieve a fine-grained balance in the flatness between head and tail classes.
 
% hyperparameter controlling the magnitude of weight differences assigned to different class-wise sharpness terms. Now, the minority classes exhibit smaller ratio $\pi_i$, resulting in assigning more weight to their class-wise sharpness. Fig.\ref{fig: variation of (1 - pi_i)^gamma} illustrates how the distribution of $(1 - \pi_i)^\gamma$ varies with respect to $\gamma$ for CIFAR-10 LT, CIFAR-100 LT, ImageNet-LT and iNaturalist.

Then, the objective of Focal-SAM is defined by the combination of the training loss and the focal sharpness term:
\begin{equation}
    L_{S}^{FS}(\bm{w}) = L_S(\bm{w}) + \lambda \cdot \tilde{L}_S^{FS}(\bm{w}),
\end{equation}
where $\lambda$ is a hyperparameter controlling the importance of focal sharpness. This formulation highlights how Focal-SAM overcomes ImbSAM's limitations. When $\gamma = 0$ and $\lambda = 1$, Eq.\eqref{eq: sharpness of Focal-SAM} penalizes the sharpness of each class equally, reverting to standard SAM. Conversely, when $\gamma$ is sufficiently large, focal weights for head classes rapidly approach $0$, while the weights for tail classes remain relatively large. In this scenario, Focal-SAM approximates ImbSAM. Typically, we select a moderate $\gamma$, such that the focal weights increase smoothly from head to tail classes. This fine-grained control over loss landscape improves the flatness of tail classes while maintaining that of head classes, ultimately enhancing generalization for both traditional and foundation models.

\subsection{Optimizing the Focal-SAM Objective Function}

% The standard SAM can be viewed as a special case of our FSAM when $\gamma = 0$ and $\lambda = 1$. In this scenario, SAM assigns equal penalties to the sharpness of each class, consequently failing to flatten the loss landscape for tail classes. ImbSAM categorizes classes into head and tail groups, applying SAM only to the tail group. This approach can significantly sharpen the loss landscape for head classes. What's more, it overlooks the differences of class size within each group. In contrast, FSAM employs a finer-grained approach. It adjusts the penalty magnitude for each class's sharpness based on its individual size. This allows FSAM to effectively flatten the loss landscape for tail classes while simultaneously maintaining the flatness of the landscape for head classes.

% Our method generalizes to standard SAM when $\gamma = 0$ and $\lambda = 1$, where class imbalance is not considered. ImbSAM can be seen as a simplified version of our approach. It categorizes classes into just two groups: head and tail classes. This overlooks the differences in sample count within each group. In contrast, our Focal SAM (FSAM) utilizes a finer-grained handling approach. It considers the variations in the number of samples per class and integrates this information into the SAM framework.

In this section, we discuss how to optimize the Focal-SAM objective $L_{S}^{FS}(\bm{w})$. Let $L^{\gamma}_S(\bm{w}) \triangleq \sum_{i=1}^C (1 - \pi_i)^\gamma L_S^i(\bm{w})$. Using a first-order Taylor expansion, we approximate the solution of the inner maximization problem for $\tilde{L}_S^{FS}(\bm{w})$:
\begin{equation}        \label{eq: perturbation computation}
    % \resizebox{1.03\linewidth}{!}{$
    \begin{aligned}
        \hat{\bm{\epsilon}}(\bm{w})
        % & \approx \argmax_{\Vert \bm{\epsilon} \Vert_2 \le \rho} \bm{\epsilon}^T \sum_{i=1}^C (1 - \pi_i)^\gamma \nabla_{\bm{w}} L_S^i(\bm{w})      \\
        \approx \argmax_{\Vert \bm{\epsilon} \Vert_2 \le \rho} \bm{\epsilon}^T \nabla_{\bm{w}} L_S^\gamma(\bm{w}) 
        = \rho \frac{\nabla_{\bm{w}} L^{\gamma}_S(\bm{w})}{\Vert \nabla_{\bm{w}} L^{\gamma}_S(\bm{w}) \Vert_2}
    \end{aligned}
    % $}
\end{equation}
Then, we can substitute $\bm{\epsilon}$ and compute the gradients of $L_S^{FS}(\bm{w})$ to solve the outer minimization problem:
\begin{equation}        \label{eq: final gradient of focal-sam}
    \resizebox{1.0\linewidth}{!}{$
    \begin{aligned}
        \nabla_{\bm{w}} L_S^{FS}(\bm{w}) & \approx \nabla_{\bm{w}} \big( L_S(\bm{w}) + \lambda [L^{\gamma}_S(\bm{w} + \hat{\bm{\epsilon}}(\bm{w})) - L^{\gamma}_S(\bm{w})] \big)  \\ 
        & \approx \nabla_{\bm{w}} \big( L_S(\bm{w}) - \lambda L^{\gamma}_S(\bm{w}) \big) \big|_{\bm{w}} + \lambda \nabla_{\bm{w}} L^{\gamma}_S(\bm{w}) \big|_{\bm{w} + \hat{\bm{\epsilon}}(\bm{w})}
    \end{aligned}
    $}
\end{equation}
From Eq.\eqref{eq: perturbation computation} and Eq.\eqref{eq: final gradient of focal-sam}, computing $\nabla_{\bm{w}} L_S^{FS}(\bm{w})$ to update model parameters requires only three backpropagations: one for $\nabla_{\bm{w}} L^{\gamma}_S(\bm{w})$, one for $\nabla_{\bm{w}} (L_S(\bm{w}) - \lambda L^{\gamma}_S(\bm{w}) ) |_{\bm{w}}$, and one for $\nabla_{\bm{w}} L^{\gamma}_S(\bm{w}) |_{\bm{w} + \hat{\bm{\epsilon}}(\bm{w})}$. Therefore, Focal-SAM is more computationally efficient than CC-SAM, making it more suitable for large-scale datasets or fine-tuning foundation models.

Overall, Alg.\ref{alg: FSAM} gives the pseudo-code to optimize the Focal-SAM objective, using SGD as the base optimizer.

\begin{algorithm}[ht]
    \caption{Focal-SAM algorithm}
    \label{alg: FSAM}
    
    \begin{algorithmic}[1]
        \REQUIRE Training set $S$, perturbation radius $\rho$, hyperparameter $\lambda$, $\gamma$, learning rate $\eta$

        \ENSURE Model trained with Focal-SAM         
        
        \STATE Initialize weights $\bm{w}_0$, $t = 0$;  

        \WHILE{\textit{not converged}}
            \STATE Sample batch $B = \{(\bm{x}_1, y_1), \cdots, (\bm{x}_b, y_b)\}$;
            \STATE Compute $L^{\gamma}_B(\bm{w})$;
            \STATE Compute $\nabla_{\bm{w}}L^{\gamma}_B(\bm{w})$ and $\hat{\bm{\epsilon}}(w)$ according to Eq.\eqref{eq: perturbation computation};
            \STATE Perturb $\bm{w}$ with $\hat{\bm{\epsilon}}(\bm{w})$, and compute gradient $\bm{g}_1 = \nabla_{\bm{w}} L^{\gamma}_B(\bm{w}) |_{\bm{w} + \hat{\bm{\epsilon}}(\bm{w})}$;
            \STATE Compute gradient $\bm{g}_2 = \nabla_{\bm{w}} [L_B(\bm{w}) - \lambda \cdot L^{\gamma}_B(\bm{w})] |_{\bm{w}}$;

            \STATE Update weights: $\bm{w}_{t+1} = \bm{w}_{t} - \eta(\lambda \bm{g}_1 + \bm{g}_2)$;

            \STATE $t = t + 1$;
            
        \ENDWHILE
        
    \end{algorithmic}
\end{algorithm}

\subsection{Generalization Ability of Focal-SAM}

Previous works have established the generalization bound for SAM~\cite{foret2021sharpnessaware} and CC-SAM~\cite{zhou2023class}. However, these bounds are relatively loose (with an order of $1 / \sqrt{n}$) and could bias the training process. For example, the perturbation radius of CC-SAM (\textit{i.e.}, $\rho_i$ in Eq.\eqref{eq: loss of cc-sam}) is set as the solution to minimizing its PAC-Bayesian bound. Since the generalization is not sharp enough, the estimated perturbation radius $\rho_i^*$ could deviate from the optimal one, thus leading to inferior performance. In this section, we develop a sharper generalization bound with an order of $1 / n$ for Focal-SAM.

% In CC-SAM,  in Eq.\eqref{eq: loss of cc-sam} is estimated to achieve the tightest PAC-Bayesian bound. 
% This looseness may cause the estimated perturbation radius $\rho_i^*$ to deviate significantly from the empirical optimal range, potentially leading to inferior performance.  

We assume the loss function $\ell(\bm{w}; \bm{x}, y)$ has an upper bound of $B$, which is a common and practical assumption. Then, we derive the following generalization bound based on the PAC-Bayesian theorem proposed in~\cite{tolstikhin2013pac}. For conciseness, we present an informal formulation in the main content, leaving the formal one and the corresponding proof in App.\ref{appendix sec: generalization bound}.

\begin{theorem}[\textbf{Informal}] \label{thm: generalization bound in O(1 / n)}
    Assume that $\forall (\bm{x}, y) \in \mathcal{D}, 0 \le \ell(\bm{w}; \bm{x}, y) \le B$. For any $\rho > 0$, any uniform distribution $\mathcal{D}_{bal}$ and any distribution $\mathcal{D}$, with high probability over the choice of the training set $S \sim \mathcal{D}$,
    \begin{equation}        \label{eq: generalization bound}
        \begin{aligned}
            L_{\mathcal{D}_{bal}}(\bm{w}) 
            & \le \underbrace{ \vphantom{\tilde{O}\left( \frac{\lambda \left[ \log(\Vert \bm{w} \Vert_2^2 / \rho^2) + \Psi \right]}{n} \right)} \frac{2 L_S^{FS}(\bm{w})}{C \pi_C}}_{(\mathrm{I})} - \underbrace{ \vphantom{\tilde{O}\left( \frac{\lambda \left[ \log(\Vert \bm{w} \Vert_2^2 / \rho^2) + \Psi \right]}{n} \right)}   \mathcal{O}\left( \frac{\lambda \rho^2}{k + \ln (n)} \cdot tr(H(\bm{w})) \right)}_{(\mathrm{II})}  \\
            & + \underbrace{ \tilde{\mathcal{O}}\left( \frac{\lambda \left[ k\log(\Vert \bm{w} \Vert_2^2 / \rho^2) + \Psi \right]}{n} \right) }_{(\mathrm{III})}.
        \end{aligned}
    \end{equation}
    where $n = |S|$, $\Psi \triangleq \sum_{i=1}^C (1 - \pi_i)^\gamma \pi_i$, $k$ is the number of parameters, $H(\bm{w})$ represents the Hessian matrix of $L^{\gamma}_\mathcal{D}(\bm{w})$ at point $\bm{w}$ and $tr(\cdot)$ represents the matrix trace.
\end{theorem}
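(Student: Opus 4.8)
The plan is to combine a PAC-Bayesian argument with a second-order analysis of the perturbation to get the $1/n$ rate, following the structure of the SAM/CC-SAM bounds but upgrading the concentration inequality. First I would reduce the balanced loss $L_{\mathcal{D}_{bal}}(\bm{w})$ to a weighted combination of class-conditional losses: since $\mathcal{D}$ and $\mathcal{D}_{bal}$ share class-conditional distributions $\mathcal{D}_y$, we have $L_{\mathcal{D}_{bal}}(\bm{w}) = \frac{1}{C}\sum_{i=1}^C L_{\mathcal{D}_i}(\bm{w})$, and $L_{\mathcal{D}}^\gamma(\bm{w}) = \sum_i (1-\pi_i)^\gamma \pi_i L_{\mathcal{D}_i}(\bm{w})$. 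Because each $(1-\pi_i)^\gamma \pi_i \ge (1-\pi_1)^\gamma \pi_C \ge$ (something controlled by $\pi_C$), one can lower-bound $L_{\mathcal{D}}^\gamma(\bm{w})$ by $c\,\pi_C\sum_i L_{\mathcal{D}_i}(\bm{w})$ for an explicit constant, which is where the $\tfrac{2}{C\pi_C}$ prefactor in term (I) originates. So it suffices to bound $L_{\mathcal{D}}^\gamma(\bm{w})$ by the empirical Focal-SAM objective plus the stated remainder.

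Next I would apply the PAC-Bayesian theorem of Tolstikhin--Seldin~\cite{tolstikhin2013pac} — which gives a variance-aware, hence potentially $1/n$-rate, bound — with prior $\mathcal{N}(0,\sigma^2 I)$ and posterior $\mathcal{N}(\bm{w}, \sigma^2 I)$, so that the KL term is $\|\bm{w}\|_2^2/(2\sigma^2)$. Choosing $\sigma \propto \rho/\sqrt{k}$ (the standard SAM tuning, adapted so that $\mathbb{E}_{\bm{\epsilon}}\|\bm{\epsilon}\|_2 \le \rho$) converts the KL into the $k\log(\|\bm{w}\|_2^2/\rho^2)$ piece of term (III), and the union bound over a grid of $\sigma$ / $\rho$ values contributes the logarithmic $\tilde{\mathcal{O}}$ factors. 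The expected perturbed loss $\mathbb{E}_{\bm{\epsilon}\sim\mathcal{N}(0,\sigma^2 I)} L_{\mathcal{D}}^\gamma(\bm{w}+\bm{\epsilon})$ then has to be related to both $\max_{\|\bm{\epsilon}\|\le\rho} L_{\mathcal{D}}^\gamma(\bm{w}+\bm{\epsilon})$ (to recover the Focal-SAM sharpness term) and, via a second-order Taylor expansion with the Hessian $H(\bm{w})$ of $L_{\mathcal{D}}^\gamma$, to $L_{\mathcal{D}}^\gamma(\bm{w}) + \tfrac12 \sigma^2\, tr(H(\bm{w})) + (\text{third-order})$. The Gaussian-average-of-the-loss term dominates the deterministic max up to the curvature correction $\tfrac12\sigma^2 tr(H)$, and since $\sigma^2 \propto \rho^2/k$ this correction is exactly term (II) — note it enters with a \emph{minus} sign because moving from the Gaussian average back to the constrained max \emph{removes} this positive curvature term. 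Finally I would move from population $L_{\mathcal{D}}^\gamma$ to empirical $L_S^\gamma$ inside the PAC-Bayes inequality (the theorem is stated for empirical risk), collect $L_S^\gamma(\bm{w}+\hat{\bm{\epsilon}}) - L_S^\gamma(\bm{w})$ into the focal-sharpness part of $L_S^{FS}(\bm{w})$, bound the leftover $L_S(\bm{w})$ term by the same objective, and absorb $\lambda$, $B$, and $\Psi = \sum_i (1-\pi_i)^\gamma\pi_i$ (which appears as the variance proxy / range of the weighted loss) into the constants.

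I expect the main obstacle to be making the $1/n$ rate actually go through: the Tolstikhin--Seldin bound yields a fast rate only when the empirical variance term is small, so one must carefully argue that the variance of the (weighted, bounded) loss under the posterior is itself $O(L_S^{FS}(\bm{w}))$ or $O(B \cdot L_S^{FS})$ — a Bernstein-type "variance $\lesssim$ mean" step for nonnegative bounded losses — otherwise the bound degrades back to $1/\sqrt{n}$. Tied to this is the bookkeeping of the second-order Taylor remainder: one needs a uniform bound on the third derivative (or a smoothness assumption implicit in "trace of the Hessian") over the $\rho$-ball so that the $O(\sigma^3)$ term is genuinely lower order than $\sigma^2 tr(H)/\sqrt{\text{stuff}}$, and one must check the denominator $k+\ln(n)$ in term (II) arises correctly from the interplay of $\sigma^2 \propto \rho^2/k$ with the $\ln(n)$ coming from the confidence/union-bound discretization. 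The reduction from $L_{\mathcal{D}_{bal}}$ to $L_{\mathcal{D}}^\gamma$ via the $\pi_C$ lower bound and the Gaussian-to-max comparison are comparatively routine; the delicate part is the variance control that legitimizes the headline $\tilde{\mathcal{O}}(1/n)$.
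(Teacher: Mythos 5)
Your toolkit matches the paper's: the Tolstikhin--Seldin PAC-Bayes bound with Gaussian prior/posterior, the SAM-style choice $\sigma_Q \approx \rho/(\sqrt{k}+\sqrt{2\ln n})$, a union bound over a discretized set of prior variances, a chi-square argument to compare the Gaussian-averaged empirical loss with the ball-constrained max, a second-order Taylor expansion of the Gaussian-smoothed \emph{population} loss $L^{\gamma}_{\mathcal{D}}$ (which is exactly where the negative $tr(H)$ term comes from), and a class-prior lower bound to pass to the balanced loss. Two smaller corrections: the paper gets the $\tilde{\mathcal{O}}(1/n)$ rate simply by applying $\sqrt{ab}\le a+b$ to the Tolstikhin--Seldin bound (and to a Bernstein step), paying a multiplicative factor $2$ in front of the empirical terms --- no delicate ``variance $\lesssim$ mean under the posterior'' argument is needed; and the minus sign of term (II) comes from solving the population-side Taylor identity $\mathbb{E}_{\bm{\epsilon}}[L^{\gamma}_{\mathcal{D}}(\bm{w}+\bm{\epsilon})] = L^{\gamma}_{\mathcal{D}}(\bm{w}) + \tfrac{1}{2}\sigma_Q^2\, tr(H(\bm{w})) + o(\cdot)$ for $L^{\gamma}_{\mathcal{D}}(\bm{w})$, not from comparing the Gaussian average with the constrained max.

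The genuine gap is structural: you route the balanced loss entirely through $L^{\gamma}_{\mathcal{D}}$, lower-bounding $L^{\gamma}_{\mathcal{D}}(\bm{w}) \gtrsim \pi_C \sum_i L_{\mathcal{D}_i}(\bm{w})$. This cannot produce the stated bound. First, the focal weights force an extra $(1-\pi_1)^{-\gamma}$ (more precisely $1/\min_i (1-\pi_i)^\gamma$) into the prefactor, whereas term (I) has only $2/(C\pi_C)$. Second, and more importantly, your assembled empirical quantity is $L_S^{\gamma}(\bm{w}) + [\max_{\|\bm{\epsilon}\|\le\rho} L_S^{\gamma}(\bm{w}+\bm{\epsilon}) - L_S^{\gamma}(\bm{w})]$ with the sharpness appearing with coefficient $1$; since the sharpness term is nonnegative, for $\lambda<1$ you cannot dominate it by $\lambda\cdot(\text{sharpness})$, so ``absorbing $\lambda$ into the constants'' fails --- at best you get $\max_{\bm{\epsilon}} L_S^{\gamma} \le L_S^{FS}/\lambda$, which changes both the prefactor of (I) and the $\lambda$-dependence of (II)/(III), the very dependence the theorem is designed to exhibit. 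The paper avoids this by decomposing the full population loss as $L_{\mathcal{D}} = [L_{\mathcal{D}} - \lambda L^{\gamma}_{\mathcal{D}}] + \lambda L^{\gamma}_{\mathcal{D}}$: the unperturbed difference is bounded by its empirical counterpart $L_S - \lambda L_S^{\gamma}$ via Bernstein's inequality (using the assumption that the population difference is nonnegative, reasonable since $\lambda$ is tuned slightly below $1$), while only the $\lambda L^{\gamma}_{\mathcal{D}}$ piece goes through the PAC-Bayes/Taylor machinery; adding the two reconstitutes $2L_S^{FS}(\bm{w})$ exactly, with $\lambda$ multiplying the curvature and complexity terms, and then $L_{\mathcal{D}} \ge C\pi_C L_{\mathcal{D}_{bal}}$ gives the prefactor. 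This split of the objective into its perturbed and unperturbed parts, with a separate concentration step for the latter, is the missing idea in your plan.
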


From the theorem, we have the following insights:
\begin{itemize}[leftmargin=*]
    \item The generalization bound consists of three components. Specifically, ($\mathrm{I}$) is the empirical loss on the training set $L_S^{FS}(\bm{w})$, which can be minimized via large-scale models. ($\mathrm{II}$) reveals how the generalization performance is affected by multiple factors, including $\lambda, \rho, tr(H(\bm{w}))$. ($\mathrm{III}$) decreases at a faster rate of $\tilde{\mathcal{O}}(1 / n)$.
    \item The hyperparameters $\lambda$ and $\gamma$ play a crucial role. On the one hand, a larger $\lambda$ can increase both components ($\mathrm{II}$) and ($\mathrm{III}$) of the bound. Therefore, careful tuning of $\lambda$ can induce a tighter bound. On the other hand, a larger $\gamma$ leads to a smaller $\Psi$, also leading to a tighter bound. This suggests that assigning greater weights to the sharpness of the tail classes can improve the overall generalization ability.
    \item Focal-SAM enables a more effective optimization of $L_{\mathcal{D}_{bal}}(\bm{w})$. Specifically, we can reformulate Eq.\eqref{eq: generalization bound} to 
    \begin{equation}        \label{eq: reformulate generalization bound}
        L_{\mathcal{D}_{bal}}(\bm{w}) + (\mathrm{II}) \le (\mathrm{I}) + (\mathrm{III}).
    \end{equation}
    Typically, (II) tends to be large without SAM-based techniques. As a result, minimizing the right-hand side (RHS) of Eq.\eqref{eq: reformulate generalization bound} in such cases may not induce a small $L_{\mathcal{D}_{bal}}(\bm{w})$. In contrast, Focal-SAM reduces the trace $tr(H(\bm{w}))$ by effectively flattening the loss landscape, leading to a small ($\mathrm{II}$). This makes it more effective to minimize $L_{\mathcal{D}_{bal}}(\bm{w})$ when we optimize the RHS of Eq.\eqref{eq: reformulate generalization bound}. This insight again validates the necessity of Focal-SAM.

\end{itemize}

\begin{table*}[ht]
  \centering
  \caption{Performance comparison on CIFAR-100 LT datasets with various imbalance ratios (IR). FFT denotes fully fine-tuning the foundation model with LA loss. \textbf{Due to space limitations, additional CIFAR-100 LT results combining more methods, as well as the CIFAR-10 LT results, are shown in Tab.\ref{tab: performance comparison on CIFAR-100 LT with more methods} and Tab.\ref{tab: performance comparison on CIFAR-10 LT}.}}
    \begin{tabular}{l|cccc|ccc}
    \toprule
    \multirow{1.5}[2]{*}{Method} & \multicolumn{4}{c|}{IR100}    & IR200 & IR50  & IR10 \\
          & Head  & Med   & Tail  & All   & All   & All   & All \\
    \midrule
    \multicolumn{8}{c}{Training from scratch} \\
    \midrule
    CE    & 69.2  & 41.6  & 9.0   & 41.5  & 37.5  & 45.6  & 58.1  \\
    CE+SAM & 72.7  & 41.8  & 7.0   & 42.2  & 38.9  & 46.8  & 59.7  \\
    CE+ImbSAM & 68.5  & \textbf{46.0} & \textbf{9.6} & 43.0  & 38.7  & 47.8  & 60.1  \\
    CE+CC-SAM & 70.1  & 44.2  & 9.0   & 42.7  & 39.1  & 47.4  & 60.0  \\
    \rowcolor[rgb]{ .922,  .957,  1} \textbf{CE+Focal-SAM} & \textbf{73.8} & 44.2  & 8.9   & \textbf{44.0} & \textbf{39.6} & \textbf{48.1} & \textbf{60.9} \\
    \midrule
    LA~\cite{DBLP:conf/iclr/MenonJRJVK21} & 61.3  & 42.3  & 28.6  & 44.9  & 41.8  & 50.3  & 59.4  \\
    LA+SAM & 63.1  & 52.2  & 32.2  & 50.0  & 45.5  & 52.8  & 62.6  \\
    LA+ImbSAM & 57.4  & 51.1  & 31.0  & 47.3  & 43.4  & 52.2  & 62.4  \\
    LA+CC-SAM & 63.7  & 51.9  & 32.3  & 50.1  & 45.6  & 53.0  & 63.0  \\
    \rowcolor[rgb]{ .922,  .957,  1} \textbf{LA+Focal-SAM} & \textbf{63.9} & \textbf{53.0} & \textbf{32.5} & \textbf{50.7} & \textbf{46.0} & \textbf{54.5} & \textbf{63.8} \\
    \midrule
    \multicolumn{8}{c}{Fine-tuning foundation model} \\
    \midrule
    FFT   & \textbf{88.2} & 79.3  & 66.1  & 78.5  & 76.3  & 81.2  & 85.5  \\
    FFT+SAM & 87.9  & 82.5  & 70.8  & 80.9  & 77.7  & 83.4  & 86.8  \\
    FFT+ImbSAM & 87.5  & 82.0  & 70.2  & 80.4  & 77.2  & 81.9  & 86.7  \\
    FFT+CC-SAM & 87.8  & \textbf{82.9} & 70.9  & 81.0  & 78.2  & 83.5  & 87.0  \\
    \rowcolor[rgb]{ .922,  .957,  1} \textbf{FFT+Focal-SAM} & 88.1  & 82.8  & \textbf{72.4} & \textbf{81.6} & \textbf{79.0} & \textbf{83.9} & \textbf{87.3} \\
    \midrule
    LIFT~\cite{DBLP:conf/icml/Shi00SH024} & 85.3  & 81.1  & 79.2  & 82.0  & 79.6  & 82.8  & 85.0  \\
    LIFT+SAM & 85.0  & 81.5  & \textbf{79.4} & 82.1  & 79.6  & 83.0  & 85.1  \\
    LIFT+ImbSAM & 84.7  & \textbf{81.9} & 78.9  & 82.0  & 79.8  & 83.1  & 85.2  \\
    LIFT+CC-SAM & 84.8  & 81.8  & 79.0  & 82.0  & 79.7  & 83.1  & 85.2  \\
    \rowcolor[rgb]{ .922,  .957,  1} \textbf{LIFT+Focal-SAM} & \textbf{85.4} & \textbf{81.9} & \textbf{79.4} & \textbf{82.4} & \textbf{80.0} & \textbf{83.2} & \textbf{85.4} \\
    \bottomrule
    \end{tabular}%
  \label{tab: performance comparison on CIFAR-LT with imbalance ratio of 100}%
\end{table*}%

\section{Experiments}

This section evaluates the effectiveness of Focal-SAM through a series of experiments. \textbf{Detailed experimental settings and additional results are provided in App.\ref{appendix sec: more experiment protocols} and App.\ref{appendix sec: more experiment results} due to space constraints.}

\begin{table*}[t]
  \centering
  \caption{Performance comparison on ImageNet-LT and iNaturalist. The results for methods marked with \dag \ are taken from the original paper. “-” indicates that the original paper didn’t report the corresponding results.}
    % Table generated by Excel2LaTeX from sheet 'ImgNet-iNat-add-ccsam'
    \begin{tabular}{l|cccc|cccc}
    \toprule
    \multirow{1.5}[2]{*}{Method} & \multicolumn{4}{c|}{ImageNet-LT} & \multicolumn{4}{c}{iNaturalist} \\
          & Head  & Med   & Tail  & All   & Head  & Med   & Tail  & All \\
    \midrule
    \multicolumn{9}{c}{Training from scratch} \\
    \midrule
    CB~\cite{Cui2019ClassBalancedLB} \dag & 39.6  & 32.7  & 16.8  & 33.2  & 53.4  & 54.8  & 53.2  & 54.0  \\
    cRT~\cite{kang2019decoupling} \dag & 61.8  & 46.2  & 27.3  & 49.6  & 69.0  & 66.0  & 63.2  & 65.2  \\
    % $\tau$-norm~\cite{kang2019decoupling} \dag & 59.1  & 46.9  & 30.7  & 49.4  & 65.6  & 65.3  & 65.5  & 65.6  \\
    DiVE~\cite{he2021distilling} \dag & 64.1  & 50.4  & 30.7  & 49.4  & 70.6  & 70.0  & 67.6  & 69.1  \\
    DRO-LT~\cite{samuel2021distributional} \dag & 64.0  & 49.8  & 33.1  & 53.5  & -     & -     & -     & 69.7  \\
    DisAlign~\cite{zhang2021disalign} \dag & 61.3  & 52.2  & 31.4  & 52.9  & 69.0  & 71.1  & 70.2  & 70.6  \\
    WB~\cite{LTRweightbalancing} \dag & 62.5  & 50.4  & 41.5  & 53.9  & 71.2  & 70.4  & 69.7  & 70.2  \\
    CC-SAM~\cite{zhou2023class} \dag & 61.4  & 49.5  & 37.1  & 52.4  & 65.4  & 70.9  & 72.2  & 70.9  \\
    \midrule
    LA~\cite{DBLP:conf/iclr/MenonJRJVK21} & 62.8  & 49.0  & 31.8  & 52.0  & \textbf{68.4} & 69.4  & 69.2  & 69.2  \\
    LA+SAM & 63.1  & 51.7  & 33.1  & 53.6  & 68.3  & 70.8  & 71.9  & 71.0  \\
    LA+ImbSAM & 62.6  & 50.3  & 32.6  & 52.6  & 68.0  & 70.2  & 70.2  & 69.9  \\
    \rowcolor[rgb]{ .937,  .973,  .91} \textbf{LA+Focal-SAM} & \textbf{63.9} & \textbf{52.2} & \textbf{34.4} & \textbf{54.3} & \textbf{68.4} & \textbf{72.0} & \textbf{72.5} & \textbf{71.8} \\
    \midrule
    \multicolumn{9}{c}{Fine-tuning foundation model} \\
    \midrule
    Decoder~\cite{DBLP:journals/ijcv/WangYWHCYXXZ24} \dag & -     & -     & -     & 73.2  & -     & -     & -     & 59.2 \\
    LPT~\cite{DBLP:conf/iclr/DongZYZ23} \dag & -     & -     & -     & -     & -     & -     & 79.3  & 76.1 \\
    \midrule
    FFT   & 79.9  & 70.5  & 51.0  & 71.5  & \textbf{69.7} & 71.9  & 71.7  & 71.6 \\
    FFT+SAM & \textbf{80.9} & 72.9  & 54.3  & 73.5  & 69.5  & 74.4  & \textbf{74.4} & 73.8 \\
    FFT+ImbSAM & 80.6  & 72.6  & 52.2  & 72.9  & 68.5  & 73.4  & 73.8  & 73.1 \\
    FFT+CC-SAM & 80.6  & 73.6  & 54.2  & 73.6  & 69.2  & 74.1  & 74.2  & 73.6 \\
    \rowcolor[rgb]{ .937,  .973,  .91} \textbf{FFT+Focal-SAM} & 80.8  & \textbf{73.9} & \textbf{54.4} & \textbf{73.9} & 69.1  & \textbf{74.7} & 74.3  & \textbf{74.0} \\
    \midrule
    LIFT~\cite{DBLP:conf/icml/Shi00SH024} & 79.7  & 76.2  & 72.8  & 77.1  & \textbf{74.1} & 79.4  & 81.5  & 79.7 \\
    LIFT+SAM & \textbf{79.9} & 76.4  & 72.7  & 77.2  & 73.5  & 79.7  & 81.6  & 79.8 \\
    LIFT+ImbSAM & 79.8  & 76.4  & 72.5  & 77.2  & 73.2  & 79.5  & 81.4  & 79.6 \\
    LIFT+CC-SAM & 79.8  & 76.4  & 73.3  & 77.3  & 74.0  & 79.4  & 81.5  & 79.7 \\
    \rowcolor[rgb]{ .937,  .973,  .91} \textbf{LIFT+Focal-SAM} & 79.7  & \textbf{76.6} & \textbf{73.6} & \textbf{77.4} & 73.9  & \textbf{79.8} & \textbf{81.7} & \textbf{80.0} \\
    \bottomrule
    \end{tabular}%
  \label{tab: performance comparison on ImageNet-LT and iNaturalist}%
\end{table*}%

\begin{table*}[t]
  \centering
  \caption{Performance comparison for domain generalization. The source models are trained on the ImageNet-LT dataset and evaluated on out-of-distribution datasets, including ImageNet-Sketch, ImageNetV2, and ImageNet-C.}
    \begin{tabular}{l|cccc|cccc|cccc}
    \toprule
    \multirow{1.5}[2]{*}{Method} & \multicolumn{4}{c|}{ImageNet-Sketch} & \multicolumn{4}{c|}{ImageNetV2} & \multicolumn{4}{c}{ImageNet-C} \\
          & Head  & Med   & Tail  & All   & Head  & Med   & Tail  & All   & Head  & Med   & Tail  & All \\
    \midrule
    FFT   & 42.9  & 35.5  & 21.4  & 36.4  & 70.1  & 60.2  & 45.2  & 62.0  & 50.3  & 41.4  & 26.1  & 42.8  \\
    FFT+SAM & 44.9  & 39.3  & 26.1  & 39.6  & 71.2  & 62.6  & 48.0  & 63.9  & 52.5  & 44.6  & 29.3  & 45.6  \\
    FFT+ImbSAM & 45.2  & 39.5  & 24.8  & 39.7  & 71.0  & 62.2  & 46.5  & 63.5  & 52.0  & 44.7  & 28.3  & 45.2  \\
    FFT+CC-SAM & 45.0  & 41.0  & 26.8  & 40.6  & 71.3  & 63.2  & 48.4  & 64.3  & 52.0  & 45.1  & 29.4  & 45.6  \\
    \rowcolor[rgb]{ .996,  .961,  .941} \textbf{FFT+Focal-SAM} & \textbf{45.5} & \textbf{41.2} & \textbf{27.3} & \textbf{41.0} & \textbf{71.8} & \textbf{63.6} & \textbf{48.8} & \textbf{64.8} & \textbf{52.6} & \textbf{45.5} & \textbf{29.8} & \textbf{46.1} \\
    \midrule
    LIFT~\cite{DBLP:conf/icml/Shi00SH024} & 46.4  & 43.3  & 45.7  & 44.8  & 70.4  & 65.9  & 64.7  & 67.5  & 52.6  & 48.7  & 47.3  & 50.0  \\
    LIFT+SAM & \textbf{46.9} & 43.5  & 46.4  & 45.2  & \textbf{70.4} & 66.0  & 65.5  & 67.6  & 52.9  & 49.2  & 48.1  & 50.5  \\
    LIFT+ImbSAM & 46.4  & 43.5  & 46.0  & 44.9  & 70.0  & 66.2  & 65.5  & 67.6  & 52.6  & 49.0  & 47.7  & 50.2  \\
    LIFT+CC-SAM & 46.8  & 44.1  & 47.6  & 45.6  & \textbf{70.4} & 66.2  & 65.4  & 67.7  & 53.0  & 49.7  & 49.2  & 50.9  \\
    \rowcolor[rgb]{ .996,  .961,  .941} \textbf{LIFT+Focal-SAM} & \textbf{46.9} & \textbf{44.7} & \textbf{49.4} & \textbf{46.2} & 70.0  & \textbf{66.8} & \textbf{66.9} & \textbf{68.0} & \textbf{53.1} & \textbf{49.9} & \textbf{49.8} & \textbf{51.1} \\
    \bottomrule
    \end{tabular}%
  \label{tab: performance of long-tailed domain generalization}%
\end{table*}%

\begin{figure*}[h]
    \centering

    % 左侧 minipage
    \begin{minipage}[b]{0.47\textwidth}
        % 第一行子图
        \subfigure[CIFAR-10 LT]{
            \includegraphics[width=0.47\textwidth]{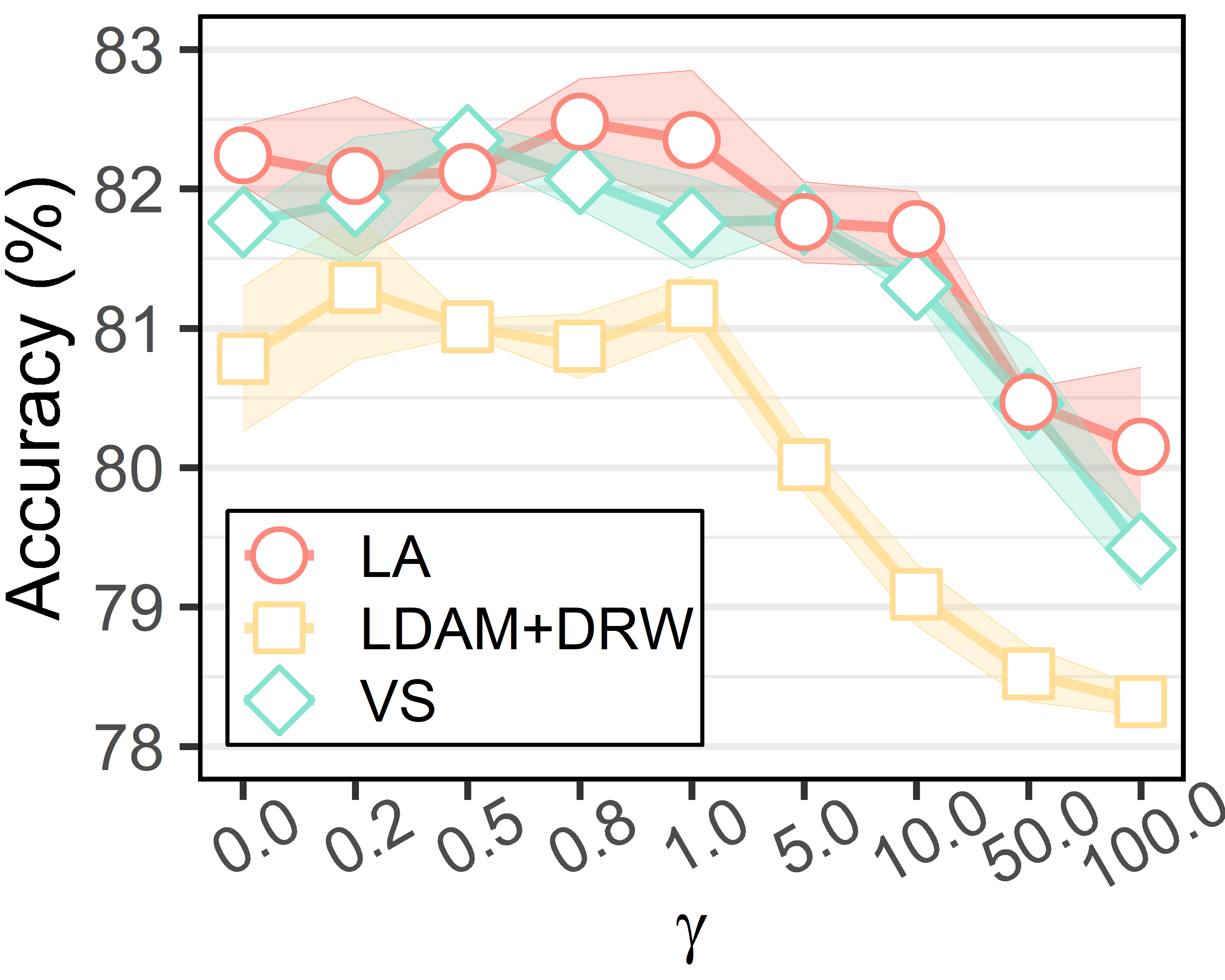}
        }%
        \subfigure[CIFAR-100 LT]{
            \includegraphics[width=0.47\textwidth]{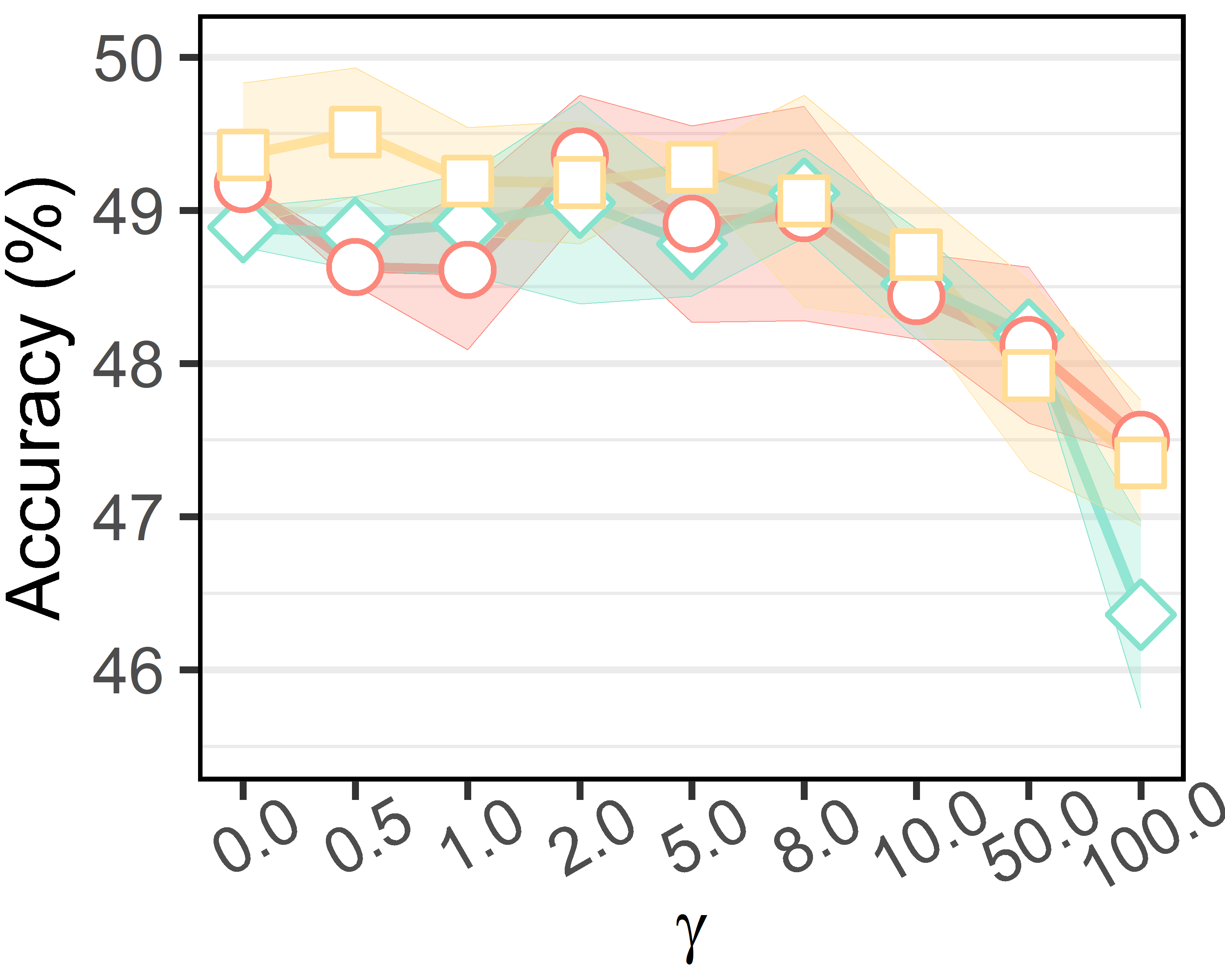}
        }
        \caption{Ablation Study of Focal-SAM \textit{w.r.t.} $\gamma$}
        \label{fig: ablation study of gamma}
    \end{minipage}%
    % 右侧 minipage
    \begin{minipage}[b]{0.47\textwidth}
        % 第二行子图
        \subfigure[CIFAR-10 LT]{
            \includegraphics[width=0.47\textwidth]{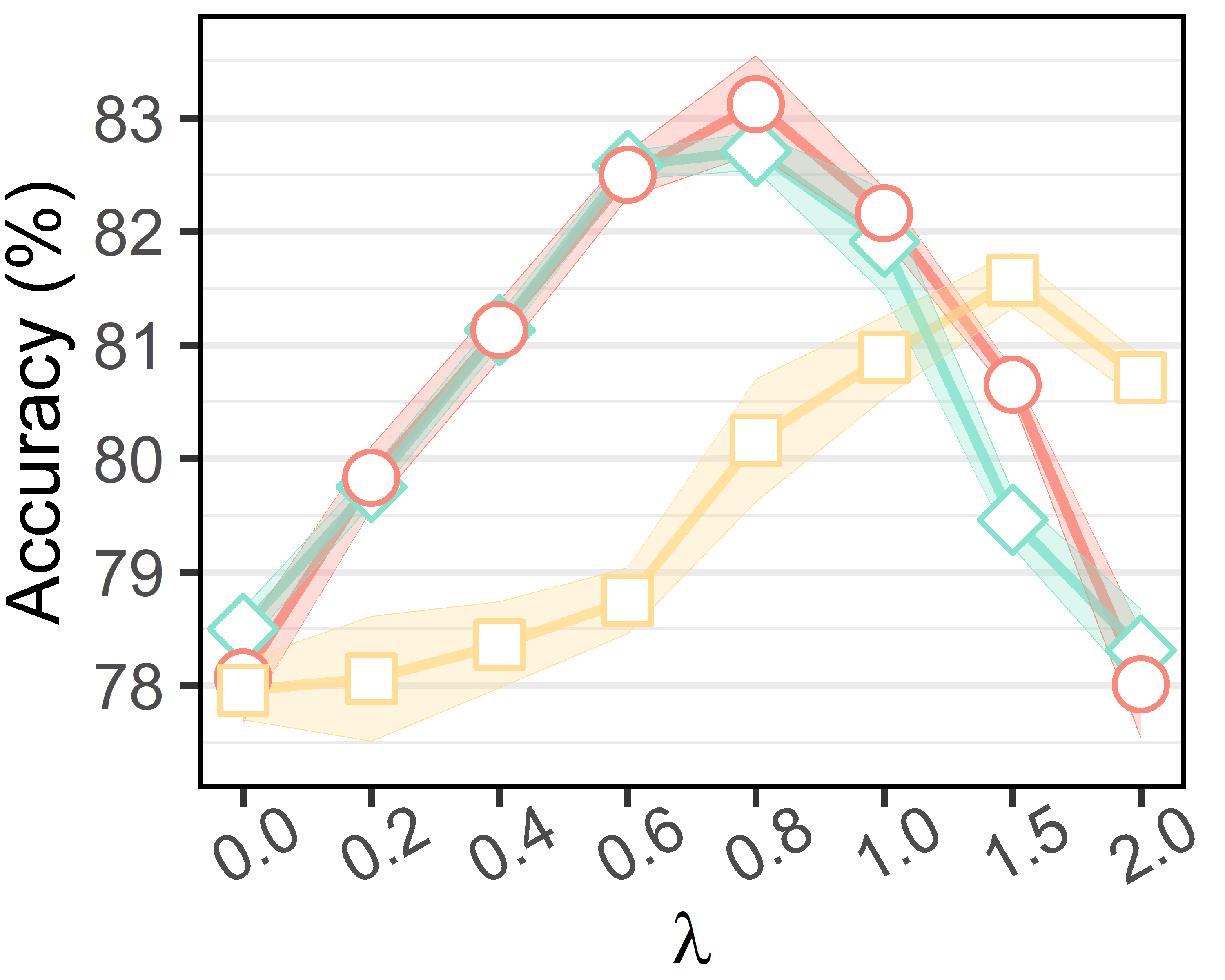}
        }%
        \subfigure[CIFAR-100 LT]{
            \includegraphics[width=0.47\textwidth]{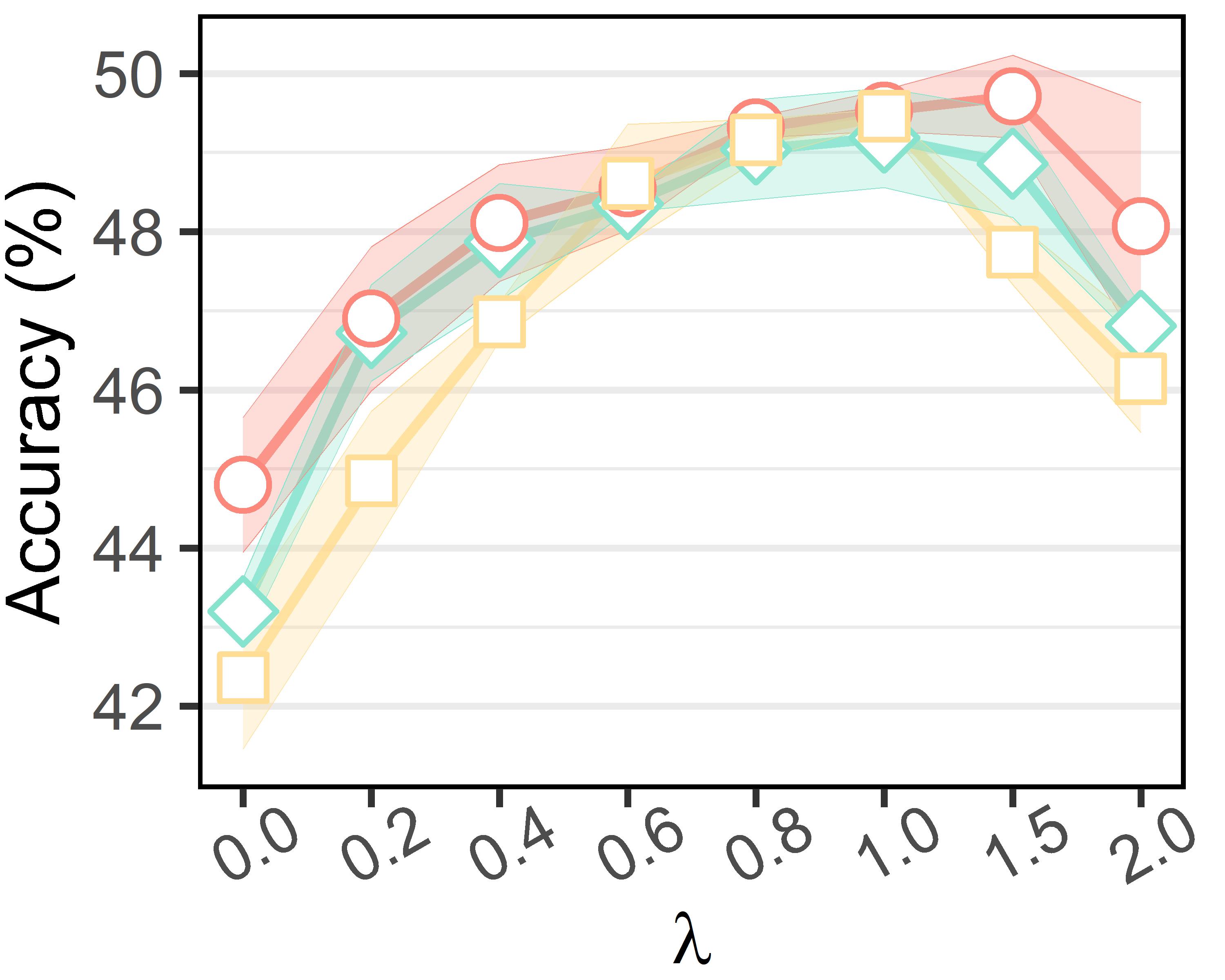}
        }
        \caption{Ablation Study of Focal-SAM \textit{w.r.t.} $\lambda$}
        \label{fig: ablation study of lambda}
    \end{minipage}

\end{figure*}

\subsection{Experiment Protocols}       \label{subsec: experiment protocols}

\textbf{Datasets.} We use four widely adopted long-tailed datasets for long-tailed recognition tasks: CIFAR-10 LT~\cite{cao2019learning}, CIFAR-100 LT~\cite{cao2019learning}, ImageNet-LT~\cite{openlongtailrecognition} and iNaturalist~\cite{van2018inaturalist}. The CIFAR-LT datasets include variants with imbalance ratios of \{200, 100, 50, 10\}. In addition to evaluating model performance on ID test sets, we also assess it on \textbf{OOD} test sets, referred to as long-tailed domain generalization tasks. Specifically, we train the model on ImageNet-LT and evaluate it on three OOD datasets: ImageNet-Sketch~\cite{DBLP:conf/nips/WangGLX19}, ImageNetV2~\cite{DBLP:conf/icml/RechtRSS19}, and ImageNet-C~\cite{DBLP:conf/iclr/HendrycksD19}. For more details, see App.\ref{appendix subsec: datasets}.

\textbf{Competitors.} When training ResNet models on the CIFAR-LT dataset, we assess several loss functions. These methods are further combined with SAM~\cite{foret2021sharpnessaware}, ImbSAM~\cite{zhou2023imbsam}, and CC-SAM~\cite{zhou2023class} as baselines. For the ImageNet-LT and iNaturalist datasets, we employ a range of representative methods as baseline methods. When fine-tuning the foundation model CLIP~\cite{DBLP:conf/icml/RadfordKHRGASAM21}, we evaluate both full fine-tuning with LA loss (denoted as FFT) and parameter-efficient fine-tuning using the LIFT method~\cite{DBLP:conf/icml/Shi00SH024}, along with their performance when combined with different SAM variants. For more details, please refer to App.\ref{appendix subsec: competitors}.

\textbf{Evaluation Protocol.} For long-tailed recognition tasks, we assess model performance using balanced accuracy~\cite{DBLP:conf/iclr/MenonJRJVK21}. To provide deeper insights, we split the classes into three groups: Head, Medium, and Tail, and report accuracy for each group individually. For long-tailed domain generalization tasks, we evaluate performance on OOD balanced test sets, including top-1 accuracy and accuracy for each class group. For more details of the evaluation protocol, please refer to App.\ref{appendix subsec: evaluation protocol}.

\textbf{Implementation Details.} For CIFAR-LT datasets, we train ResNet models using ResNet-32~\cite{he2015deep} as the backbone. For ImageNet-LT and iNaturalist datasets, we employ ResNet-50~\cite{he2015deep}. Training is conducted for 200 epochs. For fine-tuning foundation models, we follow the protocols outlined in LIFT~\cite{DBLP:conf/icml/Shi00SH024}. Specifically, we fine-tune the image encoder of CLIP~\cite{DBLP:conf/icml/RadfordKHRGASAM21} with a ViT-B/16 \cite{DBLP:conf/iclr/DosovitskiyB0WZ21} backbone. The training lasts for 20 epochs. For further implementation details, please refer to App.\ref{appendix subsec: implementation details}.

\subsection{Performance Comparison} \label{subsec: performance comparison}

Tab.\ref{tab: performance comparison on CIFAR-LT with imbalance ratio of 100} summarizes the experimental results on the CIFAR-LT datasets with different imbalance ratios. From these results, we have the following observations: 1) Focal-SAM consistently performs better than SAM, ImbSAM, and CC-SAM across various loss functions. %2) While CE+ImbSAM outperforms CE+SAM, (LDAM/LA/VS)+ImbSAM underperforms (LDAM/LA/VS)+SAM. This is because these losses already balance the focus between head and tail classes. In this case, ImbSAM overemphasizes the tail classes, leading to poor head and overall performance. 
2) Focal-SAM significantly outperforms ImbSAM on head classes, while maintaining or surpassing ImbSAM on tail classes. Additionally, Focal-SAM generally outperforms CC-SAM on both head and tail classes, showing its ability to achieve a finer balance between head and tail classes performance. 

Tab.\ref{tab: performance comparison on ImageNet-LT and iNaturalist} presents results on the larger ImageNet-LT and iNaturalist datasets. Combining the baseline LA with Focal-SAM improves performance by approximately 1.9\%$\sim$2.3\% when training ResNet models. Similarly, pairing the baseline FFT or LIFT with Focal-SAM yields a performance gain of 0.3\%$\sim$2.4\% when fine-tuning foundation models, outperforming several competitors.

\subsection{Long-tailed Domain Generalization}
In Tab.\ref{tab: performance of long-tailed domain generalization}, we evaluate the model trained on the ImageNet-LT dataset across three OOD datasets. The results show the following: 1) SAM-based methods, when combined with FFT or LIFT, generally achieve more performance gain on OOD datasets than on the ID dataset (ImageNet-LT). This observation aligns with prior studies \cite{DBLP:conf/cvpr/ZhouYL022, DBLP:conf/iccv/KhattakWNK0K23, DBLP:conf/cvpr/ParkKK24}, which suggest that fine-tuning foundation models often perform well on target (ID) datasets but struggles with unseen (OOD) datasets. 2) Focal-SAM achieves a performance improvement of 0.5\% to 4.3\%, surpassing SAM, ImbSAM, and CC-SAM. This is because Focal-SAM effectively enhances the model's generalization ability by flattening the loss landscape, which mitigates performance issues on OOD test sets.

\subsection{Training Speed of Focal-SAM}

To assess the computational efficiency of Focal-SAM, we evaluate the training time per epoch across various long-tailed datasets, as shown in Tab.\ref{tab: average training time per epoch}. Focal-SAM requires about 50\% more running time than SAM and has a similar running time to ImbSAM. Given that our method consistently outperforms SAM and ImbSAM, thus the computational cost is acceptable for the performance gain. Furthermore, Focal-SAM is significantly faster than CC-SAM while delivering better performance, aligning with our goal of improving CC-SAM's efficiency.

\subsection{Sharpness of Loss Landscape for Focal-SAM} \label{subsec: eigen spectral ddensity of hessian}

To examine the impact of Focal-SAM on the loss landscape, Fig.\ref{fig: fsam head} and Fig.\ref{fig: fsam tail} show the eigenvalue spectrum of Hessian for head and tail classes of models trained with Focal-SAM on CIFAR-10 LT using the VS loss function. Comparing Fig.\ref{fig: sam tail} and Fig.\ref{fig: fsam tail}, the trace $Tr(H)$ and the maximum eigenvalue $\lambda_{max}$ for tail classes in Focal-SAM are significantly lower than those in SAM. Similarly, Fig.\ref{fig: imbsam head} and Fig.\ref{fig: fsam head} reveal that $Tr(H)$ and $\lambda_{max}$ for head classes in Focal-SAM are much smaller than those in ImbSAM. These results suggest that Focal-SAM achieves a fine-grained balance in the flatness between head and tail classes.

\subsection{Ablation Study About $\gamma$ and $\lambda$}    \label{subsec: ablation study about gamma and lambda}

We analyze the influence of hyperparameters $\gamma$ and $\lambda$ to Focal-SAM on the CIFAR-LT datasets. 
% The results are shown in Fig.\ref{fig: ablation study of gamma} and Fig.\ref{fig: ablation study of lambda}.

\textbf{Impact of $\gamma$:} Fig.\ref{fig: ablation study of gamma} explores the effect of $\gamma$. As $\gamma$ increases, performance initially improves, suggesting that assigning greater weight to the class-wise sharpness of tail classes benefits performance. However, a further increase in $\gamma$ leads to declining accuracy, indicating that assigning excessive weight to the class-wise sharpness of tail classes can harm performance. 

\textbf{Impact of $\lambda$:} Fig.\ref{fig: ablation study of lambda} investigates the effect of $\lambda$. As $\lambda$ increases, accuracy initially improves but subsequently decreases. This indicates a trade-off between minimizing the training loss and minimizing the sharpness of the loss landscape.

\section{Conclusion}        \label{sec: conclusion and future work}

This paper examines the limitations of ImbSAM and CC-SAM in long-tailed learning. ImbSAM excludes all head classes from SAM, often overemphasizing tail classes when combined with rebalancing methods. CC-SAM's per-class perturbation strategy provides fine-grained control over the loss landscape but is computationally costly. To address these issues, we propose Focal-SAM, a method that efficiently balances loss landscape flatness between head and tail classes. Additionally, we offer a theoretical analysis of Focal-SAM's generalization ability, deriving a tighter bound. Extensive experiments validate Focal-SAM's effectiveness.

\section*{Acknowledgements}

This work was supported in part by the Fundamental Research Funds for the Central Universities, in part by the National Key R\&D Program of China under Grant 2018AAA0102000, in part by National Natural Science Foundation of China: 62236008, 62441232, U21B2038, U23B2051, 62122075, 62025604, 62441619, 62206264 and 92370102, in part by Youth Innovation Promotion Association CAS, in part by the Strategic Priority Research Program of the Chinese Academy of Sciences, Grant No.XDB0680201, in part by the China National Postdoctoral Program for Innovative Talents under Grant BX20240384. 

\section*{Impact Statement}

This paper presents work whose goal is to advance the field of 
Machine Learning. There are many potential societal consequences 
of our work, none which we feel must be specifically highlighted here.

% In the unusual situation where you want a paper to appear in the
% references without citing it in the main text, use \nocite
% \nocite{langley00}

\bibliography{main_paper}

\begin{thebibliography}{74}
\providecommand{\natexlab}[1]{#1}
\providecommand{\url}[1]{\texttt{#1}}
\expandafter\ifx\csname urlstyle\endcsname\relax
  \providecommand{\doi}[1]{doi: #1}\else
  \providecommand{\doi}{doi: \begingroup \urlstyle{rm}\Url}\fi

\bibitem[Ahn et~al.(2023)Ahn, Ko, and Yun]{ahn2023cuda}
Ahn, S., Ko, J., and Yun, S.
\newblock {CUDA:} curriculum of data augmentation for long-tailed recognition.
\newblock In \emph{International Conference on Learning Representations}, 2023.

\bibitem[Aimar et~al.(2023)Aimar, Jonnarth, Felsberg, and Kuhlmann]{aimar2023balanced}
Aimar, E.~S., Jonnarth, A., Felsberg, M., and Kuhlmann, M.
\newblock Balanced product of calibrated experts for long-tailed recognition.
\newblock In \emph{{IEEE/CVF} Conference on Computer Vision and Pattern Recognition}, pp.\  19967--19977, 2023.

\bibitem[Alshammari et~al.(2022)Alshammari, Wang, Ramanan, and Kong]{LTRweightbalancing}
Alshammari, S., Wang, Y., Ramanan, D., and Kong, S.
\newblock Long-tailed recognition via weight balancing.
\newblock In \emph{{IEEE/CVF} Conference on Computer Vision and Pattern Recognition}, pp.\  6887--6897, 2022.

\bibitem[Bernstein(1924)]{bernstein1924modification}
Bernstein, S.
\newblock On a modification of chebyshev’s inequality and of the error formula of laplace.
\newblock \emph{Ann. Sci. Inst. Sav. Ukraine, Sect. Math}, pp.\  38--49, 1924.

\bibitem[Buda et~al.(2018)Buda, Maki, and Mazurowski]{Buda_2018}
Buda, M., Maki, A., and Mazurowski, M.~A.
\newblock A systematic study of the class imbalance problem in convolutional neural networks.
\newblock \emph{Neural Networks}, pp.\  249--259, 2018.

\bibitem[Cao et~al.(2019)Cao, Wei, Gaidon, Ar{\'{e}}chiga, and Ma]{cao2019learning}
Cao, K., Wei, C., Gaidon, A., Ar{\'{e}}chiga, N., and Ma, T.
\newblock Learning imbalanced datasets with label-distribution-aware margin loss.
\newblock In \emph{Annual Conference on Neural Information Processing Systems}, pp.\  1565--1576, 2019.

\bibitem[Cui et~al.(2021)Cui, Zhong, Liu, Yu, and Jia]{cui2021parametric}
Cui, J., Zhong, Z., Liu, S., Yu, B., and Jia, J.
\newblock Parametric contrastive learning.
\newblock In \emph{{IEEE/CVF} International Conference on Computer Vision}, pp.\  695--704, 2021.

\bibitem[Cui et~al.(2024)Cui, Zhong, Tian, Liu, Yu, and Jia]{cui2023generalized}
Cui, J., Zhong, Z., Tian, Z., Liu, S., Yu, B., and Jia, J.
\newblock Generalized parametric contrastive learning.
\newblock \emph{{IEEE} Trans. Pattern Anal. Mach. Intell.}, pp.\  7463--7474, 2024.

\bibitem[Cui et~al.(2019)Cui, Jia, Lin, Song, and Belongie]{Cui2019ClassBalancedLB}
Cui, Y., Jia, M., Lin, T., Song, Y., and Belongie, S.~J.
\newblock Class-balanced loss based on effective number of samples.
\newblock In \emph{{IEEE} Conference on Computer Vision and Pattern Recognition}, pp.\  9268--9277, 2019.

\bibitem[Dai et~al.(2023)Dai, Xu, Yang, Cao, and Huang]{DBLP:conf/nips/DaiX0CH23}
Dai, S., Xu, Q., Yang, Z., Cao, X., and Huang, Q.
\newblock {DRAUC:} an instance-wise distributionally robust {AUC} optimization framework.
\newblock In \emph{Annual Conference on Neural Information Processing Systems}, 2023.

\bibitem[Deng et~al.(2009)Deng, Dong, Socher, Li, Li, and Fei{-}Fei]{DBLP:conf/cvpr/DengDSLL009}
Deng, J., Dong, W., Socher, R., Li, L., Li, K., and Fei{-}Fei, L.
\newblock Imagenet: {A} large-scale hierarchical image database.
\newblock In \emph{{IEEE/CVF} Conference on Computer Vision and Pattern Recognition}, pp.\  248--255, 2009.

\bibitem[Dong et~al.(2023)Dong, Zhou, Yan, and Zuo]{DBLP:conf/iclr/DongZYZ23}
Dong, B., Zhou, P., Yan, S., and Zuo, W.
\newblock {LPT:} long-tailed prompt tuning for image classification.
\newblock In \emph{International Conference on Learning Representations}, 2023.

\bibitem[Dosovitskiy et~al.(2021)Dosovitskiy, Beyer, Kolesnikov, Weissenborn, Zhai, Unterthiner, Dehghani, Minderer, Heigold, Gelly, Uszkoreit, and Houlsby]{DBLP:conf/iclr/DosovitskiyB0WZ21}
Dosovitskiy, A., Beyer, L., Kolesnikov, A., Weissenborn, D., Zhai, X., Unterthiner, T., Dehghani, M., Minderer, M., Heigold, G., Gelly, S., Uszkoreit, J., and Houlsby, N.
\newblock An image is worth 16x16 words: Transformers for image recognition at scale.
\newblock In \emph{International Conference on Learning Representations}, 2021.

\bibitem[Foret et~al.(2021)Foret, Kleiner, Mobahi, and Neyshabur]{foret2021sharpnessaware}
Foret, P., Kleiner, A., Mobahi, H., and Neyshabur, B.
\newblock Sharpness-aware minimization for efficiently improving generalization.
\newblock In \emph{International Conference on Learning Representations}, 2021.

\bibitem[Gao et~al.(2023)Gao, Xu, Wen, Yang, Shao, and Huang]{peifeng2023feature}
Gao, P., Xu, Q., Wen, P., Yang, Z., Shao, H., and Huang, Q.
\newblock Feature directions matter: Long-tailed learning via rotated balanced representation.
\newblock In \emph{International Conference on Machine Learning}, pp.\  27542--27563, 2023.

\bibitem[Ghorbani et~al.(2019)Ghorbani, Krishnan, and Xiao]{pmlr-v97-ghorbani19b}
Ghorbani, B., Krishnan, S., and Xiao, Y.
\newblock An investigation into neural net optimization via hessian eigenvalue density.
\newblock In \emph{International Conference on Machine Learning}, pp.\  2232--2241, 2019.

\bibitem[Han et~al.(2024)Han, Xu, Yang, Bao, Wen, Jiang, and Huang]{DBLP:conf/nips/HanX0BWJH24}
Han, B., Xu, Q., Yang, Z., Bao, S., Wen, P., Jiang, Y., and Huang, Q.
\newblock Aucseg: Auc-oriented pixel-level long-tail semantic segmentation.
\newblock In \emph{Annual Conference on Neural Information Processing Systems}, 2024.

\bibitem[He et~al.(2016)He, Zhang, Ren, and Sun]{he2015deep}
He, K., Zhang, X., Ren, S., and Sun, J.
\newblock Deep residual learning for image recognition.
\newblock In \emph{{IEEE} Conference on Computer Vision and Pattern Recognition}, pp.\  770--778, 2016.

\bibitem[He et~al.(2021)He, Wu, and Wei]{he2021distilling}
He, Y., Wu, J., and Wei, X.
\newblock Distilling virtual examples for long-tailed recognition.
\newblock In \emph{{IEEE/CVF} International Conference on Computer Vision}, pp.\  235--244, 2021.

\bibitem[Hendrycks \& Dietterich(2019)Hendrycks and Dietterich]{DBLP:conf/iclr/HendrycksD19}
Hendrycks, D. and Dietterich, T.~G.
\newblock Benchmarking neural network robustness to common corruptions and perturbations.
\newblock In \emph{International Conference on Learning Representations}, 2019.

\bibitem[Hong et~al.(2024)Hong, Yao, Lyu, Zhou, Tsang, Zhang, and Wang]{DBLP:conf/iclr/0004YL0T0W24}
Hong, F., Yao, J., Lyu, Y., Zhou, Z., Tsang, I.~W., Zhang, Y., and Wang, Y.
\newblock On harmonizing implicit subpopulations.
\newblock In \emph{International Conference on Learning Representations}, 2024.

\bibitem[Hong et~al.(2021)Hong, Han, Choi, Seo, Kim, and Chang]{DBLP:conf/cvpr/HongHCSKC21}
Hong, Y., Han, S., Choi, K., Seo, S., Kim, B., and Chang, B.
\newblock Disentangling label distribution for long-tailed visual recognition.
\newblock In \emph{{IEEE} Conference on Computer Vision and Pattern Recognition}, pp.\  6626--6636, 2021.

\bibitem[Hong et~al.(2022)Hong, Zhang, Sun, and Yan]{hong2022safa}
Hong, Y., Zhang, J., Sun, Z., and Yan, K.
\newblock Safa: Sample-adaptive feature augmentation for long-tailed image classification.
\newblock In \emph{European Conference on Computer Vision}, pp.\  587--603, 2022.

\bibitem[Horn et~al.(2018)Horn, Aodha, Song, Cui, Sun, Shepard, Adam, Perona, and Belongie]{van2018inaturalist}
Horn, G.~V., Aodha, O.~M., Song, Y., Cui, Y., Sun, C., Shepard, A., Adam, H., Perona, P., and Belongie, S.~J.
\newblock The inaturalist species classification and detection dataset.
\newblock In \emph{{IEEE} Conference on Computer Vision and Pattern Recognition}, pp.\  8769--8778, 2018.

\bibitem[Hou et~al.(2022)Hou, Xu, Yang, Bao, He, and Huang]{DBLP:conf/icml/HouX0BHH22}
Hou, W., Xu, Q., Yang, Z., Bao, S., He, Y., and Huang, Q.
\newblock Adauc: End-to-end adversarial {AUC} optimization against long-tail problems.
\newblock In \emph{International Conference on Machine Learning}, volume 162 of \emph{Proceedings of Machine Learning Research}, pp.\  8903--8925, 2022.

\bibitem[Jiang et~al.(2020)Jiang, Neyshabur, Mobahi, Krishnan, and Bengio]{DBLP:conf/iclr/JiangNMKB20}
Jiang, Y., Neyshabur, B., Mobahi, H., Krishnan, D., and Bengio, S.
\newblock Fantastic generalization measures and where to find them.
\newblock In \emph{International Conference on Learning Representations}, 2020.

\bibitem[Kang et~al.(2020)Kang, Xie, Rohrbach, Yan, Gordo, Feng, and Kalantidis]{kang2019decoupling}
Kang, B., Xie, S., Rohrbach, M., Yan, Z., Gordo, A., Feng, J., and Kalantidis, Y.
\newblock Decoupling representation and classifier for long-tailed recognition.
\newblock In \emph{International Conference on Learning Representations}, 2020.

\bibitem[Keskar et~al.(2017)Keskar, Mudigere, Nocedal, Smelyanskiy, and Tang]{keskar2017largebatch}
Keskar, N.~S., Mudigere, D., Nocedal, J., Smelyanskiy, M., and Tang, P. T.~P.
\newblock On large-batch training for deep learning: Generalization gap and sharp minima.
\newblock In \emph{International Conference on Learning Representations}, 2017.

\bibitem[Khattak et~al.(2023)Khattak, Wasim, Naseer, Khan, Yang, and Khan]{DBLP:conf/iccv/KhattakWNK0K23}
Khattak, M.~U., Wasim, S.~T., Naseer, M., Khan, S., Yang, M., and Khan, F.~S.
\newblock Self-regulating prompts: Foundational model adaptation without forgetting.
\newblock In \emph{{IEEE/CVF} International Conference on Computer Vision}, pp.\  15144--15154, 2023.

\bibitem[Kim et~al.(2020)Kim, Jeong, and Shin]{kim2020m2m}
Kim, J., Jeong, J., and Shin, J.
\newblock M2m: Imbalanced classification via major-to-minor translation.
\newblock In \emph{{IEEE/CVF} Conference on Computer Vision and Pattern Recognition}, pp.\  13893--13902, 2020.

\bibitem[Kini et~al.(2021)Kini, Paraskevas, Oymak, and Thrampoulidis]{kini2021label}
Kini, G.~R., Paraskevas, O., Oymak, S., and Thrampoulidis, C.
\newblock Label-imbalanced and group-sensitive classification under overparameterization.
\newblock In \emph{Annual Conference on Neural Information Processing Systems}, pp.\  18970--18983, 2021.

\bibitem[Krizhevsky \& Hinton(2009)Krizhevsky and Hinton]{Krizhevsky09}
Krizhevsky, A. and Hinton, G.
\newblock Learning multiple layers of features from tiny images.
\newblock \emph{Master's thesis, Department of Computer Science, University of Toronto}, 2009.

\bibitem[Krizhevsky et~al.(2012)Krizhevsky, Sutskever, and Hinton]{DBLP:conf/nips/KrizhevskySH12}
Krizhevsky, A., Sutskever, I., and Hinton, G.~E.
\newblock Imagenet classification with deep convolutional neural networks.
\newblock In \emph{Annual Conference on Neural Information Processing Systems}, pp.\  1106--1114, 2012.

\bibitem[Li et~al.(2022)Li, Tan, Wan, Lei, and Guo]{li2022nested}
Li, J., Tan, Z., Wan, J., Lei, Z., and Guo, G.
\newblock Nested collaborative learning for long-tailed visual recognition.
\newblock In \emph{{IEEE/CVF} Conference on Computer Vision and Pattern Recognition}, pp.\  6939--6948, 2022.

\bibitem[Lin et~al.(2017)Lin, Goyal, Girshick, He, and Doll{\'{a}}r]{DBLP:conf/iccv/LinGGHD17}
Lin, T., Goyal, P., Girshick, R.~B., He, K., and Doll{\'{a}}r, P.
\newblock Focal loss for dense object detection.
\newblock In \emph{{IEEE} International Conference on Computer Vision}, pp.\  2999--3007, 2017.

\bibitem[Liu et~al.(2022)Liu, Li, Kang, Hua, and Vasconcelos]{liu2022breadcrumbs}
Liu, B., Li, H., Kang, H., Hua, G., and Vasconcelos, N.
\newblock Breadcrumbs: Adversarial class-balanced sampling for long-tailed recognition.
\newblock In \emph{European Conference on Computer Vision}, pp.\  637--653, 2022.

\bibitem[Liu et~al.(2019)Liu, Miao, Zhan, Wang, Gong, and Yu]{openlongtailrecognition}
Liu, Z., Miao, Z., Zhan, X., Wang, J., Gong, B., and Yu, S.~X.
\newblock Large-scale long-tailed recognition in an open world.
\newblock In \emph{{IEEE} Conference on Computer Vision and Pattern Recognition}, pp.\  2537--2546, 2019.

\bibitem[Lyu et~al.(2025)Lyu, Xu, Yang, Lyu, and Huang]{DBLP:conf/aaai/LyuX0LH25}
Lyu, X., Xu, Q., Yang, Z., Lyu, S., and Huang, Q.
\newblock {SSE-SAM:} balancing head and tail classes gradually through stage-wise {SAM}.
\newblock In \emph{Association for the Advancement of Artificial Intelligence}, pp.\  19278--19286, 2025.

\bibitem[Menon et~al.(2021)Menon, Jayasumana, Rawat, Jain, Veit, and Kumar]{DBLP:conf/iclr/MenonJRJVK21}
Menon, A.~K., Jayasumana, S., Rawat, A.~S., Jain, H., Veit, A., and Kumar, S.
\newblock Long-tail learning via logit adjustment.
\newblock In \emph{International Conference on Learning Representations}, 2021.

\bibitem[Park et~al.(2024)Park, Ko, and Kim]{DBLP:conf/cvpr/ParkKK24}
Park, J., Ko, J., and Kim, H.~J.
\newblock Prompt learning via meta-regularization.
\newblock In \emph{{IEEE/CVF} Conference on Computer Vision and Pattern Recognition}, pp.\  26930--26940, 2024.

\bibitem[Radford et~al.(2021)Radford, Kim, Hallacy, Ramesh, Goh, Agarwal, Sastry, Askell, Mishkin, Clark, Krueger, and Sutskever]{DBLP:conf/icml/RadfordKHRGASAM21}
Radford, A., Kim, J.~W., Hallacy, C., Ramesh, A., Goh, G., Agarwal, S., Sastry, G., Askell, A., Mishkin, P., Clark, J., Krueger, G., and Sutskever, I.
\newblock Learning transferable visual models from natural language supervision.
\newblock In \emph{International Conference on Machine Learning}, pp.\  8748--8763, 2021.

\bibitem[Rangwani et~al.(2022)Rangwani, Aithal, Mishra, and R.]{rangwani2022escaping}
Rangwani, H., Aithal, S.~K., Mishra, M., and R., V.~B.
\newblock Escaping saddle points for effective generalization on class-imbalanced data.
\newblock In \emph{Annual Conference on Neural Information Processing Systems}, pp.\  22791--22805, 2022.

\bibitem[Recht et~al.(2019)Recht, Roelofs, Schmidt, and Shankar]{DBLP:conf/icml/RechtRSS19}
Recht, B., Roelofs, R., Schmidt, L., and Shankar, V.
\newblock Do imagenet classifiers generalize to imagenet?
\newblock In \emph{International Conference on Machine Learning}, pp.\  5389--5400, 2019.

\bibitem[Ren et~al.(2020)Ren, Yu, Sheng, Ma, Zhao, Yi, and Li]{Ren2020balms}
Ren, J., Yu, C., Sheng, S., Ma, X., Zhao, H., Yi, S., and Li, H.
\newblock Balanced meta-softmax for long-tailed visual recognition.
\newblock In \emph{Annual Conference on Neural Information Processing Systems}, pp.\  4175--4186, 2020.

\bibitem[Ren et~al.(2015)Ren, He, Girshick, and Sun]{DBLP:journals/corr/RenHG015}
Ren, S., He, K., Girshick, R.~B., and Sun, J.
\newblock Faster {R-CNN:} towards real-time object detection with region proposal networks.
\newblock In \emph{Annual Conference on Neural Information Processing Systems}, pp.\  91--99, 2015.

\bibitem[Ronneberger et~al.(2015)Ronneberger, Fischer, and Brox]{ronneberger2015u}
Ronneberger, O., Fischer, P., and Brox, T.
\newblock U-net: Convolutional networks for biomedical image segmentation.
\newblock In \emph{Medical Image Computing and Computer-Assisted Intervention}, volume 9351, pp.\  234--241, 2015.

\bibitem[Samuel \& Chechik(2021)Samuel and Chechik]{samuel2021distributional}
Samuel, D. and Chechik, G.
\newblock Distributional robustness loss for long-tail learning.
\newblock In \emph{{IEEE/CVF} International Conference on Computer Vision}, pp.\  9475--9484, 2021.

\bibitem[Shao et~al.(2023)Shao, Xu, Yang, Wen, Gao, and Huang]{DBLP:conf/nips/ShaoX0WGH23}
Shao, H., Xu, Q., Yang, Z., Wen, P., Gao, P., and Huang, Q.
\newblock Weighted {ROC} curve in cost space: Extending {AUC} to cost-sensitive learning.
\newblock In \emph{Annual Conference on Neural Information Processing Systems}, 2023.

\bibitem[Shi et~al.(2024)Shi, Wei, Zhou, Shao, Han, and Li]{DBLP:conf/icml/Shi00SH024}
Shi, J., Wei, T., Zhou, Z., Shao, J., Han, X., and Li, Y.
\newblock Long-tail learning with foundation model: Heavy fine-tuning hurts.
\newblock In \emph{International Conference on Machine Learning}, pp.\  45014--45039, 2024.

\bibitem[Tolstikhin \& Seldin(2013)Tolstikhin and Seldin]{tolstikhin2013pac}
Tolstikhin, I.~O. and Seldin, Y.
\newblock Pac-bayes-empirical-bernstein inequality.
\newblock In \emph{Annual Conference on Neural Information Processing Systems}, pp.\  109--117, 2013.

\bibitem[Wang et~al.(2024{\natexlab{a}})Wang, Wang, Xu, Wang, Zhang, Wang, and Wang]{DBLP:conf/iclr/WangWXWZWW24}
Wang, B., Wang, P., Xu, W., Wang, X., Zhang, Y., Wang, K., and Wang, Y.
\newblock Kill two birds with one stone: Rethinking data augmentation for deep long-tailed learning.
\newblock In \emph{International Conference on Learning Representations}, 2024{\natexlab{a}}.

\bibitem[Wang et~al.(2019{\natexlab{a}})Wang, Ge, Lipton, and Xing]{DBLP:conf/nips/WangGLX19}
Wang, H., Ge, S., Lipton, Z.~C., and Xing, E.~P.
\newblock Learning robust global representations by penalizing local predictive power.
\newblock In \emph{Annual Conference on Neural Information Processing Systems}, 2019{\natexlab{a}}.

\bibitem[Wang et~al.(2024{\natexlab{b}})Wang, Zhao, Wen, Wang, Wang, Zhang, and Wang]{DBLP:conf/nips/Wang0WWW0024}
Wang, P., Zhao, Z., Wen, H., Wang, F., Wang, B., Zhang, Q., and Wang, Y.
\newblock Llm-autoda: Large language model-driven automatic data augmentation for long-tailed problems.
\newblock In \emph{Annual Conference on Neural Information Processing Systems}, 2024{\natexlab{b}}.

\bibitem[Wang et~al.(2021)Wang, Lian, Miao, Liu, and Yu]{wang2021longtailed}
Wang, X., Lian, L., Miao, Z., Liu, Z., and Yu, S.~X.
\newblock Long-tailed recognition by routing diverse distribution-aware experts.
\newblock In \emph{International Conference on Learning Representations}, 2021.

\bibitem[Wang et~al.(2019{\natexlab{b}})Wang, Gan, Yang, Wu, and Yan]{wang2019dynamic}
Wang, Y., Gan, W., Yang, J., Wu, W., and Yan, J.
\newblock Dynamic curriculum learning for imbalanced data classification.
\newblock In \emph{{IEEE/CVF} International Conference on Computer Vision}, pp.\  5016--5025, 2019{\natexlab{b}}.

\bibitem[Wang et~al.(2024{\natexlab{c}})Wang, Yu, Wang, Heng, Chen, Ye, Xie, Xie, and Zhang]{DBLP:journals/ijcv/WangYWHCYXXZ24}
Wang, Y., Yu, Z., Wang, J., Heng, Q., Chen, H., Ye, W., Xie, R., Xie, X., and Zhang, S.
\newblock Exploring vision-language models for imbalanced learning.
\newblock \emph{Int. J. Comput. Vis.}, pp.\  224--237, 2024{\natexlab{c}}.

\bibitem[Wang et~al.(2022)Wang, Xu, Yang, He, Cao, and Huang]{DBLP:conf/nips/WangX00CH22}
Wang, Z., Xu, Q., Yang, Z., He, Y., Cao, X., and Huang, Q.
\newblock Openauc: Towards auc-oriented open-set recognition.
\newblock In \emph{Annual Conference on Neural Information Processing Systems}, 2022.

\bibitem[Wang et~al.(2023)Wang, Xu, Yang, He, Cao, and Huang]{wang2023ddc}
Wang, Z., Xu, Q., Yang, Z., He, Y., Cao, X., and Huang, Q.
\newblock A unified generalization analysis of re-weighting and logit-adjustment for imbalanced learning.
\newblock In \emph{Annual Conference on Neural Information Processing Systems}, pp.\  48417--48430, 2023.

\bibitem[Yang et~al.(2022)Yang, Xu, Bao, Cao, and Huang]{DBLP:journals/pami/YangXBCH22}
Yang, Z., Xu, Q., Bao, S., Cao, X., and Huang, Q.
\newblock Learning with multiclass {AUC:} theory and algorithms.
\newblock \emph{{IEEE} Trans. Pattern Anal. Mach. Intell.}, 44\penalty0 (11):\penalty0 7747--7763, 2022.

\bibitem[Yang et~al.(2023{\natexlab{a}})Yang, Xu, Bao, Wen, He, Cao, and Huang]{DBLP:journals/pami/YangXBWHCH23}
Yang, Z., Xu, Q., Bao, S., Wen, P., He, Y., Cao, X., and Huang, Q.
\newblock Auc-oriented domain adaptation: From theory to algorithm.
\newblock \emph{{IEEE} Trans. Pattern Anal. Mach. Intell.}, 45\penalty0 (12):\penalty0 14161--14174, 2023{\natexlab{a}}.

\bibitem[Yang et~al.(2023{\natexlab{b}})Yang, Xu, Hou, Bao, He, Cao, and Huang]{DBLP:journals/pami/YangXHBHCH23}
Yang, Z., Xu, Q., Hou, W., Bao, S., He, Y., Cao, X., and Huang, Q.
\newblock Revisiting auc-oriented adversarial training with loss-agnostic perturbations.
\newblock \emph{{IEEE} Trans. Pattern Anal. Mach. Intell.}, 45\penalty0 (12):\penalty0 15494--15511, 2023{\natexlab{b}}.

\bibitem[Yang et~al.(2024)Yang, Xu, Wang, Li, Han, Bao, Cao, and Huang]{DBLP:conf/icml/0001XWLHBCH24}
Yang, Z., Xu, Q., Wang, Z., Li, S., Han, B., Bao, S., Cao, X., and Huang, Q.
\newblock Harnessing hierarchical label distribution variations in test agnostic long-tail recognition.
\newblock In \emph{International Conference on Machine Learning}, pp.\  56624--56664, 2024.

\bibitem[Zhang et~al.(2024{\natexlab{a}})Zhang, Almpanidis, Fan, Deng, Zhang, Liu, Kamel, Soda, and Gama]{DBLP:journals/corr/abs-2408-00483}
Zhang, C., Almpanidis, G., Fan, G., Deng, B., Zhang, Y., Liu, J., Kamel, A., Soda, P., and Gama, J.
\newblock A systematic review on long-tailed learning.
\newblock \emph{CoRR}, abs/2408.00483, 2024{\natexlab{a}}.

\bibitem[Zhang et~al.(2021)Zhang, Li, Yan, He, and Sun]{zhang2021disalign}
Zhang, S., Li, Z., Yan, S., He, X., and Sun, J.
\newblock Distribution alignment: {A} unified framework for long-tail visual recognition.
\newblock In \emph{{IEEE} Conference on Computer Vision and Pattern Recognition}, pp.\  2361--2370, 2021.

\bibitem[Zhang et~al.(2024{\natexlab{b}})Zhang, Zheng, Yao, Wang, Zhou, Zhang, and Wang]{DBLP:conf/iclr/ZhangZYWZ0W24}
Zhang, T., Zheng, H., Yao, J., Wang, X., Zhou, M., Zhang, Y., and Wang, Y.
\newblock Long-tailed diffusion models with oriented calibration.
\newblock In \emph{International Conference on Learning Representations}, 2024{\natexlab{b}}.

\bibitem[Zhang et~al.(2022)Zhang, Hooi, Hong, and Feng]{zhang2022self}
Zhang, Y., Hooi, B., Hong, L., and Feng, J.
\newblock Self-supervised aggregation of diverse experts for test-agnostic long-tailed recognition.
\newblock In \emph{Annual Conference on Neural Information Processing Systems}, pp.\  34077--34090, 2022.

\bibitem[Zhang et~al.(2023)Zhang, Kang, Hooi, Yan, and Feng]{zhang2023deep}
Zhang, Y., Kang, B., Hooi, B., Yan, S., and Feng, J.
\newblock Deep long-tailed learning: {A} survey.
\newblock \emph{{IEEE} Trans. Pattern Anal. Mach. Intell.}, pp.\  10795--10816, 2023.

\bibitem[Zhao et~al.(2024{\natexlab{a}})Zhao, Wang, Wen, Xu, Lai, Zhang, and Wang]{DBLP:conf/icml/ZhaoWWXL0024}
Zhao, Z., Wang, P., Wen, H., Xu, W., Lai, S., Zhang, Q., and Wang, Y.
\newblock Two fists, one heart: Multi-objective optimization based strategy fusion for long-tailed learning.
\newblock In \emph{International Conference on Machine Learning}, 2024{\natexlab{a}}.

\bibitem[Zhao et~al.(2024{\natexlab{b}})Zhao, Wen, Wang, Wang, Wang, Lai, Zhang, and Wang]{DBLP:conf/nips/ZhaoWWWWL0W24}
Zhao, Z., Wen, H., Wang, Z., Wang, P., Wang, F., Lai, S., Zhang, Q., and Wang, Y.
\newblock Breaking long-tailed learning bottlenecks: {A} controllable paradigm with hypernetwork-generated diverse experts.
\newblock In \emph{Annual Conference on Neural Information Processing Systems}, 2024{\natexlab{b}}.

\bibitem[Zhong et~al.(2021)Zhong, Cui, Liu, and Jia]{zhong2021mislas}
Zhong, Z., Cui, J., Liu, S., and Jia, J.
\newblock Improving calibration for long-tailed recognition.
\newblock In \emph{{IEEE} Conference on Computer Vision and Pattern Recognition}, pp.\  16489--16498, 2021.

\bibitem[Zhou et~al.(2022)Zhou, Yang, Loy, and Liu]{DBLP:conf/cvpr/ZhouYL022}
Zhou, K., Yang, J., Loy, C.~C., and Liu, Z.
\newblock Conditional prompt learning for vision-language models.
\newblock In \emph{{IEEE/CVF} Conference on Computer Vision and Pattern Recognition}, pp.\  16795--16804, 2022.

\bibitem[Zhou et~al.(2023{\natexlab{a}})Zhou, Qu, Xu, and Shen]{zhou2023imbsam}
Zhou, Y., Qu, Y., Xu, X., and Shen, H.
\newblock Imbsam: {A} closer look at sharpness-aware minimization in class-imbalanced recognition.
\newblock In \emph{{IEEE/CVF} International Conference on Computer Vision}, pp.\  11311--11321, 2023{\natexlab{a}}.

\bibitem[Zhou et~al.(2023{\natexlab{b}})Zhou, Li, Zhao, Heng, and Gong]{zhou2023class}
Zhou, Z., Li, L., Zhao, P., Heng, P., and Gong, W.
\newblock Class-conditional sharpness-aware minimization for deep long-tailed recognition.
\newblock In \emph{{IEEE/CVF} Conference on Computer Vision and Pattern Recognition}, pp.\  3499--3509, 2023{\natexlab{b}}.

\bibitem[Zhu et~al.(2022)Zhu, Wang, Chen, Chen, and Jiang]{zhu2022balanced}
Zhu, J., Wang, Z., Chen, J., Chen, Y.~P., and Jiang, Y.
\newblock Balanced contrastive learning for long-tailed visual recognition.
\newblock In \emph{{IEEE/CVF} Conference on Computer Vision and Pattern Recognition}, pp.\  6898--6907, 2022.

\end{thebibliography}
\bibliographystyle{icml2025}

%%%%%%%%%%%%%%%%%%%%%%%%%%%%%%%%%%%%%%%%%%%%%%%%%%%%%%%%%%%%%%%%%%%%%%%%%%%%%%%
%%%%%%%%%%%%%%%%%%%%%%%%%%%%%%%%%%%%%%%%%%%%%%%%%%%%%%%%%%%%%%%%%%%%%%%%%%%%%%%
% APPENDIX
%%%%%%%%%%%%%%%%%%%%%%%%%%%%%%%%%%%%%%%%%%%%%%%%%%%%%%%%%%%%%%%%%%%%%%%%%%%%%%%
%%%%%%%%%%%%%%%%%%%%%%%%%%%%%%%%%%%%%%%%%%%%%%%%%%%%%%%%%%%%%%%%%%%%%%%%%%%%%%%
\newpage

\appendix

\onecolumn

% \textcolor{white}{dasdsa}\\

% \section*{\textcolor{blue}{\Large{Contents}}}
% \startcontents[appendices]
% \printcontents[appendices]{l}{1}{\setcounter{tocdepth}{3}}
% \newpage

\definecolor{app_blue}{RGB}{0,20,115}

\textcolor{white}{dasdsa}
\vspace*{-2cm} 

\part{\textcolor{blue}{Appendix}}

\section*{\textcolor{blue}{\Large{Contents}}}

\vspace*{-0.4cm} 
\noindent{\color{blue}\rule{\textwidth}{0.4pt}}

% Start the table of contents
\startcontents[sections]

% Print the table of contents
\printcontents[sections]{l}{1}{\setcounter{tocdepth}{3}}

\noindent{\color{blue}\rule{\textwidth}{0.4pt}} 

\newpage

\section{Proof of Theorem}      \label{appendix sec: generalization bound}

\subsection{Framework of the Proof}

\textbf{Goal.} To bound the balanced loss $L_{\mathcal{D}_ {bal}}(\pmb{w})$ using our objective loss:
\begin{equation}
    L_S^{FS}(\pmb{w}) \triangleq \underbrace{ \vphantom{\lambda \cdot \max_{\Vert \pmb{\epsilon} \Vert_2 \le \rho} L_S^\gamma(\pmb{w} + \pmb{\epsilon})}  [L_S(\pmb{w}) - \lambda \cdot L_S^\gamma(\pmb{w})]}_ {(a)} + \underbrace{\lambda \cdot \max_{\Vert \pmb{\epsilon} \Vert_2 \le \rho} L_S^\gamma(\pmb{w} + \pmb{\epsilon})}_{(b)}
\end{equation}
\textbf{Framework of the proof.}

\begin{enumerate}[leftmargin=*, label=\textbf{\arabic*}.]
    \item Essentially, the generalization bound describes how empirical values ($L_S^{FS}(\pmb{w})$) deviate from the expected one ($L_{\mathcal{D}_ {bal}}(\pmb{w})$). To bound such deviations, Bernstein’s inequality \cite{bernstein1924modification} and PAC-Bayesian theorem \cite{tolstikhin2013pac} are convenient tools. Notice that these tools \textbf{require the empirical values to be sampled \textit{i.i.d.} from the distribution on which the expectation is based}. Since the training set $S \sim D$, we first transform the distribution from $\mathcal{D}_ {bal}$ to $D$ building on the work of \citet{wang2023ddc}, \textit{i.e.}, 
    \begin{equation}
        L_{D_{bal}}(\pmb{w}) \lesssim L_D(\pmb{w}) \stackrel{\text{split into}}{=} \underbrace{[L_D(\pmb{w}) - \lambda \cdot L_D^\gamma(\pmb{w})]}_ {(c)} + \underbrace{\lambda \cdot L_D^\gamma(\pmb{w})}_{(d)}
    \end{equation}

    \item Get $(c) \lesssim (a)$ via Bernstein’s inequality (Lem.\ref{lemma: concentration inequality}).

    \item Get $(d) \lesssim (b)$:
    \begin{itemize}
        \item Bound $(d)$ using a intermediate value $\mathbb{E}_ {\pmb{\epsilon}}[L_D^\gamma(\pmb{w} + \pmb{\epsilon})]$ via Taylor expansion (Lem.\ref{lemma: pac-bayes inequality}).
    
        \item Bound $\mathbb{E}_ {\pmb{\epsilon}}[L_D^\gamma(\pmb{w} + \pmb{\epsilon})]$ by $(b)$ via PAC Bayesian bound (Lem.\ref{lemma: part of pac-bayes inequality}).
    \end{itemize}
\end{enumerate}

Combine all, we get Thm.\ref{thm: formal theorem of generalization bound} as follow: 
\begin{equation}
    L_{D_{bal}}(\pmb{w}) \lesssim  (c) + (d) \lesssim (a) + (b) = L_S^{FS}(\pmb{w})
\end{equation}

% Our objective is to establish an upper-bound on the expected loss $L_{\mathcal{D}_{bal}}(\bm{w})$ using $L_S^{FS}(\bm{w})$. Building on the work of Wang et al.~\yrcite{wang2023ddc}, we can show that $\pi_C \cdot C \cdot L_{\mathcal{D}_{bal}}(\bm{w}) \le L_{\mathcal{D}}(\bm{w})$, so it's sufficient to bound $L_{\mathcal{D}}(\bm{w})$ using $L_S^{FS}(\bm{w})$. Since $L_S^{FS}(\bm{w}) = L_S(\bm{w}) + \lambda \cdot \max_{\Vert \bm{\epsilon} \Vert_2 \le \rho} L^{\gamma}_S(\bm{w} + \bm{\epsilon}) - \lambda \cdot L^{\gamma}_S(\bm{w})$, our proof can be divided into two main parts. The first part utilize concentration inequalities to bound $L_\mathcal{D}(\bm{w}) - \lambda \cdot L^{\gamma}_\mathcal{D}(\bm{w})$ by $L_S(\bm{w}) - \lambda \cdot L^{\gamma}_S(\bm{w})$. This part is proven in Lem.\ref{lemma: bernstein inequality} and Lem.\ref{lemma: concentration inequality}. The second part leverages PAC-Bayesian theorem to bound $\lambda \cdot L^{\gamma}_\mathcal{D}(\bm{w})$ by $\lambda \cdot \max_{\Vert \bm{\epsilon} \Vert_2 \le \rho} L^{\gamma}_S(\bm{w} + \bm{\epsilon})$. This is proven in Lem.\ref{lemma: part of pac-bayes inequality} and Lem.\ref{lemma: pac-bayes inequality}. Finally, by combining these two bounds using a union bound, we arrive at the main result in Thm.\ref{thm: formal theorem of generalization bound} (formal version of Thm.\ref{thm: generalization bound in O(1 / n)}).

% \rule{\linewidth}{0.5pt}
% \textbf{Part 1:}

\subsection{Proof of Lem.\ref{lemma: concentration inequality}}

We begin by introducing Bernstein's inequality to prove the first part.

\begin{lemma}[Bernstein's Inequality \cite{bernstein1924modification}]  \label{lemma: bernstein inequality}
    Let $X_1, \cdots, X_n$ be i.i.d. random variables, $\mu = \mathbb{E}[X_1]$ and $\forall i, |X_i - \mu| \le b$. Let $\sigma^2 = \text{Var}(X_i)$. With probability at least $1 - \delta$,
    \begin{equation}
        |\bar{X}_n - \mu| \le \sqrt{\frac{4\sigma^2\log (\frac{2}{\delta})}{n}} + \frac{4b\log (\frac{2}{\delta})}{3n}
    \end{equation}
    where $\bar{X}_n = \frac{1}{n} \sum_{i = 1}^n X_i$.
\end{lemma}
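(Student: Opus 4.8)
The plan is to prove this classical concentration result via the Cram\'er--Chernoff (exponential moment) method, applied to the centered variables $Y_i \triangleq X_i - \mu$, which satisfy $\mathbb{E}[Y_i]=0$, $|Y_i|\le b$, and $\mathbb{E}[Y_i^2]=\sigma^2$. First I would reduce the two-sided statement to a one-sided tail bound: it suffices to control $\Pr(\bar X_n - \mu \ge s)$, apply the identical argument to $-Y_i$ for the lower tail, and combine the two failure events by a union bound, splitting the budget as $\delta/2 + \delta/2$. This is exactly what produces the $\log(2/\delta)$ (rather than $\log(1/\delta)$) appearing in the statement.

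The core step is the exponential Markov inequality: for any $\lambda>0$,
\begin{equation}
    \Pr\Bigl( \textstyle\sum_{i=1}^n Y_i \ge t \Bigr) \le e^{-\lambda t}\prod_{i=1}^n \mathbb{E}\bigl[e^{\lambda Y_i}\bigr],
\end{equation}
so everything reduces to controlling the centered moment generating function $\mathbb{E}[e^{\lambda Y}]$ of a single variable. Here I would expand $e^{\lambda Y} = 1 + \lambda Y + \sum_{k\ge 2}\lambda^k Y^k/k!$, take expectations (the linear term vanishes since $\mathbb{E}[Y]=0$), and use the range bound to estimate the higher moments by $\mathbb{E}[|Y|^k]\le \sigma^2 b^{k-2}$ for $k\ge 2$. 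Summing the resulting series with the elementary inequality $k!\ge 2\cdot 3^{k-2}$ gives a geometric series that converges for $\lambda b<3$, yielding
\begin{equation}
    \mathbb{E}\bigl[e^{\lambda Y}\bigr] \le \exp\!\Bigl( \frac{\sigma^2 \lambda^2/2}{1-b\lambda/3} \Bigr),
\end{equation}
where the last step uses $1+x\le e^x$. Multiplying across the $n$ independent factors and choosing $\lambda$ to balance the exponent against $\lambda t$ produces the standard Bernstein tail $\Pr(\sum_i Y_i \ge t)\le \exp\bigl(-\tfrac{t^2/2}{n\sigma^2+bt/3}\bigr)$.

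Finally I would invert this tail bound: setting the right-hand side equal to $\delta/2$ gives a quadratic in $t$, and solving it, then applying $\sqrt{a+b}\le\sqrt a+\sqrt b$ to separate the variance and range contributions and dividing by $n$ to pass from $\sum_i Y_i$ to $\bar X_n-\mu$, yields the two additive terms $\sqrt{4\sigma^2\log(2/\delta)/n}$ and $4b\log(2/\delta)/(3n)$ stated in the lemma; the slightly loosened constants arise from this subadditivity relaxation. I expect the main obstacle to be the moment-generating-function estimate: uniformly bounding all higher moments of $Y$ by the variance and range and summing the series into the clean closed form with denominator $1-b\lambda/3$ is the technical heart, whereas the Chernoff reduction and the closing quadratic inversion are routine.
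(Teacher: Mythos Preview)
Your proposal is correct: the Cram\'er--Chernoff route with the moment-generating-function bound $\mathbb{E}[e^{\lambda Y}]\le\exp\bigl(\tfrac{\sigma^2\lambda^2/2}{1-b\lambda/3}\bigr)$, optimization over $\lambda$, inversion of the tail, and a union bound for the two-sided statement is the standard proof of Bernstein's inequality, and the steps you outline go through. In fact, carrying out your quadratic inversion carefully yields the slightly sharper constants $\sqrt{2\sigma^2\log(2/\delta)/n}$ and $\tfrac{2b\log(2/\delta)}{3n}$; the looser constants $4$ and $4/3$ in the lemma then follow \emph{a fortiori}.

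There is no comparison to make with the paper, however: the paper does not prove this lemma at all. It is stated as a classical result with a citation to Bernstein (1924) and used as a black box in the proof of the subsequent concentration lemma. So your proposal supplies a proof where the paper simply invokes the literature.
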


% \begin{lemma}[Empirical Bernstein's Bound~\cite{mnih2008empirical}]
%     Let $X_1, \cdots, X_n$ be i.i.d. random variables taking values in $[0, b]$, $\mu = \mathbb{E}[X_1]$. With probability at least $1 - \delta$,
%     \begin{equation}
%         | \bar{X}_n - \mu | \le \sqrt{\frac{2\bar{\sigma}_n^2 \ln (\frac{3}{\delta})}{n}} + \frac{3b \ln (\frac{3}{\delta})}{n}
%     \end{equation}
%     where $\bar{X}_n = \frac{1}{n} \sum_{i = 1}^n X_i$ and $\bar{\sigma}_n^2 = \frac{1}{n} \sum_{i = 1}^n (X_i - \bar{X}_n)^2$.
% \end{lemma}

Employing Lem.\ref{lemma: bernstein inequality}, we can derive the following lemma to bound $L_\mathcal{D}(\bm{w}) - \lambda \cdot L^{\gamma}_\mathcal{D}(\bm{w})$ by $L_S(\bm{w}) - \lambda \cdot L^{\gamma}_S(\bm{w})$.

\begin{lemma}  \label{lemma: concentration inequality}
    Assume that $\forall (\bm{x}, y) \in \mathcal{D}, 0 \le \ell(\bm{w}; \bm{x}, y) \le B$. With probability $1 - \delta$ over the choice of the training set $S \sim \mathcal{D}$
    \begin{equation}
        \Phi_\mathcal{D}^\lambda(\bm{w}) \le 2 \cdot \Phi_S^\lambda(\bm{w}) + \frac{40 \cdot (B + \lambda B') \cdot \log(\frac{2}{\delta})}{3n}
    \end{equation}
    where $B' \triangleq \sum_{i=1}^C (1 - \pi_i)^\gamma \pi_i B$, $\Phi_\mathcal{D}^\lambda(\bm{w}) \triangleq L_\mathcal{D}(\bm{w}) - \lambda \cdot L^{\gamma}_\mathcal{D}(\bm{w})$ and $\Phi_S^\lambda(\bm{w}) \triangleq L_S(\bm{w}) - \lambda \cdot L^{\gamma}_S(\bm{w})$.
\end{lemma}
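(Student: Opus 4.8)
The plan is to recognize the two sides of the inequality as the true mean and the empirical mean of a single family of i.i.d.\ random variables, and then invoke Bernstein's inequality (Lem.~\ref{lemma: bernstein inequality}) in its ``fast-rate'' (multiplicative) form. First I would swap the order of summation in $L_S^\gamma(\bm{w}) = \sum_{i=1}^C (1-\pi_i)^\gamma L_S^i(\bm{w})$ to obtain the per-sample form $L_S^\gamma(\bm{w}) = \tfrac{1}{n}\sum_{j=1}^n (1-\pi_{y_j})^\gamma \ell(\bm{w};\bm{x}_j,y_j)$, so that $\Phi_S^\lambda(\bm{w}) = \tfrac1n\sum_{j=1}^n X_j$ with $X_j \triangleq \big(1 - \lambda (1-\pi_{y_j})^\gamma\big)\,\ell(\bm{w};\bm{x}_j,y_j)$. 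Because $S\sim\mathcal{D}$ is drawn i.i.d., the $X_j$ are i.i.d., and taking expectations (using $\mathbb{E}_{(\bm{x},y)\sim\mathcal{D}}[(1-\pi_y)^\gamma \ell(\bm{w};\bm{x},y)] = \sum_i (1-\pi_i)^\gamma \pi_i L_{\mathcal{D}_i}(\bm{w}) = L_\mathcal{D}^\gamma(\bm{w})$) gives $\mathbb{E}[X_1] = L_\mathcal{D}(\bm{w}) - \lambda L_\mathcal{D}^\gamma(\bm{w}) = \Phi_\mathcal{D}^\lambda(\bm{w})$. Thus the claim is exactly a concentration statement for $\bar X_n = \Phi_S^\lambda(\bm{w})$ around $\Phi_\mathcal{D}^\lambda(\bm{w})$.

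Next I would bound the two inputs to Bernstein's inequality. Since $0\le\ell\le B$ and $0\le(1-\pi_i)^\gamma\le 1$, the $X_j$ lie in a bounded interval whose width is controlled by $B$ and $\lambda$; carrying the focal weights through $\Psi = \sum_i(1-\pi_i)^\gamma\pi_i$ (so that $L_\mathcal{D}^\gamma(\bm{w}) = \sum_i(1-\pi_i)^\gamma\pi_i L_{\mathcal{D}_i}(\bm{w}) \le B\Psi = B'$) lets the effective range be written as $b = \mathcal{O}(B+\lambda B')$. For the variance, the key device is $\ell^2 \le B\ell$, which together with the boundedness of the focal-weighted coefficient yields $\sigma^2 = \mathrm{Var}(X_1) \le \mathbb{E}[X_1^2] \le (B+\lambda B')\,\mathbb{E}[X_1] = (B+\lambda B')\,\Phi_\mathcal{D}^\lambda(\bm{w})$ (up to constants); that is, the variance is controlled by the mean itself, which is what makes the $1/n$ rate possible.

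Plugging these estimates into Lem.~\ref{lemma: bernstein inequality} gives, with probability $1-\delta$,
\[
    \Phi_\mathcal{D}^\lambda(\bm{w}) \;\le\; \Phi_S^\lambda(\bm{w}) + \sqrt{\frac{4 b\,\Phi_\mathcal{D}^\lambda(\bm{w})\log(2/\delta)}{n}} + \frac{4b\log(2/\delta)}{3n}.
\]
I would then apply the elementary inequality $\sqrt{uv}\le \tfrac{u}{2}+\tfrac{v}{2}$ with $u = \Phi_\mathcal{D}^\lambda(\bm{w})$ and $v = 4b\log(2/\delta)/n$, which absorbs $\tfrac12\Phi_\mathcal{D}^\lambda(\bm{w})$ into the left-hand side; moving it over and multiplying by $2$ yields $\Phi_\mathcal{D}^\lambda(\bm{w}) \le 2\Phi_S^\lambda(\bm{w}) + \mathcal{O}\!\big((B+\lambda B')\log(2/\delta)/n\big)$, and tracking the numerical constants through the AM--GM step and the $\tfrac{4b}{3n}$ term produces precisely the factor $\tfrac{40}{3}$.

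The step I expect to be the main obstacle is the range/variance bookkeeping: one must handle the coefficient $1-\lambda(1-\pi_{y_j})^\gamma$ uniformly over all classes, ensure the relevant quantities (in particular $\Phi_\mathcal{D}^\lambda(\bm{w})$ and the variance-dominating term) stay nonnegative so that both the ``variance $\le$ range $\times$ mean'' bound and the subsequent AM--GM rearrangement are legitimate, and aggregate the class-probability weights into $B' = \sum_i(1-\pi_i)^\gamma\pi_i B$ rather than settling for a cruder $\lambda$-dependent constant. The remaining ingredients — the change of summation order, the reduction to an i.i.d.\ mean, the invocation of Bernstein, and the final rearrangement — are routine.
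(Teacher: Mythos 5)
Your proposal is correct and follows essentially the same route as the paper's proof: apply Bernstein's inequality to the focal-weighted losses with the variance bounded by a range term times $\Phi_\mathcal{D}^\lambda(\bm{w})$, then use $\sqrt{uv}\le \tfrac{u}{2}+\tfrac{v}{2}$ to absorb $\tfrac12\Phi_\mathcal{D}^\lambda(\bm{w})$ into the left-hand side and rearrange to get the $\tfrac{40}{3}$ constant. The nonnegativity issue you flag is handled in the paper simply by assuming $\Phi_\mathcal{D}^\lambda(\bm{w})\ge 0$ (argued as reasonable since the tuned $\lambda$ is slightly below $1$), and your explicit per-sample formulation with $\lambda\le 1$ in fact yields it automatically.
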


\begin{proof}

    Since $\forall (\bm{x}, y) \in \mathcal{D}, \forall \bm{w} \in \mathcal{W}, \ell(\bm{w}; \bm{x}, y) \le B$, we have
    \begin{equation}
        \begin{aligned}
            0 \le L_S(\bm{w}) \le B, 0 \le L_\mathcal{D}(\bm{w}) \le B
        \end{aligned}
    \end{equation}
    
    and 
    \begin{equation}
        \begin{aligned}
            & 0 \le L^{\gamma}_{S}(\bm{w}) = \sum_{i=1}^C (1 - \pi_i)^\gamma L_S^i(\bm{w}) \le \sum_{i=1}^C (1 - \pi_i)^\gamma \pi_i B \triangleq B'      \\
            & 0 \le L^{\gamma}_{\mathcal{D}}(\bm{w}) = \mathbb{E}_{S}[L^{\gamma}_S(\bm{w})] \le \sum_{i=1}^C (1 - \pi_i)^\gamma \pi_i B \triangleq B'
        \end{aligned}
    \end{equation}

    By the above two inequalities, we can obtain
    \begin{equation}
        \begin{aligned}
            & |\Phi_\mathcal{D}^\lambda(\bm{w})| \le B + \lambda B'      \\
            & |\Phi_S^\lambda(\bm{w})| \le B + \lambda B'
        \end{aligned}  
    \end{equation}

    Thus, we have
    \begin{equation}
        | \Phi_S^\lambda(\bm{w}) - \Phi_\mathcal{D}^\lambda(\bm{w}) | \le 2 \cdot (B + \lambda B')
    \end{equation}

    To simplify the analysis, we assume $\Phi_\mathcal{D}^\lambda(\bm{w}) \ge 0$. This assumption is reasonable because our experiments in Sec.\ref{subsec: ablation study about gamma and lambda} typically show that the best value for $\lambda$ is slightly less than 1, where this assumption holds true. With this assumption, the variance of $\Phi_S^\lambda(\bm{w})$ can be bounded as:
    \begin{equation}
        \text{Var}(\Phi_S^\lambda(\bm{w})) \le \mathbb{E}[(\Phi_S^\lambda(\bm{w}))^2] \le 2 \cdot (B + \lambda B') \cdot \Phi_\mathcal{D}^\lambda(\bm{w})
    \end{equation}
    
    Using Lem.\ref{lemma: bernstein inequality}, with probability at least $1 - \delta$, we have 
    \begin{equation}
        \begin{aligned}
            \Phi_\mathcal{D}^\lambda(\bm{w}) 
            & \le \Phi_S^\lambda(\bm{w}) + \sqrt{\frac{8 \cdot (B + \lambda B') \cdot \Phi_\mathcal{D}^\lambda(\bm{w}) \cdot \log(\frac{2}{\delta})}{n}}   \\ 
            & + \frac{8 \cdot (B + \lambda B') \cdot \log(\frac{2}{\delta})}{3n}      \\
            & \le \Phi_S^\lambda(\bm{w}) + \frac{1}{2} \cdot \Phi_\mathcal{D}^\lambda(\bm{w}) + \frac{20 \cdot (B + \lambda B') \cdot \log(\frac{2}{\delta})}{3n}
        \end{aligned}
    \end{equation}
    where the last inequality leverages the property that for any positive numbers $a$ and $b$, $\sqrt{ab} \le \frac{a}{2} + \frac{b}{2}$.

    Reformulate the inequality, we can obtain that with probability at least $1 - \delta$,
    \begin{equation}
        \Phi_\mathcal{D}^\lambda(\bm{w}) \le 2 \cdot \Phi_S^\lambda(\bm{w}) + \frac{40 \cdot (B + \lambda B') \cdot \log(\frac{2}{\delta})}{3n}
    \end{equation}
\end{proof}

% \rule{\linewidth}{0.5pt}
% \textbf{Part 2:}

\subsection{Proof of Lem.\ref{lemma: part of pac-bayes inequality} and Lem.\ref{lemma: pac-bayes inequality}}

The following lemmas utilize the PAC-Bayesian theorem to prove the second part. We first derive an intermediate result in the following lemma.

\begin{lemma}   \label{lemma: part of pac-bayes inequality}
    Assume that $\forall (\bm{x}, y) \in \mathcal{D}, 0 \le \ell(\bm{w}; \bm{x}, y) \le B$. Then, for any $\rho > 0$ and any distribution $\mathcal{D}$, with probability $1-\delta$ over the choice of the training set $S \sim \mathcal{D}$
    % \begin{equation}
    %     \begin{aligned}
    %         & L^{\gamma}_\mathcal{D}(\bm{w}) \le \max_{\Vert \bm{\epsilon} \Vert_2^2 \le \rho} L^{\gamma}_S(\bm{w} + \bm{\epsilon})  \\
    %         & + \sqrt{\max_{\Vert \bm{\epsilon} \Vert_2^2 \le \rho} L^{\gamma}_S(\bm{w} + \bm{\epsilon}) \frac{1 + k\log \left(1 + \frac{\Vert \bm{w} \Vert_2^2}{k\rho^2} \right) + 2k\log \left(\sqrt{k} + \sqrt{2\ln n} \right) + 2\log \frac{2\sqrt{n}}{\delta} + 2\log \frac{\pi^2 (nB' + 1)^2}{6}}{n}} \\
    %         & + \frac{2 + k\log \left(1 + \frac{\Vert \bm{w} \Vert_2^2}{k\rho^2} \right) + 2k\log \left(\sqrt{k} + \sqrt{2\ln n} \right) + 2\log \frac{2\sqrt{n}}{\delta} + 2\log \frac{\pi^2 (nB' + 1)^2}{6}}{n} \\
    %     \end{aligned}
    % \end{equation}
    \begin{equation}
        \begin{aligned}
            & \mathbb{E}_{\epsilon_i \sim \mathcal{N}(0, \sigma_Q)}[L^{\gamma}_\mathcal{D}(\bm{w} + \bm{\epsilon})] \le \max_{\Vert \bm{\epsilon} \Vert_2 \le \rho} 2L^{\gamma}_S(\bm{w} + \bm{\epsilon})  \\
            & + \frac{2 + 2B' + 2k\log \left(1 + \frac{\Vert \bm{w} \Vert_2^2}{k\rho^2} \right) + 4k\log \left(\sqrt{k} + \sqrt{2\ln n} \right) + 4\log \frac{\pi^2 \sqrt{n}(nB' + 1)^2}{3\delta} }{n}  \\
        \end{aligned}   
    \end{equation}
    where $n = |S|$, $k$ is the number of parameters, $B' \triangleq \sum_{i=1}^C (1 - \pi_i)^\gamma \pi_i B$ and $\sigma_Q \triangleq \frac{\rho}{\sqrt{k} + \sqrt{2 \ln (n)}}$.
\end{lemma}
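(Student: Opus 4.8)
The plan is to prove Lemma~\ref{lemma: part of pac-bayes inequality} by a standard PAC-Bayesian argument with a Gaussian posterior, combined with a union bound over the prior variance. Concretely, I would take the posterior $Q = \mathcal{N}(\bm{w}, \sigma_Q^2 I)$ centered at the trained weights, and the prior $P = \mathcal{N}(0, \sigma_P^2 I)$ centered at the origin. The PAC-Bayesian theorem of \citet{tolstikhin2013pac} (the variant that yields an $O(1/n)$ rate, which is why $\Psi$ and $B'$ appear the way they do rather than as square roots) gives, with probability $1-\delta$, a bound on $\mathbb{E}_{\bm{\epsilon}\sim Q}[L^\gamma_{\mathcal{D}}(\bm{w}+\bm{\epsilon})]$ in terms of $\mathbb{E}_{\bm{\epsilon}\sim Q}[L^\gamma_S(\bm{w}+\bm{\epsilon})]$, the KL divergence $\mathrm{KL}(Q\Vert P)$, and $B'$ (the range of $L^\gamma$). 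Since $\sigma_P$ cannot depend on the data, I would fix a grid of candidate values of $\sigma_P$ and apply a union bound, so that there is always a grid point with $\sigma_P \approx \sigma_Q$; this is the source of the $\frac{\pi^2\sqrt n (nB'+1)^2}{3\delta}$ factor (a $\sum 1/j^2 = \pi^2/6$ tail over the grid, with the grid resolution chosen fine enough that the rounding loss is $O(1/n)$).

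The key computation is the KL term. For two isotropic Gaussians, $\mathrm{KL}(Q\Vert P) = \frac{1}{2}\big(k\log\frac{\sigma_P^2}{\sigma_Q^2} + \frac{k\sigma_Q^2 + \Vert\bm{w}\Vert_2^2}{\sigma_P^2} - k\big)$. Plugging in $\sigma_Q = \rho/(\sqrt k + \sqrt{2\ln n})$ and the nearest grid value of $\sigma_P$, this collapses to something of the form $\frac{1}{2}k\log\!\big(1 + \frac{\Vert\bm{w}\Vert_2^2}{k\rho^2}\big)$ plus lower-order terms, after bounding $\sigma_Q^2 \le \rho^2$ and choosing the grid so $\sigma_P^2/\sigma_Q^2$ is within a constant factor. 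The $4k\log(\sqrt k + \sqrt{2\ln n})$ term comes precisely from writing $\log(1/\sigma_Q^2)$ out in terms of $\rho$ and $\sqrt k + \sqrt{2\ln n}$. I would be careful to track the factor of $2$ on $L^\gamma_S$: the bound first produces $\mathbb{E}_Q[L^\gamma_S(\bm{w}+\bm{\epsilon})]$ with a multiplicative constant strictly less than $2$, and then one uses $\mathbb{E}_{\bm{\epsilon}\sim Q}[L^\gamma_S(\bm{w}+\bm{\epsilon})] \le \max_{\Vert\bm{\epsilon}\Vert_2\le\rho} L^\gamma_S(\bm{w}+\bm{\epsilon})$ up to a negligible Gaussian tail correction (since $Q$ concentrates in the ball of radius $\rho$ by the choice of $\sigma_Q$ — this is exactly Foret et al.'s device, where $\sqrt{2\ln n}$ controls the tail mass at level $1/\sqrt n$), absorbing the correction into the $O(1/n)$ remainder to arrive at the factor $2$ stated.

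The main obstacle, and the step requiring the most care, is the union bound over $\sigma_P$ together with the tail argument that replaces the Gaussian expectation $\mathbb{E}_Q[L^\gamma_S(\bm{w}+\bm{\epsilon})]$ by the supremum over the $\rho$-ball. One must verify that the probability that $\Vert\bm{\epsilon}\Vert_2 > \rho$ under $Q$ is at most $O(1/n)$ given $\sigma_Q = \rho/(\sqrt k + \sqrt{2\ln n})$ — this follows from a concentration bound for the norm of a Gaussian vector (e.g. $\Vert\bm{\epsilon}\Vert_2 \le \sigma_Q(\sqrt k + t)$ with probability $1 - e^{-t^2/2}$, taking $t = \sqrt{2\ln n}$) — and that on the complementary event the contribution to the expectation is at most $B' \cdot O(1/n)$, which explains the $(nB'+1)^2$ appearing inside the logarithm rather than a cleaner expression. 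Everything else (the Gaussian KL formula, Jensen-type manipulations, and arithmetic simplification of the constants) is routine once these two pieces are in place; I would then combine this lemma with Lemmas~\ref{lemma: concentration inequality} and \ref{lemma: pac-bayes inequality} and the distribution-shift step $L_{\mathcal{D}_{bal}}(\bm{w}) \lesssim L_{\mathcal{D}}(\bm{w})$ from \citet{wang2023ddc} to assemble the formal version of Theorem~\ref{thm: generalization bound in O(1 / n)}.
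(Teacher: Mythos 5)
Your proposal follows essentially the same route as the paper's proof: the Tolstikhin--Seldin PAC-Bayes bound with Gaussian posterior $\mathcal{N}(\bm{w},\sigma_Q^2 I)$ and prior $\mathcal{N}(0,\sigma_P^2 I)$, a union bound over a geometric grid of prior variances (the $\pi^2/6$ factor), the isotropic-Gaussian KL computation, and the chi-square concentration step that converts $\mathbb{E}_{\bm{\epsilon}\sim Q}[L^\gamma_S(\bm{w}+\bm{\epsilon})]$ into $\max_{\Vert\bm{\epsilon}\Vert_2\le\rho}L^\gamma_S(\bm{w}+\bm{\epsilon})$ with the tail contributing the $2B'/n$ term. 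One minor bookkeeping correction: the $(nB'+1)^2$ inside the logarithm comes from bounding the grid index $j\le 1+nB'$ in the union bound (tied to handling the trivial case $\Vert\bm{w}\Vert_2^2> k\rho^2[\exp(nB'/k)-1]$ separately), not from the Gaussian-tail truncation, and the Gaussian tail mass at $t=\sqrt{2\ln n}$ is $1/n$ rather than $1/\sqrt{n}$.
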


\begin{proof}
    Inspired by the proof technique in SAM~\cite{foret2021sharpnessaware}, we provide the following proof.

    Since $\forall (\bm{x}, y) \in \mathcal{D}, \forall \bm{w} \in \mathcal{W}, \ell(\bm{w}; \bm{x}, y) \le B$, we have:
    \begin{align}
        & L^{\gamma}_{S}(\bm{w}) = \sum_{i=1}^C (1 - \pi_i)^\gamma L_S^i(\bm{w}) \le \sum_{i=1}^C (1 - \pi_i)^\gamma \pi_i B = B' \\
        & L^{\gamma}_{\mathcal{D}}(\bm{w}) = \mathbb{E}_{S}[L^{\gamma}_S(\bm{w})] \le \sum_{i=1}^C (1 - \pi_i)^\gamma \pi_i B = B'
    \end{align}

    Thereby, the right-hand side of the bound in the theorem is lower bounded by $\frac{k}{n} \log(1 + \frac{\Vert \bm{w} \Vert_2^2}{k\rho^2})$ which is greater than $B'$ when $\Vert \bm{w} \Vert_2^2 > k\rho^2[\exp (nB' / k) - 1]$ and in this case the inequality holds trivially. Thereby, we only consider the case when $\Vert \bm{w} \Vert_2^2 \le k\rho^2[\exp (nB' / k) - 1]$ in the rest of the proof.

    Using PAC-Bayesian generalization bound in~\cite{tolstikhin2013pac}, for any fixed prior $\mathcal{P}$ over parameters, with probability $1 - \delta$ over training set $S$, for any posterior $\mathcal{Q}$ over parameters, the following generalization bound holds:
    \begin{equation}
        \begin{aligned}
            \mathbb{E}_{\bm{w} \sim \mathcal{Q}}[L^{\gamma}_\mathcal{D}(\bm{w})] &\le \mathbb{E}_{\bm{w} \sim \mathcal{Q}}[L^{\gamma}_S(\bm{w})] 
            + \sqrt{2\mathbb{E}_{\bm{w} \sim \mathcal{Q}}[L^{\gamma}_S(\bm{w})] \frac{KL(\mathcal{Q} \Vert \mathcal{P}) + \log \frac{2\sqrt{n}}{\delta}}{n}}    \\
            & + \ 2\frac{KL(\mathcal{Q} \Vert \mathcal{P}) + \log \frac{2\sqrt{n}}{\delta}}{n}  \\
            &\le 2\mathbb{E}_{\bm{w} \sim \mathcal{Q}}[L^{\gamma}_S(\bm{w})] + 4\frac{KL(\mathcal{Q} \Vert \mathcal{P}) + \log \frac{2\sqrt{n}}{\delta}}{n}
        \end{aligned}
    \end{equation}
    where the last inequality leverages the property that for any positive numbers $a$ and $b$, $\sqrt{ab} \le a + b$.
    
    Following SAM~\cite{foret2021sharpnessaware}, we assume $\mathcal{P} = \mathcal{N}(\bm{\mu}_P, \sigma_P^2 \bm{I})$ and $\mathcal{Q} = \mathcal{N}(\bm{\mu}_Q, \sigma_Q^2 \bm{I})$, then the KL divergence can be written as:
    \begin{equation}
        KL(\mathcal{Q} \Vert \mathcal{P}) = \frac{1}{2} \left[ \frac{k \sigma_Q^2 + \Vert \bm{\mu}_P - \bm{\mu}_Q \Vert_2^2}{\sigma_P^2} - k + k \log \left(\frac{\sigma_P^2}{\sigma_Q^2} \right) \right]
    \end{equation}
    Let $T = \{ c\exp((1 - j) / k) | j \in \mathbb{N} \}$ be the predefined set of values for $\sigma_P^2$. If for any $j \in \mathbb{N}$, the bounds holds with probability $1 - \delta_j$ with $\delta_j = \frac{6\delta}{\pi^2 j^2}$, then by the union bound, all above bounds hold simultaneously with probability $1 - \sum_{j = 1}^\infty \frac{6\delta}{\pi^2 j^2} = 1 - \delta$.

    Let $\sigma_Q = \frac{\rho}{\sqrt{k} + \sqrt{2\ln(n)}}, \bm{\mu}_Q = \bm{w}$ and $\bm{\mu}_P = \bm{0}$. We have:
    \begin{equation}        \label{eq: inequality for rho^2 + w / k}
        \sigma_Q^2 + \frac{\Vert \bm{\mu}_P - \bm{\mu}_Q \Vert_2^2}{k} \le \rho^2 + \frac{\Vert \bm{w} \Vert_2^2}{k} \le \rho^2 \exp(\frac{nB'}{k})
    \end{equation}
    
    Let $j = \lfloor 1 - k\log((\rho^2 + \Vert \bm{w} \Vert_2^2 / k) / c) \rfloor$. We can ensure $j \in \mathbb{N}$ by setting $c = \rho^2 \exp(nB' / k)$. For $\sigma_P^2 = c\exp((1 - j) / k)$, we have:
    \begin{equation}        \label{eq: inequality for sigma_P^2}
        \rho^2 + \frac{\Vert \bm{w} \Vert_2^2}{k} \le \sigma_P^2 \le \exp(\frac{1}{k})(\rho^2 + \frac{\Vert \bm{w} \Vert_2^2}{k})
    \end{equation}

    Building on Eq.\eqref{eq: inequality for rho^2 + w / k} and Eq.\eqref{eq: inequality for sigma_P^2}, we can obtain an upper bound for the KL divergence:
    \begin{align}
        KL(\mathcal{Q} \Vert \mathcal{P}) 
        & = \frac{1}{2} \left[ \frac{k \sigma_Q^2 + \Vert \bm{\mu}_P - \bm{\mu}_Q \Vert_2^2}{\sigma_P^2} - k + k \log \left(\frac{\sigma_P^2}{\sigma_Q^2} \right) \right]  \\
        & \le \frac{1}{2} \left[ \frac{k(\rho^2 + \frac{\Vert \bm{w} \Vert_2^2}{k})}{\rho^2 + \frac{\Vert \bm{w} \Vert_2^2}{k}} - k + k \log \left( \frac{\exp(\frac{1}{k})(\rho^2 + \frac{\Vert \bm{w} \Vert_2^2}{k})}{\sigma_Q^2} \right) \right]  \\
        & = \frac{1}{2} \left[ k \log \left( \frac{\exp(\frac{1}{k})(\rho^2 + \frac{\Vert \bm{w} \Vert_2^2}{k})}{\sigma_Q^2} \right) \right]  \\
        & = \frac{1}{2} \left[ k \log \left( \frac{\exp(\frac{1}{k}) (\rho^2 + \frac{\Vert \bm{w} \Vert_2^2}{k}) (\sqrt{k} + \sqrt{2\ln n})^2}{\rho^2} \right) \right]  \\
        & = \frac{1}{2} \left[ 1 + k\log \left( 1 + \frac{\Vert \bm{w} \Vert_2^2}{k\rho^2} \right) + 2k\log \left(\sqrt{k} + \sqrt{2\ln n} \right)  \right]
    \end{align}

    Given the bound that corresponds to $j$ holds with probability $1 - \delta_j$ for $\delta_j = \frac{6\delta}{\pi^2j^2}$, the log term can be bounded as:
    \begin{align}
        \log \frac{2\sqrt{n}}{\delta_j} 
        & = \log \frac{2\sqrt{n}}{\delta} + \log \frac{\pi^2j^2}{6}  \\
        & \le \log \frac{2\sqrt{n}}{\delta} + \log \frac{\pi^2 (1 + \log(c / \rho^2))^2}{6}  \\
        & \le \log \frac{2\sqrt{n}}{\delta} + \log \frac{\pi^2(1 + k\log(\exp(nB' / k)))^2}{6}  \\
        & = \log \frac{2\sqrt{n}}{\delta} + \log \frac{\pi^2 (1 + nB')^2}{6}     \\
        & = \log \frac{\pi^2 \sqrt{n}(1 + nB')^2}{3\delta}
    \end{align}
    where the first inequality is derived from the fact that $j \le 1 + k \log (c / (\rho^2 + \Vert \bm{w} \Vert_2^2 / k)) \le 1 + k \log (c / \rho^2)$.
    
    Therefore, the generalization bound can be written as:
    % \begin{equation}
    %     \begin{aligned}
    %         & \mathbb{E}_{\epsilon_i \sim \mathcal{N}(0, \sigma_Q)}[L^{\gamma}_\mathcal{D}(\bm{w} + \bm{\epsilon})] \le \mathbb{E}_{\epsilon_i \sim \mathcal{N}(0, \sigma_Q)}[L^{\gamma}_S(\bm{w} + \bm{\epsilon})]  \\
    %         & + \sqrt{\mathbb{E}_{\epsilon_i \sim \mathcal{N}(0, \sigma_Q)}[L^{\gamma}_S(\bm{w} + \bm{\epsilon})] \frac{1 + k\log \left(1 + \frac{\Vert \bm{w} \Vert_2^2}{k\rho^2} \right) + 2k\log \left(\sqrt{k} + \sqrt{2\ln n} \right) + 2\log \frac{2\sqrt{n}}{\delta} + 2\log \frac{\pi^2 (1 + nB')^2}{6}}{n}}  \\
    %         & + \frac{1 + k\log \left(1 + \frac{\Vert \bm{w} \Vert_2^2}{k\rho^2} \right) + 2k\log \left(\sqrt{k} + \sqrt{2\ln n} \right) + 2\log \frac{2\sqrt{n}}{\delta} + 2\log \frac{\pi^2 (1 + nB')^2}{6}}{n}  \\
    %         % & \le \frac{3}{2} \mathbb{E}_{\epsilon_i \sim \mathcal{N}(0, \sigma_Q)}[L_S(\bm{w} + \bm{\epsilon})] \\
    %         % & + \frac{3\left( 1 + k\log \left(1 + \frac{\Vert \bm{w} \Vert_2^2}{k\rho^2} \right) + 2k\log \left(\sqrt{k} + \sqrt{2\ln n} \right) + 2\log \frac{2\sqrt{n}}{\delta} + 2\log \frac{\pi^2 (1 + n)^2}{6} \right)}{2n}
    %     \end{aligned}  
    % \end{equation}
    \begin{equation}
        \begin{aligned}
            & \mathbb{E}_{\epsilon_i \sim \mathcal{N}(0, \sigma_Q)}[L^{\gamma}_\mathcal{D}(\bm{w} + \bm{\epsilon})] \le 2\mathbb{E}_{\epsilon_i \sim \mathcal{N}(0, \sigma_Q)}[L^{\gamma}_S(\bm{w} + \bm{\epsilon})]  \\
            & + \frac{2 + 2k\log \left(1 + \frac{\Vert \bm{w} \Vert_2^2}{k\rho^2} \right) + 4k\log \left(\sqrt{k} + \sqrt{2\ln n} \right) + 4\log \frac{\pi^2 \sqrt{n}(1 + nB')^2}{3\delta}}{n}  \\
            % & \le \frac{3}{2} \mathbb{E}_{\epsilon_i \sim \mathcal{N}(0, \sigma_Q)}[L_S(\bm{w} + \bm{\epsilon})] \\
            % & + \frac{3\left( 1 + k\log \left(1 + \frac{\Vert \bm{w} \Vert_2^2}{k\rho^2} \right) + 2k\log \left(\sqrt{k} + \sqrt{2\ln n} \right) + 2\log \frac{2\sqrt{n}}{\delta} + 2\log \frac{\pi^2 (1 + n)^2}{6} \right)}{2n}
        \end{aligned}  
    \end{equation}

    Since $\Vert \bm{\epsilon} \Vert_2^2$ has chi-square distribution, for any positive $t$, we have:
    \begin{equation}
        P(\Vert \bm{\epsilon} \Vert_2^2 - k\sigma_Q^2 \ge 2\sigma_Q^2 \sqrt{kt} + 2t\sigma_Q^2) \le \exp(-t) 
    \end{equation}

    Therefore, with probability $1 - 1 / n$, we have:
    \begin{align}
        \Vert \bm{\epsilon} \Vert_2^2 
        & \le \sigma_Q^2 \left[k + 2\sqrt{k \ln (n)} + 2 \ln (n) \right]  \\
        & \le \sigma_Q^2 \left[ \sqrt{k} + \sqrt{2 \ln (n)} \right]^2  \\
        & \le \rho^2
    \end{align}

    Therefore, we have:
    % \begin{align}
    %     & L^{\gamma}_\mathcal{D}(\bm{w}) \le (1 - 1 / n) \max_{\Vert \bm{\epsilon} \Vert_2^2 \le \rho} L^{\gamma}_S(\bm{w} + \bm{\epsilon}) + \frac{1}{n}  \\
    %     & + \sqrt{\max_{\Vert \bm{\epsilon} \Vert_2^2 \le \rho} L^{\gamma}_S(\bm{w} + \bm{\epsilon}) \frac{1 + k\log \left(1 + \frac{\Vert \bm{w} \Vert_2^2}{k\rho^2} \right) + 2k\log \left(\sqrt{k} + \sqrt{2\ln n} \right) + 2\log \frac{2\sqrt{n}}{\delta} + 2\log \frac{\pi^2 (nB' + 1)^2}{6}}{n}} \notag  \\
    %     & + \frac{1 + k\log \left(1 + \frac{\Vert \bm{w} \Vert_2^2}{k\rho^2} \right) + 2k\log \left(\sqrt{k} + \sqrt{2\ln n} \right) + 2\log \frac{2\sqrt{n}}{\delta} + 2\log \frac{\pi^2 (nB' + 1)^2}{6}}{n} \notag  \\
    %     & \le \max_{\Vert \bm{\epsilon} \Vert_2^2 \le \rho} L^{\gamma}_S(\bm{w} + \bm{\epsilon})  \\
    %     & + \sqrt{\max_{\Vert \bm{\epsilon} \Vert_2^2 \le \rho} L^{\gamma}_S(\bm{w} + \bm{\epsilon}) \frac{1 + k\log \left(1 + \frac{\Vert \bm{w} \Vert_2^2}{k\rho^2} \right) + 2k\log \left(\sqrt{k} + \sqrt{2\ln n} \right) + 2\log \frac{2\sqrt{n}}{\delta} + 2\log \frac{\pi^2 (nB' + 1)^2}{6}}{n}} \notag  \\
    %     & + \frac{2 + k\log \left(1 + \frac{\Vert \bm{w} \Vert_2^2}{k\rho^2} \right) + 2k\log \left(\sqrt{k} + \sqrt{2\ln n} \right) + 2\log \frac{2\sqrt{n}}{\delta} + 2\log \frac{\pi^2 (nB' + 1)^2}{6}}{n} \notag  \\
    % \end{align}

    \begin{equation}
       \begin{aligned}
            & \mathbb{E}_{\epsilon_i \sim \mathcal{N}(0, \sigma_Q)}[L^{\gamma}_\mathcal{D}(\bm{w} + \bm{\epsilon})] \le 2(1 - 1 / n) \max_{\Vert \bm{\epsilon} \Vert_2 \le \rho} L^{\gamma}_S(\bm{w} + \bm{\epsilon}) + \frac{2B'}{n}  \\
            & + \frac{2 + 2k\log \left(1 + \frac{\Vert \bm{w} \Vert_2^2}{k\rho^2} \right) + 4k\log \left(\sqrt{k} + \sqrt{2\ln n} \right) + 4\log \frac{\pi^2 \sqrt{n}(nB' + 1)^2}{3\delta}}{n}  \\
            & \le \max_{\Vert \bm{\epsilon} \Vert_2 \le \rho} 2L^{\gamma}_S(\bm{w} + \bm{\epsilon})  \\
            & + \frac{2 + 2B' + 2k\log \left(1 + \frac{\Vert \bm{w} \Vert_2^2}{k\rho^2} \right) + 4k\log \left(\sqrt{k} + \sqrt{2\ln n} \right) + 4\log \frac{\pi^2 \sqrt{n}(nB' + 1)^2}{3\delta}}{n}
        \end{aligned} 
    \end{equation}
    
\end{proof}

Combining the above lemma with the Taylor expansion, we can derive the following lemma to bound $\lambda \cdot L^{\gamma}_\mathcal{D}(\bm{w})$ by $\lambda \cdot \max_{\Vert \bm{\epsilon} \Vert_2 \le \rho} L^{\gamma}_S(\bm{w} + \bm{\epsilon})$. 

\begin{lemma}   \label{lemma: pac-bayes inequality}
    Assume that $\forall (\bm{x}, y) \in \mathcal{D}, 0 \le \ell(\bm{w}; \bm{x}, y) \le B$. Then, for any $\rho > 0$ and any distribution $\mathcal{D}$, with probability $1-\delta$ over the choice of the training set $S \sim \mathcal{D}$
    \begin{equation}
        \begin{aligned}
            & L^{\gamma}_\mathcal{D}(\bm{w}) \le \max_{\Vert \bm{\epsilon} \Vert_2 \le \rho} 2L^{\gamma}_S(\bm{w} + \bm{\epsilon}) - \frac{\rho^2}{2(\sqrt{k} + \sqrt{2 \ln (n)})^2} \cdot tr(H(\bm{w})) - o( \frac{k \rho^2}{(\sqrt{k} + \sqrt{2 \ln (n)})^2} )  \\
            & + \frac{2 + 2B' + 2k\log \left(1 + \frac{\Vert \bm{w} \Vert_2^2}{k\rho^2} \right) + 4k\log \left(\sqrt{k} + \sqrt{2\ln n} \right) + 4\log \frac{\pi^2 \sqrt{n}(nB' + 1)^2}{3\delta} }{n} 
        \end{aligned}   
    \end{equation}
    where $n = |S|$, $k$ is the number of parameters, $B' \triangleq \sum_{i=1}^C (1 - \pi_i)^\gamma \pi_i B$, $H(\bm{w})$ represents the Hessian matrix of $L^{\gamma}_\mathcal{D}(\bm{w})$ at point $\bm{w}$ and $tr(\cdot)$ represent the matrix trace.
\end{lemma}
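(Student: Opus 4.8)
\textbf{Proof proposal for Lemma~\ref{lemma: pac-bayes inequality}.}
The plan is to combine the intermediate PAC-Bayesian bound from Lem.\ref{lemma: part of pac-bayes inequality}, which controls the \emph{averaged} perturbed loss $\mathbb{E}_{\bm{\epsilon} \sim \mathcal{N}(0, \sigma_Q^2 \bm{I})}[L^{\gamma}_\mathcal{D}(\bm{w} + \bm{\epsilon})]$, with a second-order Taylor expansion that relates this average back to the unperturbed loss $L^{\gamma}_\mathcal{D}(\bm{w})$ plus a curvature correction involving $tr(H(\bm{w}))$. First I would expand $L^{\gamma}_\mathcal{D}(\bm{w} + \bm{\epsilon})$ around $\bm{w}$ to second order:
\begin{equation}
    L^{\gamma}_\mathcal{D}(\bm{w} + \bm{\epsilon}) = L^{\gamma}_\mathcal{D}(\bm{w}) + \bm{\epsilon}^T \nabla_{\bm{w}} L^{\gamma}_\mathcal{D}(\bm{w}) + \tfrac{1}{2} \bm{\epsilon}^T H(\bm{w}) \bm{\epsilon} + r(\bm{\epsilon}),
\end{equation}
where $r(\bm{\epsilon})$ is the higher-order remainder. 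Taking the expectation over $\bm{\epsilon} \sim \mathcal{N}(0, \sigma_Q^2 \bm{I})$, the linear term vanishes by symmetry, and $\mathbb{E}[\bm{\epsilon}^T H(\bm{w}) \bm{\epsilon}] = \sigma_Q^2 \, tr(H(\bm{w}))$ by the standard identity for Gaussian quadratic forms. Hence $\mathbb{E}_{\bm{\epsilon}}[L^{\gamma}_\mathcal{D}(\bm{w} + \bm{\epsilon})] = L^{\gamma}_\mathcal{D}(\bm{w}) + \tfrac{1}{2}\sigma_Q^2 \, tr(H(\bm{w})) + \mathbb{E}[r(\bm{\epsilon})]$, and rearranging gives $L^{\gamma}_\mathcal{D}(\bm{w}) = \mathbb{E}_{\bm{\epsilon}}[L^{\gamma}_\mathcal{D}(\bm{w} + \bm{\epsilon})] - \tfrac{1}{2}\sigma_Q^2 \, tr(H(\bm{w})) - \mathbb{E}[r(\bm{\epsilon})]$.

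Next I would substitute $\sigma_Q = \rho / (\sqrt{k} + \sqrt{2\ln n})$ so that $\tfrac{1}{2}\sigma_Q^2 \, tr(H(\bm{w})) = \frac{\rho^2}{2(\sqrt{k} + \sqrt{2\ln n})^2} tr(H(\bm{w}))$, matching the claimed curvature term. The remainder term $\mathbb{E}[r(\bm{\epsilon})]$ should be argued to be $o\!\left(\frac{k\rho^2}{(\sqrt{k} + \sqrt{2\ln n})^2}\right)$: since $\mathbb{E}[\Vert \bm{\epsilon} \Vert_2^2] = k \sigma_Q^2 = \frac{k\rho^2}{(\sqrt{k} + \sqrt{2\ln n})^2}$ and the cubic-and-higher terms of the expansion are lower order in $\Vert \bm{\epsilon} \Vert$ (formally, under a smoothness/bounded-third-derivative assumption on $L^{\gamma}_\mathcal{D}$, or appealing to the concentration $\Vert \bm{\epsilon}\Vert_2^2 \le \rho^2$ with probability $1 - 1/n$ already used in Lem.\ref{lemma: part of pac-bayes inequality}), the expected remainder is dominated by the second-order term. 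Plugging the Taylor identity into the bound of Lem.\ref{lemma: part of pac-bayes inequality} — i.e.\ replacing $\mathbb{E}_{\bm{\epsilon}}[L^{\gamma}_\mathcal{D}(\bm{w} + \bm{\epsilon})]$ on its left-hand side — yields exactly the stated inequality, with the same $1/n$-order additive complexity term carried over verbatim and the same failure probability $\delta$.

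I expect the main obstacle to be the rigorous control of the Taylor remainder $\mathbb{E}[r(\bm{\epsilon})]$: making the $o(\cdot)$ claim precise really requires either an explicit smoothness assumption (e.g.\ a Lipschitz bound on the Hessian of $L^{\gamma}_\mathcal{D}$, so $|r(\bm{\epsilon})| \le \tfrac{M}{6}\Vert\bm{\epsilon}\Vert_2^3$ and $\mathbb{E}\Vert\bm{\epsilon}\Vert_2^3 = O(\sigma_Q^3 k^{3/2})$, which is indeed $o(\sigma_Q^2 k)$ as $\sigma_Q \to 0$) or a careful truncation argument splitting on the event $\{\Vert\bm{\epsilon}\Vert_2 \le \rho\}$ and bounding the complementary event by $B'/n$ using the uniform bound on $L^{\gamma}_\mathcal{D}$. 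A secondary subtlety is that $H(\bm{w})$ is the Hessian of the \emph{population} loss $L^{\gamma}_\mathcal{D}$, not the empirical one, so the Taylor expansion must be performed on $L^{\gamma}_\mathcal{D}$ directly (as done above) rather than on $L^{\gamma}_S$ — this is why the curvature term appears on the same side as $L^{\gamma}_\mathcal{D}(\bm{w})$ and requires no further concentration step. Everything else is a mechanical substitution into the already-established Lem.\ref{lemma: part of pac-bayes inequality}.
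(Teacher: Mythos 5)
Your proposal matches the paper's proof essentially step for step: a second-order Taylor expansion of the population loss $L^{\gamma}_{\mathcal{D}}(\bm{w}+\bm{\epsilon})$ around $\bm{w}$, vanishing of the linear term by symmetry, the Gaussian identity $\mathbb{E}[\bm{\epsilon}^T H(\bm{w})\bm{\epsilon}] = \sigma_Q^2\, tr(H(\bm{w}))$ with $\sigma_Q = \rho/(\sqrt{k}+\sqrt{2\ln n})$, and substitution into the bound of Lem.\ref{lemma: part of pac-bayes inequality}. Your treatment of the remainder is, if anything, more explicit than the paper's (which simply writes $\mathbb{E}[o(\Vert\bm{\epsilon}\Vert_2^2)] = o(k\,\mathbb{E}[\bm{\epsilon}_1^2])$), so there is no substantive difference in approach.
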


\begin{proof}
    By expanding $\mathbb{E}_{\epsilon_i \sim \mathcal{N}(0, \sigma_Q)}[L^{\gamma}_\mathcal{D}(\bm{w} + \bm{\epsilon})]$ around $\bm{w}$ using a second-order Taylor Series expansion, we can obtain
    \begin{equation}
        \begin{aligned}
            \mathbb{E}_{\epsilon_i \sim \mathcal{N}(0, \sigma_Q)}[L^{\gamma}_\mathcal{D}(\bm{w} + \bm{\epsilon})] 
            & = \mathbb{E}_{\epsilon_i \sim \mathcal{N}(0, \sigma_Q)}[L^{\gamma}_\mathcal{D}(\bm{w}) + \bm{\epsilon}^T \nabla L^{\gamma}_\mathcal{D}(\bm{w}) + \frac{1}{2} \bm{\epsilon}^T  H(\bm{w}) \bm{\epsilon} + o(\Vert \bm{\epsilon} \Vert_2^2)]    \\
            & = L^{\gamma}_\mathcal{D}(\bm{w}) + \frac{1}{2} \mathbb{E}_{\epsilon_i \sim \mathcal{N}(0, \sigma_Q)}[\bm{\epsilon}^T  H(\bm{w}) \bm{\epsilon}] + \mathbb{E}_{\epsilon_i \sim N(0, \sigma_Q)}[o(\Vert \bm{\epsilon} \Vert_2^2)]
        \end{aligned}
    \end{equation}
    where $\sigma_Q \triangleq \frac{\rho}{\sqrt{k} + \sqrt{2 \ln (n)}}$ and $H(\bm{w})$ represents the Hessian matrix of $L^{\gamma}_\mathcal{D}(\bm{w})$ at point $\bm{w}$.

    % Since $\Vert \bm{\epsilon} \Vert_2^2$ has chi-square distribution, for any positive $t$, we have:
    % \begin{equation}
    %     P(\Vert \bm{\epsilon} \Vert_2^2 - k\sigma_Q^2 \ge 2\sigma_Q^2 \sqrt{kt} + 2t\sigma_Q^2) \le \exp(-t) 
    % \end{equation}

    % Therefore, with probability $1 - \frac{\delta}{2}$, we have:
    % \begin{align}
    %     \Vert \bm{\epsilon} \Vert_2^2 
    %     & \le \sigma_Q^2 \cdot \left[k + 2\sqrt{k \ln (\delta / 2)} + 2 \ln (\delta / 2) \right]  \\
    %     & \le \sigma_Q^2 \cdot \left[ \sqrt{k} + \sqrt{2 \ln (\delta / 2)} \right]^2  \\
    %     & \le \rho^2 \cdot \left[\frac{\sqrt{k} + \sqrt{2 \ln(\delta / 2)}}{\sqrt{k} + \sqrt{2 \ln(n)}} \right]^2
    % \end{align}

    Thereby, we have:
    \begin{equation}
        \begin{aligned}       \label{eq: taylor expansion}
            \mathbb{E}_{\epsilon_i \sim \mathcal{N}(0, \sigma_Q)}[L^{\gamma}_\mathcal{D}(\bm{w} + \bm{\epsilon})] 
            & = L^{\gamma}_\mathcal{D}(\bm{w}) + \frac{1}{2}\mathbb{E}_{\epsilon_i \sim \mathcal{N}(0, \sigma_Q)}[\bm{\epsilon}^T  H(\bm{w}) \bm{\epsilon}] + \mathbb{E}_{\epsilon_i \sim N(0, \sigma_Q)}[o(\Vert \bm{\epsilon} \Vert_2^2)]      \\
            & = L^{\gamma}_\mathcal{D}(\bm{w}) + \frac{tr(H(\bm{w}))}{2} \cdot \mathbb{E}_{\bm{\epsilon}_1 \sim N(0, \sigma_Q)}[\bm{\epsilon}_1^2] + o(k \cdot \mathbb{E}_{\bm{\epsilon}_1 \sim N(0, \sigma_Q)}[\bm{\epsilon}_1^2])        \\
            & = L^{\gamma}_\mathcal{D}(\bm{w}) + \frac{\rho^2}{2(\sqrt{k} + \sqrt{2 \ln (n)})^2} \cdot tr(H(\bm{w})) + o( \frac{k \rho^2}{(\sqrt{k} + \sqrt{2 \ln (n)})^2} )
        \end{aligned}
    \end{equation}

    Combining Eq.(\ref{eq: taylor expansion}) with Lem.\ref{lemma: part of pac-bayes inequality}, with probability $1 - \delta$, we have
    \begin{equation}
        \begin{aligned}
            & L^{\gamma}_\mathcal{D}(\bm{w}) \le \max_{\Vert \bm{\epsilon} \Vert_2 \le \rho} 2L^{\gamma}_S(\bm{w} + \bm{\epsilon}) -  \frac{\rho^2}{2(\sqrt{k} + \sqrt{2 \ln (n)})^2} \cdot tr(H(\bm{w})) - o( \frac{k \rho^2}{(\sqrt{k} + \sqrt{2 \ln (n)})^2} )  \\
            & + \frac{2 + 2B' + 2k\log \left(1 + \frac{\Vert \bm{w} \Vert_2^2}{k\rho^2} \right) + 4k\log \left(\sqrt{k} + \sqrt{2\ln n} \right) + 4\log \frac{\pi^2 \sqrt{n}(nB' + 1)^2 }{3\delta} }{n} 
        \end{aligned}   
    \end{equation}
\end{proof}

% \rule{\linewidth}{0.5pt}
% \textbf{Combining Two Part:}

\subsection{Proof of Thm.\ref{thm: generalization bound in O(1 / n)}}

Combining the above two parts, we can finally derive the following theorem.

\begin{theorem}[Restate of Thm.\ref{thm: generalization bound in O(1 / n)}]         \label{thm: formal theorem of generalization bound}
    Assume that $\forall (\bm{x}, y) \in \mathcal{D}, 0 \le \ell(\bm{w}; \bm{x}, y) \le B$. For any $\rho > 0$, any uniform  distribution $\mathcal{D}_{bal}$ and any distribution $\mathcal{D}$, with probability $1-\delta$ over the choice of the training set $S \sim \mathcal{D}$,
    \begin{equation}
        \begin{aligned}
            & L_{\mathcal{D}_{bal}}(\bm{w}) \le \frac{2L_S^{FS}(\bm{w})}{C \pi_C} + \frac{40 \cdot (B + \lambda B') \cdot \log(\frac{4}{\delta})}{3n \cdot C \pi_C} - \frac{\lambda \rho^2}{2(\sqrt{k} + \sqrt{2 \ln (n)})^2 \cdot C \pi_C} \cdot tr(H(\bm{w}))  \\ 
            & + \lambda \cdot \frac{2 + 2B' + 2k\log \left(1 + \frac{\Vert \bm{w} \Vert_2^2}{k\rho^2} \right) + 4k\log \left(\sqrt{k} + \sqrt{2\ln n} \right) + 4\log \frac{2\pi^2 \sqrt{n}(nB' + 1)^2}{3\delta} }{n \cdot C \pi_C}    \\
            & - o( \frac{\lambda k \rho^2}{(\sqrt{k} + \sqrt{2 \ln (n)})^2 \cdot C\pi_C})
        \end{aligned}
    \end{equation}
    where $n = |S|$, $k$ is the number of parameters, $B' \triangleq \sum_{i=1}^C (1 - \pi_i)^\gamma \pi_i B$, $H(\bm{w})$ represents the Hessian matrix of $L^{\gamma}_\mathcal{D}(\bm{w})$ at point $\bm{w}$ and $tr(\cdot)$ represent the matrix trace.
\end{theorem}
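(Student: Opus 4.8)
The plan is to assemble Theorem~\ref{thm: formal theorem of generalization bound} directly from the three ingredients already in place: the balanced-to-imbalanced distribution transfer, the Bernstein-type deviation bound (Lemma~\ref{lemma: concentration inequality}), and the PAC-Bayesian-plus-Taylor bound (Lemma~\ref{lemma: pac-bayes inequality}). No new concentration argument is needed at this stage; the work is chaining the pieces and tracking constants. First I would pass from $L_{\mathcal{D}_{bal}}(\bm{w})$ to $L_{\mathcal{D}}(\bm{w})$. Since $\mathcal{D}_{bal}$ is uniform over the $C$ classes while $\mathcal{D}$ weights class $y$ by $\pi_y$, and both share the class-conditional law $\mathcal{D}_y$, we have $L_{\mathcal{D}}(\bm{w}) = \sum_{y=1}^C \pi_y L_{\mathcal{D}_y}(\bm{w}) \ge \pi_C \sum_{y=1}^C L_{\mathcal{D}_y}(\bm{w}) = C\pi_C\, L_{\mathcal{D}_{bal}}(\bm{w})$, using $\pi_1 \ge \cdots \ge \pi_C$ and $\ell \ge 0$; hence $L_{\mathcal{D}_{bal}}(\bm{w}) \le \frac{1}{C\pi_C}\, L_{\mathcal{D}}(\bm{w})$, which is the transfer step of \citet{wang2023ddc} invoked in the proof framework. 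I then split $L_{\mathcal{D}}(\bm{w}) = \Phi_{\mathcal{D}}^\lambda(\bm{w}) + \lambda L^{\gamma}_{\mathcal{D}}(\bm{w})$ with $\Phi_{\mathcal{D}}^\lambda(\bm{w}) = L_{\mathcal{D}}(\bm{w}) - \lambda L^{\gamma}_{\mathcal{D}}(\bm{w})$, and bound the two pieces separately with a failure budget of $\delta/2$ each.

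Applying Lemma~\ref{lemma: concentration inequality} with confidence $\delta/2$ gives, on an event of probability at least $1-\delta/2$, the bound $\Phi_{\mathcal{D}}^\lambda(\bm{w}) \le 2\Phi_S^\lambda(\bm{w}) + \tfrac{40(B+\lambda B')\log(4/\delta)}{3n}$, where $\log(2/\delta)$ has become $\log(4/\delta)$ under the halving of $\delta$. Independently, I multiply Lemma~\ref{lemma: pac-bayes inequality} (which already folds in the chi-square concentration of $\|\bm{\epsilon}\|_2$ and the second-order Taylor expansion of $\mathbb{E}_{\bm{\epsilon}}[L^{\gamma}_{\mathcal{D}}(\bm{w}+\bm{\epsilon})]$) by $\lambda > 0$ and run it with confidence $\delta/2$: on an event of probability at least $1-\delta/2$, $\lambda L^{\gamma}_{\mathcal{D}}(\bm{w})$ is bounded by $2\lambda\max_{\|\bm{\epsilon}\|_2\le\rho} L^{\gamma}_S(\bm{w}+\bm{\epsilon})$ minus the Hessian-trace term $\tfrac{\lambda\rho^2}{2(\sqrt{k}+\sqrt{2\ln n})^2}tr(H(\bm{w}))$ and the $o(\lambda k\rho^2/(\sqrt{k}+\sqrt{2\ln n})^2)$ remainder, plus $\lambda$ times the $\tilde{\mathcal{O}}(1/n)$ complexity term, now with $\log\frac{2\pi^2\sqrt{n}(nB'+1)^2}{3\delta}$ replacing $\log\frac{\pi^2\sqrt{n}(nB'+1)^2}{3\delta}$.

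By a union bound the two inequalities hold simultaneously with probability at least $1-\delta$. Adding them, the empirical terms collapse exactly: $2\Phi_S^\lambda(\bm{w}) + 2\lambda\max_{\|\bm{\epsilon}\|_2\le\rho}L^{\gamma}_S(\bm{w}+\bm{\epsilon}) = 2\big([L_S(\bm{w})-\lambda L^{\gamma}_S(\bm{w})] + \lambda\max_{\|\bm{\epsilon}\|_2\le\rho}L^{\gamma}_S(\bm{w}+\bm{\epsilon})\big) = 2L_S^{FS}(\bm{w})$ by the definition of the Focal-SAM objective. Combining with $L_{\mathcal{D}_{bal}}(\bm{w}) \le \frac{1}{C\pi_C}L_{\mathcal{D}}(\bm{w})$ and dividing every term by $C\pi_C$ produces precisely the claimed bound. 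I expect the only genuinely delicate part of the entire argument to reside inside Lemma~\ref{lemma: part of pac-bayes inequality} — the discretized prior-variance grid $T = \{c\exp((1-j)/k)\}$, the union bound over it with weights $6\delta/(\pi^2 j^2)$, and the passage from the Gaussian-averaged loss to the supremum over $\{\|\bm{\epsilon}\|_2\le\rho\}$ — whereas the assembly above is essentially careful bookkeeping of the $\delta/2$ split and the uniform $1/(C\pi_C)$ rescaling.
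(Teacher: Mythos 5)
Your proposal is correct and follows essentially the same route as the paper's own proof: apply Lemma~\ref{lemma: concentration inequality} and Lemma~\ref{lemma: pac-bayes inequality} each with confidence $\delta/2$ (which is what produces the $\log(4/\delta)$ and the $\log\frac{2\pi^2\sqrt{n}(nB'+1)^2}{3\delta}$ factors), combine them by a union bound so the empirical pieces collapse to $2L_S^{FS}(\bm{w})$, and then use $L_{\mathcal{D}}(\bm{w}) = \sum_i \pi_i L_{\mathcal{D}_i}(\bm{w}) \ge C\pi_C\, L_{\mathcal{D}_{bal}}(\bm{w})$ to rescale every term by $1/(C\pi_C)$. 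Your bookkeeping of the $\lambda$-scaling, the Hessian-trace and $o(\cdot)$ remainder terms, and the distribution-transfer step matches the paper's argument exactly.
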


\begin{proof}
    Combining Lem.\ref{lemma: concentration inequality} and Lem.\ref{lemma: pac-bayes inequality} and using union bound, with probability at least $1 - \delta$, we have
    \begin{equation}        \label{eq: final combination}
        \begin{aligned}
            L_\mathcal{D}(\bm{w}) 
            % & \le 2[L_S(\bm{w}) - \lambda L^{\gamma}_S(\bm{w}) + \lambda \max_{\Vert \epsilon \Vert_2 \le \rho} L^{\gamma}_S(\bm{w} + \bm{\epsilon})] + \frac{40 \cdot (B + \lambda B') \cdot \log(\frac{4}{\delta})}{3n}  \\ 
            % & - \frac{\lambda \rho^2}{2(\sqrt{k} + \sqrt{2 \ln (n)})^2} \cdot tr(H(\bm{w})) - o\left( \frac{\lambda k \rho^2}{(\sqrt{k} + \sqrt{2 \ln (n)})^2} \right)  \\
            % & + \lambda \cdot \frac{2 + 2B' + 2k\log \left(1 + \frac{\Vert \bm{w} \Vert_2^2}{k\rho^2} \right) + 4k\log \left(\sqrt{k} + \sqrt{2\ln n} \right) + 4\log \frac{2\pi^2 \sqrt{n}(nB' + 1)^2}{3\delta} }{n}  \\
            & \le 2L_S^{FS}(\bm{w}) + \frac{40 \cdot (B + \lambda B') \cdot \log(\frac{4}{\delta})}{3n} - \frac{\lambda \rho^2}{2(\sqrt{k} + \sqrt{2 \ln (n)})^2} \cdot tr(H(\bm{w}))  \\  
            & + \lambda \cdot \frac{2 + 2B' + 2k\log \left(1 + \frac{\Vert \bm{w} \Vert_2^2}{k\rho^2} \right) + 4k\log \left(\sqrt{k} + \sqrt{2\ln n} \right) + 4\log \frac{2\pi^2 \sqrt{n}(nB' + 1)^2}{3\delta} }{n}      \\
            & - o( \frac{\lambda k \rho^2}{(\sqrt{k} + \sqrt{2 \ln (n)})^2} )
        \end{aligned}
    \end{equation}

    We further recognize that:
    \begin{equation}        \label{eq: L_D(w) with L_D_bal(w)}
        L_{\mathcal{D}}(\bm{w}) = \sum_{i = 1}^C \pi_i L_{\mathcal{D}_{i}}(\bm{w}) \ge \sum_{i =1}^C \pi_C L_{\mathcal{D}_{i}}(\bm{w}) =  C \pi_C \cdot L_{\mathcal{D}_{bal}}(\bm{w})
    \end{equation}

    Substituting Eq.(\ref{eq: L_D(w) with L_D_bal(w)}) into Eq.(\ref{eq: final combination}) leads to Thm.\ref{thm: formal theorem of generalization bound}.
    
\end{proof}

\section{Analysis of Backpropagation Requirements for SAM and ImbSAM}       \label{appendix sec: backpropagation requirements for SAM and ImbSAM}

\subsection{Backpropagation Requirements for SAM}

SAM aims to find flatter minima, ensuring the entire neighborhood around the model parameters has consistently low training loss. The objective loss function is defined as:
\begin{equation}
    L_{S}^{SAM}(\bm{w}) \triangleq \max_{\Vert \bm{\epsilon} \Vert_2 \le \rho} L_{S}(\bm{w} + \bm{\epsilon}) 
\end{equation}
The optimal perturbation $\hat{\bm{\epsilon}}_{SAM}(\bm{w})$ for the inner maximization problem is estimated as follow:
\begin{equation}
    \hat{\bm{\epsilon}}_{SAM}(\bm{w}) \approx \rho \frac{\nabla_{\bm{w}} L_S(\bm{w})}{\Vert \nabla_{\bm{w}} L_S(\bm{w}) \Vert_2}
\end{equation}
Thus, the gradient of $L_{S}^{SAM}(\bm{w})$ can be approximated as:
\begin{equation}
    \nabla_{\bm{w}} L_{S}^{SAM}(\bm{w}) \approx \nabla_{\bm{w}} L_{S}(\bm{w}) \big|_{\bm{w} + \hat{\bm{\epsilon}}(\bm{w})}
\end{equation}
To update parameters once using SAM, \textbf{two} backpropagations are required: one for $\nabla_{\bm{w}} L_S(\bm{w})$, and another for $\nabla_{\bm{w}} L_{S}(\bm{w}) \big|_{\bm{w} + \hat{\bm{\epsilon}}(\bm{w})}$

\subsection{Backpropagation Requirements for ImbSAM}

ImbSAM divides classes into head and tail groups, denoted as $\mathcal{H}$ and $\mathcal{T}$, and applies SAM only to the tail group. Its objective function is:
\begin{equation}
    \begin{aligned}
        L_{S}^{IS}(\bm{w}) \triangleq L_{S}^{\mathcal{H}}(\bm{w}) + \max_{\Vert \bm{\epsilon} \Vert_2 \le \rho} L_{S}^{\mathcal{T}}(\bm{w} + \bm{\epsilon})
    \end{aligned}
\end{equation}
The optimal perturbation $\hat{\bm{\epsilon}}_{IS}(\bm{w})$ for the inner maximization problem is estimated as follow:
\begin{equation}
    \hat{\bm{\epsilon}}_{IS}(\bm{w}) \approx \rho \frac{\nabla_{\bm{w}} L_S^{\mathcal{T}}(\bm{w})}{\Vert \nabla_{\bm{w}} L_S^{\mathcal{T}}(\bm{w}) \Vert_2}
\end{equation}
Thus, the gradient of $L_{S}^{IS}(\bm{w})$ can be approximated as:
\begin{equation}
    \nabla_{\bm{w}} L_{S}^{IS}(\bm{w}) \approx \nabla_{\bm{w}} L_{S}^{\mathcal{H}}(\bm{w}) + \nabla_{\bm{w}} L_{S}^{\mathcal{T}}(\bm{w}) \big|_{\bm{w} + \hat{\bm{\epsilon}}(\bm{w})}
\end{equation}
To update parameters once using ImbSAM, \textbf{three} backpropagations are required: one for $\nabla_{\bm{w}} L_S^{\mathcal{T}}(\bm{w})$, one for $\nabla_{\bm{w}} L_{S}^{\mathcal{H}}(\bm{w})$, and another for $\nabla_{\bm{w}} L_{S}^{\mathcal{T}}(\bm{w}) \big|_{\bm{w} + \hat{\bm{\epsilon}}(\bm{w})}$

\section{More Experiment Protocols}     \label{appendix sec: more experiment protocols}

\subsection{Datasets}       \label{appendix subsec: datasets}

For long-tailed recognition tasks, we conduct experiments on four widely used long-tailed datasets: CIFAR-10 LT, CIFAR-100 LT, ImageNet-LT, and iNaturalist. For long-tailed domain generalization tasks, we train the model on ImageNet-LT and evaluate it on three OOD datasets: ImageNet-Sketch, ImageNetV2, and ImageNet-C. Below is a detailed description of these datasets:

\begin{itemize}
    \item \textbf{CIFAR-100 LT and CIFAR-10 LT}~\cite{cao2019learning}. The original CIFAR-100~\cite{Krizhevsky09} and CIFAR-10~\cite{Krizhevsky09} datasets contain 50,000 training images and 10,000 testing images for 100 and 10 classes, respectively. We utilize their various long-tailed versions with different imbalance ratios of \{100, 50, 10\}.

    \item \textbf{ImageNet-LT}~\cite{openlongtailrecognition}. The ImageNet-LT dataset is derived from the ImageNet \cite{DBLP:conf/cvpr/DengDSLL009} dataset according to a Pareto distribution, containing 1000 categories. The dataset includes 115,846 training images and 50,000 test images. The dataset has an imbalance ratio of 256. 

    \item \textbf{iNaturalist}~\cite{van2018inaturalist}. The iNaturalist dataset is a real-world large-scale dataset, consisting of 8142 categories. The training set contains approximately 430,000 images, while the test set contains about 24,000 images. The dataset's imbalance ratio is 500.

    \item \textbf{ImageNet-Sketch}~\cite{DBLP:conf/nips/WangGLX19}. The ImageNet-Sketch dataset is an OOD test set derived from the ImageNet \cite{DBLP:conf/cvpr/DengDSLL009} dataset, comprising 50,000 images across 1000 classes. Each image is a sketch, introducing a domain shift relative to ImageNet.

    \item \textbf{ImageNetV2}~\cite{DBLP:conf/icml/RechtRSS19}. The ImageNetV2 dataset consists of 10,000 images spanning the same 1000 classes as ImageNet. The images are sourced differently from the original ImageNet \cite{DBLP:conf/cvpr/DengDSLL009}, resulting in a slight domain shift.

    \item \textbf{ImageNet-C}~\cite{DBLP:conf/iclr/HendrycksD19}. The ImageNet-C dataset includes the same 1000 classes as ImageNet \cite{DBLP:conf/cvpr/DengDSLL009} but features corrupted versions of the original validation set. Each image undergoes one of 15 corruption types at 5 severity levels, resulting in 75 dataset variations.
\end{itemize}

\subsection{Competitors}        \label{appendix subsec: competitors}

When training ResNet models from scratch, we evaluate serveral competitive methods on different datasets. For the CIFAR-LT dataset, we assess multiple loss functions, including CE loss, LDAM+DRW~\cite{cao2019learning}, LA loss~\cite{DBLP:conf/iclr/MenonJRJVK21}, and VS loss~\cite{kini2021label}. These methods are further combined with SAM~\cite{foret2021sharpnessaware}, ImbSAM~\cite{zhou2023imbsam}, and CC-SAM~\cite{zhou2023class} as baseline comparisons. For the ImageNet-LT and iNaturalist datasets, we employ a range of representative methods, including CB~\cite{Cui2019ClassBalancedLB} for class re-balancing, cRT~\cite{kang2019decoupling} for decoupled training, DiVE~\cite{he2021distilling} for transfer learning, DRO-LT~\cite{samuel2021distributional} for representation learning, DisAlign~\cite{zhang2021disalign} for class re-balancing, and WB~\cite{LTRweightbalancing} for regularization. When fine-tuning the foundation model CLIP~\cite{DBLP:conf/icml/RadfordKHRGASAM21}, we use Decoder~\cite{DBLP:journals/ijcv/WangYWHCYXXZ24} and LPT~\cite{DBLP:conf/iclr/DongZYZ23} as baselines. We also evaluate both fully fine-tuning the models with LA loss (denoted as FFT), and parameter-efficient fine-tuning using the LIFT method~\cite{DBLP:conf/icml/Shi00SH024}, as well as their performance when combined with different SAM variants.

\subsection{Evaluation Protocol}        \label{appendix subsec: evaluation protocol}

For long-tailed recognition tasks, we assess model performance using top-1 accuracy on balanced test sets. This ensures all classes contribute equally to the evaluation. To provide a more detailed analysis, we follow the approach in~\cite{zhong2021mislas, openlongtailrecognition} by splitting the classes into three subsets: Head, Medium, and Tail. Accuracy is then reported for each subset individually. For CIFAR-10 LT (IR = 100), the Head classes contain more than 1000 samples, the Medium classes have 200$\sim$1000 samples, and the Tail classes have less than 200 samples. For CIFAR-100 LT (IR = 100), ImageNet-LT, and iNaturalist, the Head classes contain more than 100 samples, the Medium classes have 20$\sim$100 samples, and the Tail classes have less than 20 samples. Prior arts \cite{DBLP:conf/cvpr/ZhouYL022, DBLP:conf/iccv/KhattakWNK0K23, DBLP:conf/cvpr/ParkKK24} have demonstrated that fine-tuning CLIP \cite{DBLP:conf/icml/RadfordKHRGASAM21} often performs well on the target domain but struggles with domain shifts. Therefore, when fine-tuning the foundation models, we also assess model performance on OOD test sets, referred to as long-tailed domain generalization tasks. Specifically, models are trained on the ImageNet-LT dataset and evaluated on out-of-distribution datasets, including ImageNet-Sketch~\cite{DBLP:conf/nips/WangGLX19}, ImageNetV2~\cite{DBLP:conf/icml/RechtRSS19}, and ImageNet-C~\cite{DBLP:conf/iclr/HendrycksD19}. We evaluate model performance on these OOD balanced test sets, including top-1 accuracy and accuracy for each class subset.

\subsection{Implementation Details}     \label{appendix subsec: implementation details}

We follow the procedures described below to train ResNet models from scratch. For the CIFAR-LT datasets, we use ResNet-32~\cite{he2015deep} as the backbone. We employ stochastic gradient descent (SGD) as the base optimizer, with an initial learning rate of 0.1, a batch size of 64, and a momentum of 0.9. Training spans 200 epochs, using a cosine annealing scheduler to reduce the learning rate from 0.1 to 0 gradually. For the larger-scale ImageNet-LT and iNaturalist datasets, we employ ResNet-50~\cite{he2015deep} as the backbone. SGD is again used as the base optimizer with a momentum of 0.9. For ImageNet-LT, the initial learning rate is set to 0.1, with a batch size of 256, while for iNaturalist, the initial learning rate is 0.2, and the batch size is increased to 512. Training for these datasets also lasts 200 epochs with a cosine annealing scheduler. Additionally, We employ a step scheduler for $\rho$, following the approach of Rangwani et al.~\yrcite{rangwani2022escaping}. This scheduler initializes $\rho$ until the 160th epoch and then increases its value towards the end of training. 

For fine-tuning foundation models, we follow the protocols outlined in LIFT~\cite{DBLP:conf/icml/Shi00SH024}. A cosine classifier is added after the image encoder of CLIP \cite{DBLP:conf/icml/RadfordKHRGASAM21}, with its weights initialized using the text encoder, which is then discarded. We fine-tune the image encoder of CLIP with a ViT-B/16 \cite{DBLP:conf/iclr/DosovitskiyB0WZ21} backbone. Stochastic gradient descent (SGD) is used as the base optimizer, with a batch size of 128 and momentum of 0.9. The initial learning rate is 0.01 for parameter-efficient fine-tuning and 0.001 for full fine-tuning. Unlike LIFT~\cite{DBLP:conf/icml/Shi00SH024}, all models in our experiments are fine-tuned for 20 epochs across datasets and methods. In LIFT, models are trained for 10 epochs on the CIFAR-LT and the ImageNet-LT datasets, and 20 epochs on the iNaturalist dataset. We extend the training to 20 epochs because the models do not fully converge under the original settings.

\subsection{Experimental Hardware Setup}

All the experiments are conducted on Ubuntu servers equipped with Nvidia(R) RTX 3090 GPUs and RTX 4090 GPUs. Fine-tuning the foundation models is performed using a single GPU for all datasets. The number of GPUs used for training the ResNet models from scratch varies based on dataset size: a single GPU for the CIFAT-LT datasets, 2 GPUs for the ImageNet-LT dataset, and 4 GPUs for the iNaturalist dataset.

\section{More Experiment Results}       \label{appendix sec: more experiment results}

\subsection{Additional Results on the CIFAR-LT Datasets}

In this section, we show additional results on the CIFAR-LT datasets. Specifically, Tab.\ref{tab: performance comparison on CIFAR-100 LT with more methods} presents additional experimental results on the CIFAR-100 LT dataset with more combined methods. Tab.\ref{tab: performance comparison on CIFAR-10 LT} provides the experimental results on the CIFAR-10 LT dataset. The results suggest that Focal-SAM consistently outperforms SAM, ImbSAM, and CC-SAM across all methods and datasets, regardless of whether ResNet models are trained from scratch or foundation models are fine-tuned. This indicates that Focal-SAM offers better fine-grained control over the loss landscape for both head and tail classes, leading to improved overall performance. This further highlights the effectiveness of Focal-SAM.

\begin{table}[!h]
  \centering
  \caption{Performance comparison on CIFAR-10 LT datasets with various imbalance ratios (IR). FFT denotes fully fine-tuning the foundation model with LA loss.}
    % Table generated by Excel2LaTeX from sheet 'CIFAR10'
    \begin{tabular}{l|cccc|ccc}
    \toprule
    \multirow{1.5}[2]{*}{Method} & \multicolumn{4}{c|}{IR100}    & IR200 & IR50  & IR10 \\
          & Head  & Med   & Tail  & All   & All   & All   & All \\
    \midrule
    \multicolumn{8}{c}{Training from scratch} \\
    \midrule
    CE    & 87.0  & 73.6  & 54.0  & 73.1  & 68.6  & 78.3  & 87.4  \\
    CE+SAM & \textbf{89.5} & 73.9  & 56.7  & 75.0  & 69.8  & 79.6  & 88.8  \\
    CE+ImbSAM & 88.0  & \textbf{79.0} & 60.1  & 76.9  & \textbf{72.6} & 81.1  & 89.3  \\
    CE+CC-SAM & 88.9  & 74.1  & 61.3  & 76.2  & 71.3  & 80.0  & 89.2  \\
    \rowcolor[rgb]{ .922,  .957,  1} \textbf{CE+Focal-SAM} & 89.3  & 75.4  & \textbf{62.9} & \textbf{77.2} & 71.7  & \textbf{82.0} & \textbf{90.0} \\
    \midrule
    LDAM+DRW~\cite{cao2019learning} & 85.5  & 74.6  & 69.0  & 77.3  & 73.8  & 80.8  & 87.3  \\
    LDAM+DRW+SAM & \textbf{88.9} & 78.3  & 73.2  & 81.0  & 78.6  & \textbf{84.5} & 89.4  \\
    LDAM+DRW+ImbSAM & 86.5  & \textbf{79.7} & 73.7  & 80.6  & 77.3  & 84.0  & 88.9  \\
    LDAM+DRW+CC-SAM & 88.4  & 79.2  & 73.3  & 81.1  & 78.9  & 84.4  & 89.4  \\
    \rowcolor[rgb]{ .922,  .957,  1} \textbf{LDAM+DRW+Focal-SAM} & 88.7  & 79.5  & \textbf{74.2} & \textbf{81.6} & \textbf{79.2} & \textbf{84.5} & \textbf{89.5} \\
    \midrule
    LA~\cite{DBLP:conf/iclr/MenonJRJVK21} & \textbf{87.6} & 72.6  & 70.1  & 77.9  & 74.3  & 81.6  & 87.8  \\
    LA+SAM & 86.7  & 80.6  & 78.2  & 82.3  & 78.9  & 85.4  & 90.2  \\
    LA+ImbSAM & 84.1  & \textbf{81.6} & 80.1  & 82.2  & 78.6  & 84.7  & 89.5  \\
    LA+CC-SAM & 86.6  & 80.8  & 78.5  & 82.5  & 79.1  & \textbf{85.5} & 90.2  \\
    \rowcolor[rgb]{ .922,  .957,  1} \textbf{LA+Focal-SAM} & 86.9  & 81.2  & \textbf{79.2} & \textbf{82.9} & \textbf{79.6} & \textbf{85.5} & \textbf{90.5} \\
    \midrule
    VS~\cite{kini2021label} & \textbf{88.1} & 77.1  & 68.4  & 78.9  & 74.7  & 81.5  & 88.3  \\
    VS+SAM & 85.6  & \textbf{82.7} & 76.6  & 82.0  & 79.0  & 85.4  & 90.3  \\
    VS+ImbSAM & 85.3  & 82.1  & 77.3  & 81.9  & 78.7  & 84.8  & 90.0  \\
    VS+CC-SAM & 85.6  & 82.0  & 78.2  & 82.3  & 79.3  & 85.5  & 90.4  \\
    \rowcolor[rgb]{ .922,  .957,  1} \textbf{VS+Focal-SAM} & 87.7  & 80.6  & \textbf{78.8} & \textbf{82.9} & \textbf{79.5} & \textbf{85.8} & \textbf{90.7} \\
    \midrule
    \multicolumn{8}{c}{Fine-tuning foundation model} \\
    \midrule
    FFT   & \textbf{97.9} & 95.8  & 95.9  & 96.7  & 95.7  & 97.1  & 97.9  \\
    FFT+SAM & 97.5  & 96.5  & 97.0  & 97.0  & \textbf{96.6} & 97.5  & 98.0  \\
    FFT+ImbSAM & 97.0  & \textbf{97.0} & 97.4  & 97.1  & 96.5  & \textbf{97.7} & 97.9  \\
    FFT+CC-SAM & 97.6  & 96.2  & 97.0  & 97.0  & \textbf{96.6} & 97.6  & 98.0  \\
    \rowcolor[rgb]{ .922,  .957,  1} \textbf{FFT+Focal-SAM} & 97.5  & 96.4  & \textbf{97.5} & \textbf{97.2} & \textbf{96.6} & 97.5  & \textbf{98.2} \\
    \midrule
    LIFT~\cite{DBLP:conf/icml/Shi00SH024} & \textbf{96.6} & 95.7  & 97.4  & 96.6  & 96.3  & 96.8  & 97.2  \\
    LIFT+SAM & 96.6  & 95.6  & 97.8  & 96.7  & \textbf{96.4} & 96.7  & 97.0  \\
    LIFT+ImbSAM & 96.5  & \textbf{95.9} & 97.7  & 96.7  & \textbf{96.4} & 96.7  & 97.2  \\
    LIFT+CC-SAM & 96.5  & 95.6  & 97.9  & 96.6  & \textbf{96.4} & 96.7  & \textbf{97.3} \\
    \rowcolor[rgb]{ .922,  .957,  1} \textbf{LIFT+Focal-SAM} & \textbf{96.6} & 95.6  & \textbf{98.1} & \textbf{96.8} & \textbf{96.4} & \textbf{96.9} & \textbf{97.3} \\
    \bottomrule
    \end{tabular}%
  \label{tab: performance comparison on CIFAR-10 LT}%
\end{table}%

\begin{table}[!h]
  \centering
  \caption{Performance comparison on CIFAR-100 LT with more combined methods}
    \begin{tabular}{l|cccc|ccc}
    \toprule
    \multirow{1.5}[2]{*}{Method} & \multicolumn{4}{c|}{IR100}    & IR200 & IR50  & IR10 \\
          & Head  & Med   & Tail  & All   & All   & All   & All \\
    \midrule
    \multicolumn{8}{c}{Training from scratch} \\
    \midrule
    LDAM+DRW~\cite{cao2019learning} & 63.1  & 44.4  & 18.6  & 43.2  & 40.3  & 46.1  & 57.3  \\
    LDAM+DRW+SAM & 67.6  & 51.7  & 25.9  & 49.5  & 45.8  & 52.6  & 61.1  \\
    LDAM+DRW+ImbSAM & 62.5  & 48.8  & 26.4  & 46.9  & 42.5  & 51.3  & 59.8  \\
    LDAM+DRW+CC-SAM & 66.5  & 52.2  & 26.2  & 49.4  & 45.7  & 52.3  & 61.0  \\
    \rowcolor[rgb]{ .922,  .957,  1} \textbf{LDAM+DRW+Focal-SAM} & \textbf{67.9} & \textbf{52.7} & \textbf{26.9} & \textbf{50.3} & \textbf{46.2}  & \textbf{53.8} & \textbf{62.3} \\
    \midrule
    VS~\cite{kini2021label} & 58.3  & 43.8  & \textbf{31.1} & 45.1  & 41.6  & 49.3  & 59.4  \\
    VS+SAM & \textbf{62.7} & 52.0  & 29.3  & 49.0  & 45.5  & 53.5  & 62.5  \\
    VS+ImbSAM & 56.1  & \textbf{53.3} & 29.9  & 47.2  & 44.7  & 52.6  & 62.6  \\
    VS+CC-SAM & 62.2  & 52.2  & 30.3  & 49.1  & 45.2  & 53.7  & 62.9  \\
    \rowcolor[rgb]{ .922,  .957,  1} \textbf{VS+Focal-SAM} & \textbf{62.7} & 52.6  & 31.0  & \textbf{49.7} & \textbf{45.8}  & \textbf{54.5} & \textbf{63.7} \\
    \bottomrule
    \end{tabular}%
  \label{tab: performance comparison on CIFAR-100 LT with more methods}%
\end{table}%

\subsection{CE and mCE Metrics on ImageNet-C for Long-tailed Domain Generalization}

ImageNet-C \cite{DBLP:conf/iclr/HendrycksD19} contains the corrupted versions of ImageNet \cite{DBLP:conf/cvpr/DengDSLL009} dataset, with 15 corruption types applied at 5 severity levels, resulting in 75 dataset variations. In addition to the model's average accuracy across these 75 datasets, as shown in Tab.\ref{tab: performance of long-tailed domain generalization}, ImageNet-C introduces two additional metrics: Corruption Error (CE) and Mean Corruption Error (mCE). These metrics systematically assess the robustness of models against image corruption. CE measures the accuracy drop of a model on a specific type and severity of corruption compared to a baseline model, typically AlexNet \cite{DBLP:conf/nips/KrizhevskySH12}. mCE aggregates the CE values across all corruption types and severity levels, providing a single robustness score for the model. Tab.\ref{tab: mCE on ImageNet-C} presents the CE and mCE results on the ImageNet-C dataset when fine-tuning the foundation models. The results show that Focal-SAM generally achieves significantly lower CE and mCE values across the entire dataset and for each corruption type. This further demonstrates Focal-SAM's effectiveness in improving generalization.

\begin{table*}[!h]
  \centering
  \caption{The CE and mCE values for different methods on the ImageNet-C dataset. The source models are trained on ImageNet-LT and evaluated on ImageNet-C. Lower values indicate better performance.}
  \renewcommand\arraystretch{1.0}
  \resizebox{\linewidth}{!}{
    \begin{tabular}{l|c|cccc|ccc|cccc|cccc}
    \toprule
    \multirow{1.5}[2]{*}{Method} & \multirow{1.5}[2]{*}{mCE$\downarrow$} & \multicolumn{4}{c|}{Blur}     & \multicolumn{3}{c|}{Noise} & \multicolumn{4}{c|}{Digital}  & \multicolumn{4}{c}{Weather} \\
          &       & Motion & Defoc & Glass & Zoom  & Gauss & Impul & Shot  & Contr & Elast & JPEG  & Pixel & Bright & Snow  & Fog   & Frost \\
    \midrule
    FFT   & 72.6  & 72.8  & 74.6  & 78.2  & 79.4  & 73.6  & 74.1  & 75.1  & 66.7  & 80.0  & 78.4  & 71.3  & 63.7  & 66.6  & 65.7  & 68.5  \\
    +SAM  & 69.0  & 69.2  & \textbf{72.2} & 76.9  & 76.9  & 68.9  & 69.7  & 70.7  & 63.6  & 77.5  & 74.5  & 66.8  & 59.2  & 62.6  & 60.6  & 65.7  \\
    +ImbSAM & 69.5  & 69.4  & \textbf{72.2} & 76.8  & 77.1  & 70.2  & 70.7  & 72.0  & \textbf{62.9} & 77.6  & 76.4  & 68.5  & 59.6  & 62.5  & 60.2  & 65.7  \\
    +CC-SAM & 69.0  & 69.0  & 74.1  & 76.6  & 77.8  & 69.5  & 69.3  & 71.2  & 64.0  & 76.6  & 75.2  & 67.0  & 58.1  & \textbf{61.3} & \textbf{59.6} & \textbf{65.0} \\
    \rowcolor[rgb]{ .996,  .961,  .941} \textbf{+Focal-SAM} & \textbf{68.3} & \textbf{68.2} & 72.7  & \textbf{76.6} & \textbf{76.6} & \textbf{68.3 } & \textbf{68.8} & \textbf{70.1} & 63.2  & \textbf{76.2} & \textbf{74.2} & \textbf{66.0} & \textbf{58.0} & 61.5  & 59.7  & 65.1  \\
    \midrule
    LIFT  & 63.6  & 61.7  & 67.6  & 75.9  & 72.4  & 61.0  & 61.2  & 62.7  & 54.1  & 80.8  & 72.7  & 60.3  & 52.1  & 57.0  & 52.2  & 62.1  \\
    +SAM  & 63.0  & 61.1  & 67.2  & 75.7  & 71.5  & 60.4  & 60.6  & 62.0  & 53.6  & 80.6  & 72.1  & 59.4  & 51.7  & 56.3  & 51.7  & 61.7  \\
    +ImbSAM & 63.4  & 61.6  & 67.5  & 75.8  & 72.1  & 60.9  & 60.9  & 62.6  & 53.8  & 80.6  & 72.5  & 59.9  & 51.9  & 56.7  & 51.9  & 61.8  \\
    +CC-SAM & 62.5  & 61.0  & 66.5  & 75.3  & \textbf{70.9} & 59.5  & \textbf{59.9} & 61.3  & 53.7  & 79.9  & 71.7  & \textbf{58.2} & 51.3  & 55.5  & 51.6  & 61.2  \\
    \rowcolor[rgb]{ .996,  .961,  .941} \textbf{+Focal-SAM} & \textbf{62.2} & \textbf{60.6} & \textbf{66.3} & \textbf{75.2} & 71.1  & \textbf{59.3} & \textbf{59.9} & \textbf{61.0} & \textbf{53.1} & \textbf{79.6} & \textbf{71.0} & \textbf{58.2} & \textbf{50.7} & \textbf{55.3} & \textbf{51.1} & \textbf{61.0} \\
    \bottomrule
    \end{tabular}%
  }
  \label{tab: mCE on ImageNet-C}%
\end{table*}%

\subsection{Results for Aligning Computational Cost}

Focal-SAM requires about 50\% more training time than SAM. To fairly evaluate the benefit of Focal-SAM, we conduct experiments where we extend the training epochs of SAM to match Focal-SAM's total computational cost. Specifically, we increase the training epochs to 300 or 30 (1.5 $\times$ the original 200 or 20) for SAM, while keeping Focal-SAM at 200 or 20 epochs. In this setting, the total computational cost of SAM and Focal-SAM becomes comparable. We conduct these experiments on CIFAR-100 LT, ImageNet-LT, and iNaturalist datasets. The results are shown in Tab.\ref{tab: performance comparison on CIFAR-100 LT with aligned cost}, Tab.\ref{tab: performance comparison on ImageNet-LT with aligned cost}, and Tab.\ref{tab: performance comparison on iNaturalist with aligned cost}.

% Table generated by Excel2LaTeX from sheet 'CIFAR-100'
\begin{table}[!h]
  \centering
  \caption{Performance comparison on CIFAR-100 LT with aligned computational cost.}
    \begin{tabular}{l|c|cccc}
    \toprule
    \multirow{1.5}[2]{*}{Method} & \multirow{1.5}[2]{*}{Epoch} & IR100 & IR200 & IR50  & IR10 \\
          &       & All   & All   & All   & All \\
    \midrule
    \multicolumn{6}{c}{Training from scratch} \\
    \midrule
    CE+SAM & 300   & 43.0  & 39.2  & 46.9  & 60.0 \\
    \rowcolor[rgb]{ .922,  .957,  1} \textbf{CE+Focal-SAM} & 200   & \textbf{44.0} & \textbf{39.6} & \textbf{48.1} & \textbf{60.9} \\
    \midrule
    LDAM+DRW+SAM & 300   & \textbf{50.4} & \textbf{46.4} & 53.0  & 61.2 \\
    \rowcolor[rgb]{ .922,  .957,  1} \textbf{LDAM+DRW+Focal-SAM} & 200   & 50.3  & 46.2  & \textbf{53.8} & \textbf{62.3} \\
    \midrule
    VS+SAM & 300   & 49.2  & 45.5  & 53.0  & 63.3 \\
    \rowcolor[rgb]{ .922,  .957,  1} \textbf{VS+Focal-SAM} & 200   & \textbf{49.7} & \textbf{45.8} & \textbf{54.5} & \textbf{63.7} \\
    \midrule
    LA+SAM & 300   & 50.1  & 45.5  & 53.8  & 63.0 \\
    \rowcolor[rgb]{ .922,  .957,  1} \textbf{LA+Focal-SAM} & 200   & \textbf{50.7} & \textbf{46.0} & \textbf{54.5} & \textbf{63.8} \\
    \midrule
    \multicolumn{6}{c}{Fine-tuning foundation model} \\
    \midrule
    FFT+SAM & 30    & 81.2  & 78.3  & 83.6  & 86.9 \\
    \rowcolor[rgb]{ .922,  .957,  1} \textbf{FFT+Focal-SAM} & 20    & \textbf{81.6} & \textbf{79.0} & \textbf{83.9} & \textbf{87.3} \\
    \midrule
    LIFT+SAM & 30    & 82.1  & \textbf{80.2} & 83.1  & 85.2 \\
    \rowcolor[rgb]{ .922,  .957,  1} \textbf{LIFT+Focal-SAM} & 20    & \textbf{82.4} & 80.0  & \textbf{83.2} & \textbf{85.4} \\
    \bottomrule
    \end{tabular}%
  \label{tab: performance comparison on CIFAR-100 LT with aligned cost}%
\end{table}%

% Table generated by Excel2LaTeX from sheet 'Sheet2'
\begin{table}[!h]
  \centering
  \caption{Performance comparison on ImageNet-LT with aligned computational cost.}
    \begin{tabular}{l|c|cccc}
    \toprule
    \multirow{1.5}[2]{*}{Method} & \multirow{1.5}[2]{*}{Epoch} & \multicolumn{4}{c}{ImageNet-LT} \\
          &       & Head  & Medium & Tail  & All \\
    \midrule
    \multicolumn{6}{c}{Training from scratch} \\
    \midrule
    LA+SAM & 300   & 63.2  & 51.6  & \textbf{34.8} & 53.8 \\
    \rowcolor[rgb]{ .937,  .973,  .91} \textbf{LA+Focal-SAM} & 200   & \textbf{63.9} & \textbf{52.2} & 34.4  & \textbf{54.3} \\
    \midrule
    \multicolumn{6}{c}{Fine-tuning foundation model} \\
    \midrule
    FFT+SAM & 30    & 80.6  & 73.1  & \textbf{56.1} & 73.6 \\
    \rowcolor[rgb]{ .937,  .973,  .91} \textbf{FFT+Focal-SAM} & 20    & \textbf{80.8} & \textbf{73.9} & 54.4  & \textbf{73.9} \\
    \midrule
    LIFT+SAM & 30    & \textbf{79.8} & 76.1  & 73.5  & 77.2 \\
    \rowcolor[rgb]{ .937,  .973,  .91} \textbf{LIFT+Focal-SAM} & 20    & 79.7  & \textbf{76.6} & \textbf{73.6} & \textbf{77.4} \\
    \bottomrule
    \end{tabular}%
  \label{tab: performance comparison on ImageNet-LT with aligned cost}%
\end{table}%

% Table generated by Excel2LaTeX from sheet 'iNaturalist'
\begin{table}[!h]
  \centering
  \caption{Performance comparison on iNaturalist with aligned computational cost.}
    \begin{tabular}{l|c|cccc}
    \toprule
    \multirow{1.5}[2]{*}{Method} & \multirow{1.5}[2]{*}{Epoch} & \multicolumn{4}{c}{iNaturalist} \\
          &       & Head  & Medium & Tail  & All \\
    \midrule
    \multicolumn{6}{c}{Training from scratch} \\
    \midrule
    LA+SAM & 300   & 68.0  & 71.4  & 72.4  & 71.5 \\
    \rowcolor[rgb]{ .996,  .961,  .941} \textbf{LA+Focal-SAM} & 200   & \textbf{68.4} & \textbf{72.0} & \textbf{72.5} & \textbf{71.8} \\
    \bottomrule
    \end{tabular}%
  \label{tab: performance comparison on iNaturalist with aligned cost}%
\end{table}%

\subsection{Visualization of Loss Landscape}

Fig.\ref{fig: loss landscape on CIFAR-100 LT} and Fig.\ref{fig: loss landscape on CIFAR-10 LT} visualize the loss landscape for head and tail classes of ResNet models trained with SAM, ImbSAM, CC-SAM, and Focal-SAM on the CIFAR-100 LT and CIFAR-10 LT datasets using VS loss respectively. From the results, we can observe that the loss landscape for tail classes with ImbSAM generally appears flatter and smoother than with SAM, suggesting that ImbSAM better flattens the loss landscape for tail classes. However, the head class loss landscape with ImbSAM is generally sharper than with SAM, indicating that ImbSAM's exclusion of all head classes from the SAM term can sharpen the loss landscape for head classes, which might reduce their generalization performance. In contrast, CC-SAM and Focal-SAM provide fine-grained class-wise control, leading to a flatter loss landscape for both head and tail classes.

\begin{figure*}[!h]
    \centering

    % 第一行子图
    \subfigure[SAM: Head]{
        \includegraphics[width=0.2\textwidth]{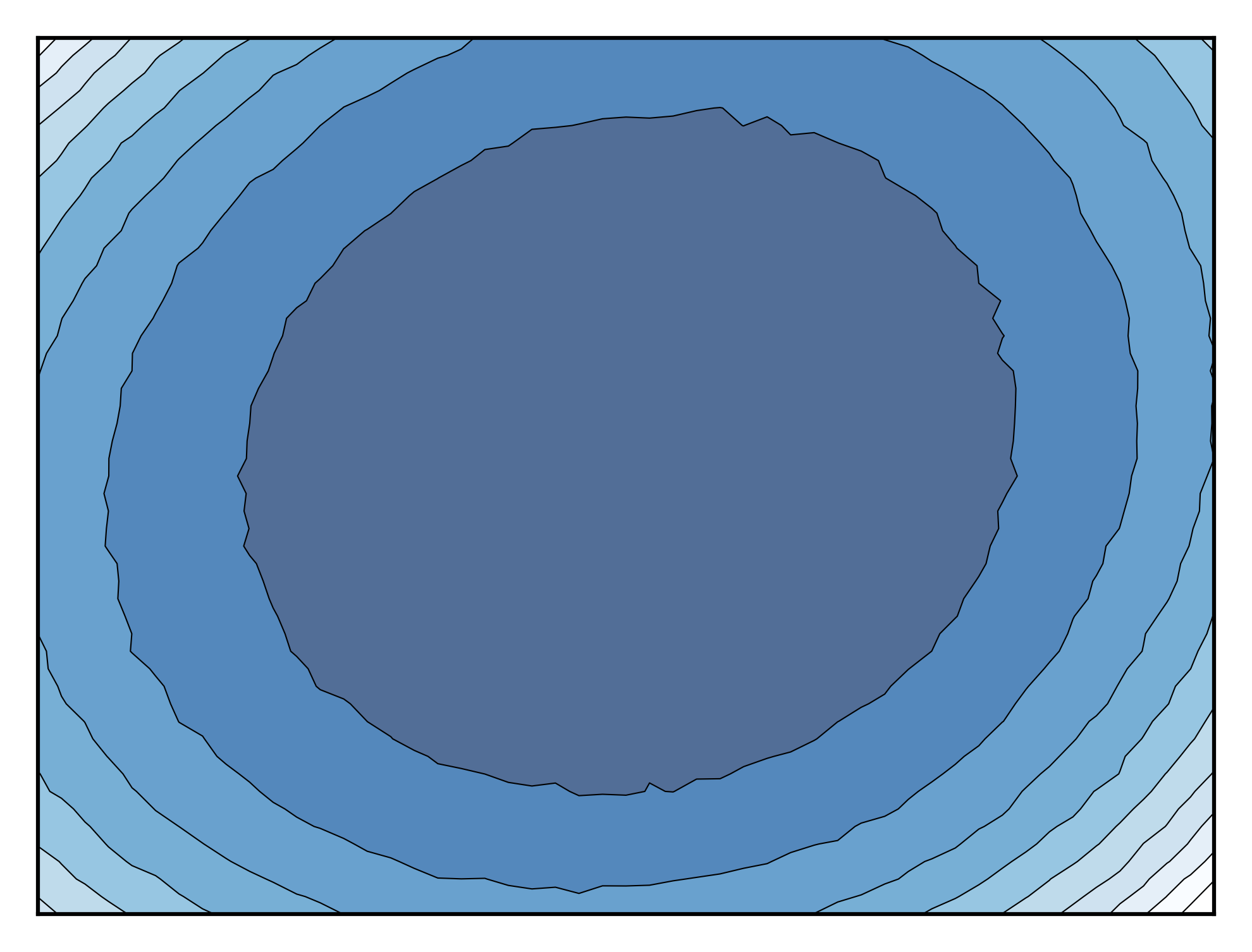}
    }
    \subfigure[ImbSAM: Head]{
        \includegraphics[width=0.2\textwidth]{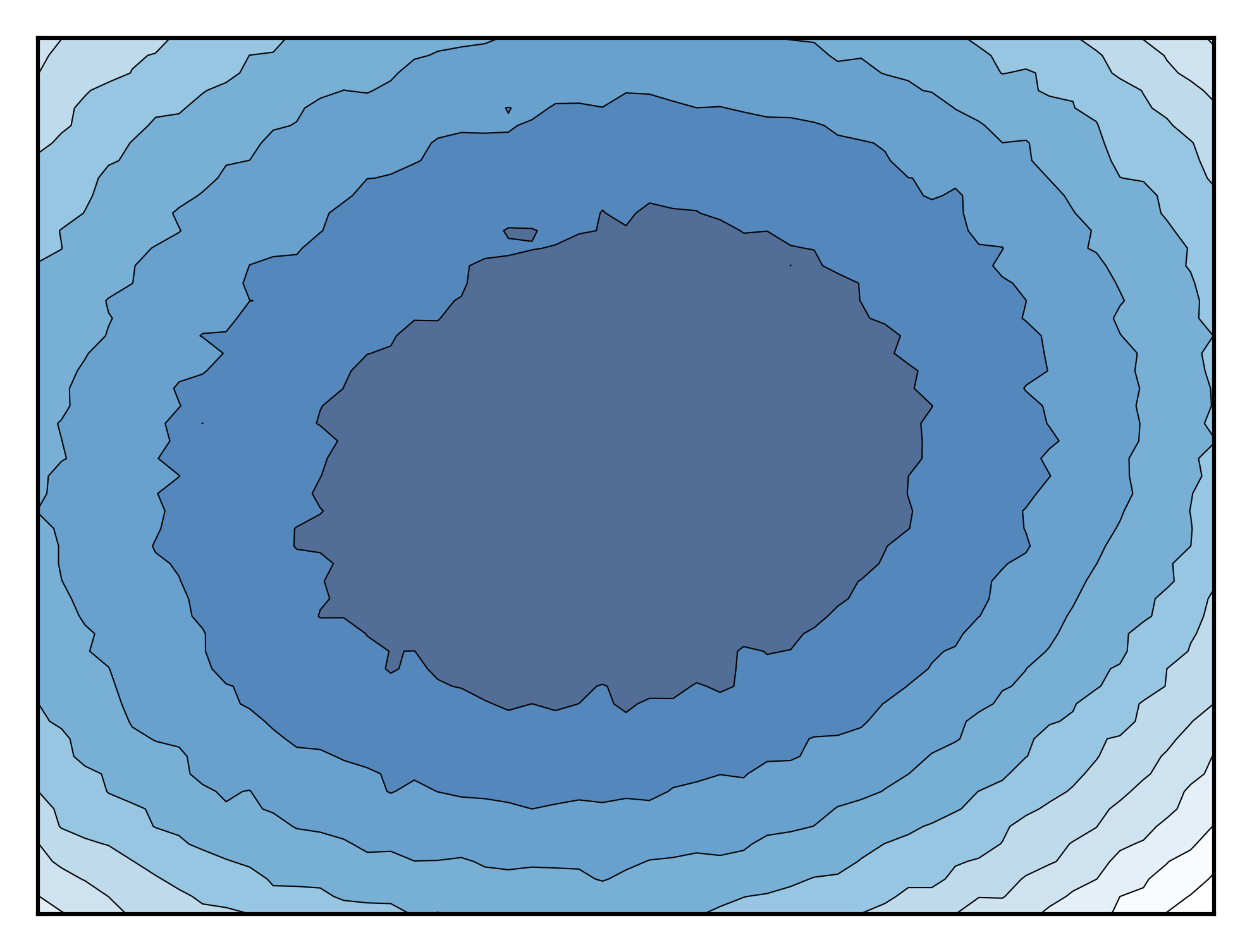}
    }
    \subfigure[CC-SAM: Head]{
        \includegraphics[width=0.2\textwidth]{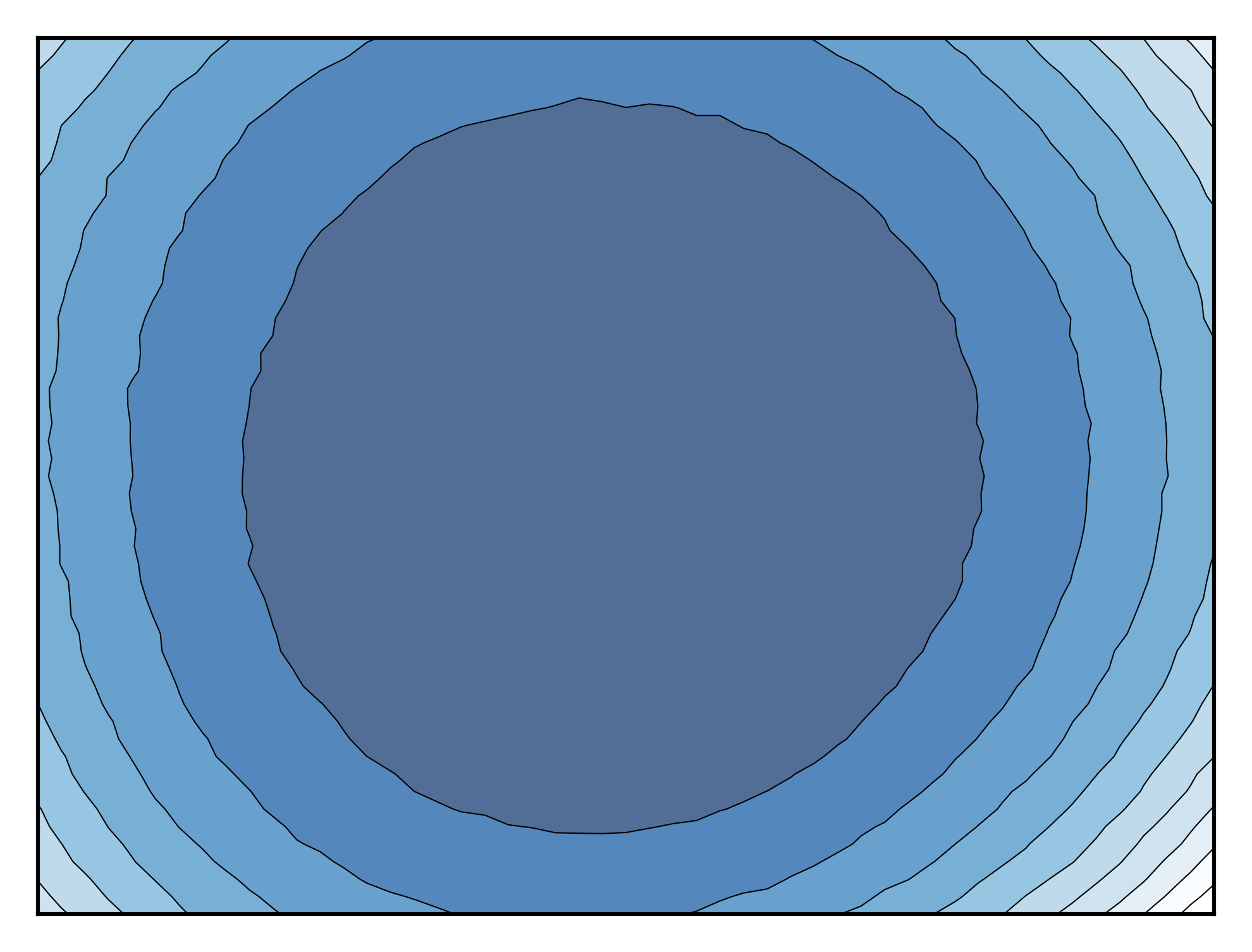}
    }
    \subfigure[Focal-SAM: Head]{
        \includegraphics[width=0.2\textwidth]{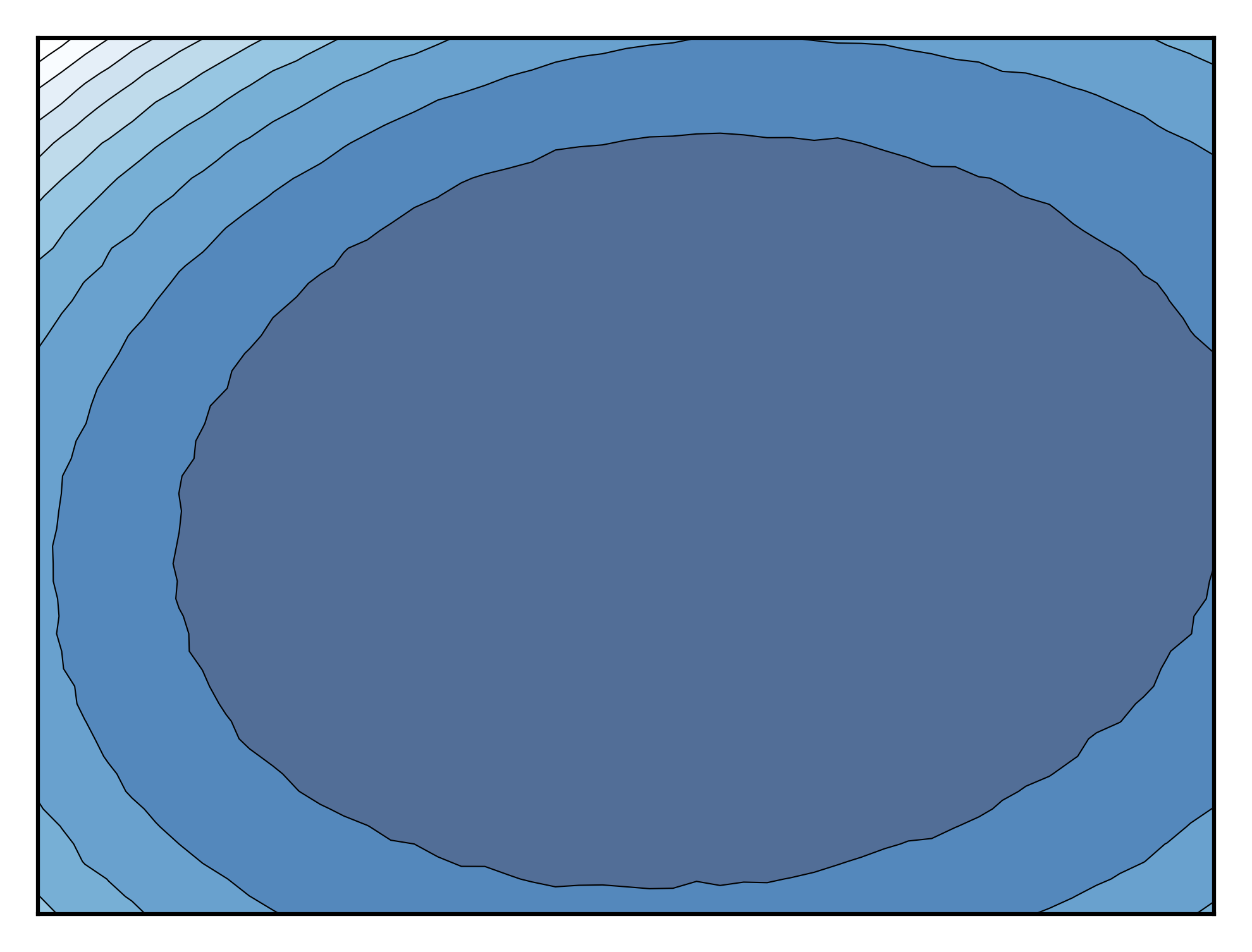}
    }

    % 第二行子图
    \subfigure[SAM: Tail]{
        \includegraphics[width=0.2\textwidth]{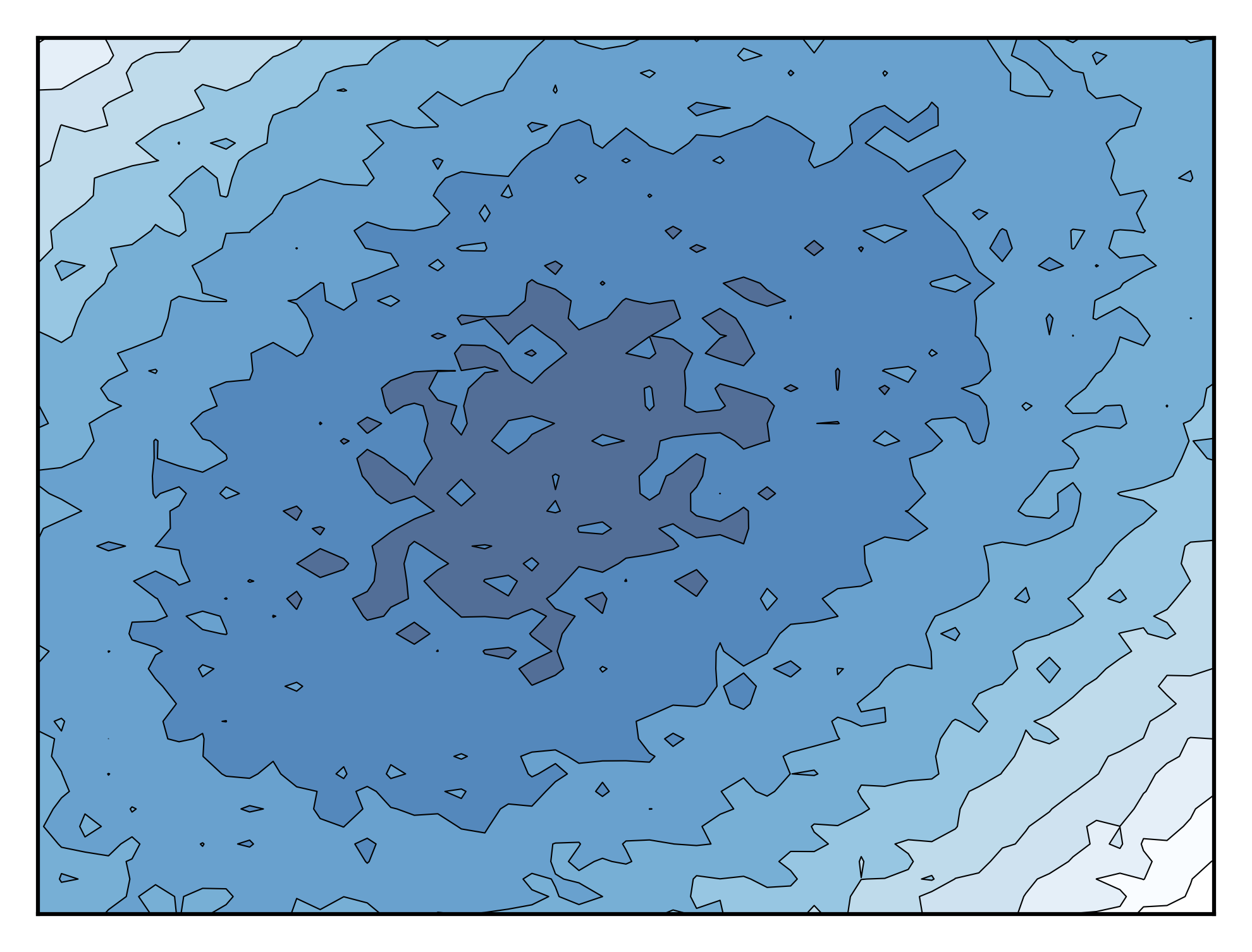}
    }
    \subfigure[ImbSAM: Tail]{
        \includegraphics[width=0.2\textwidth]{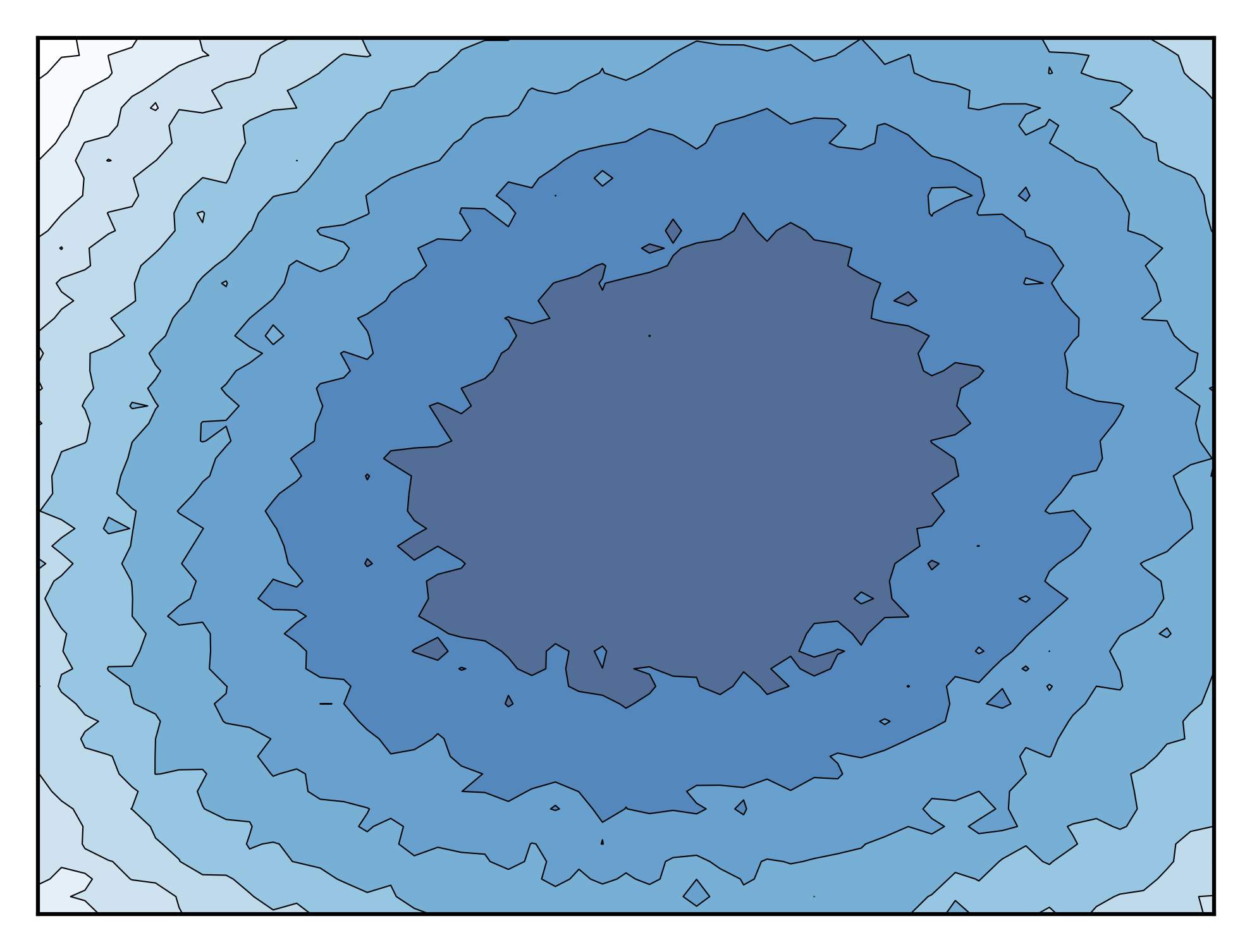}
    }
    \subfigure[CC-SAM: Tail]{
        \includegraphics[width=0.2\textwidth]{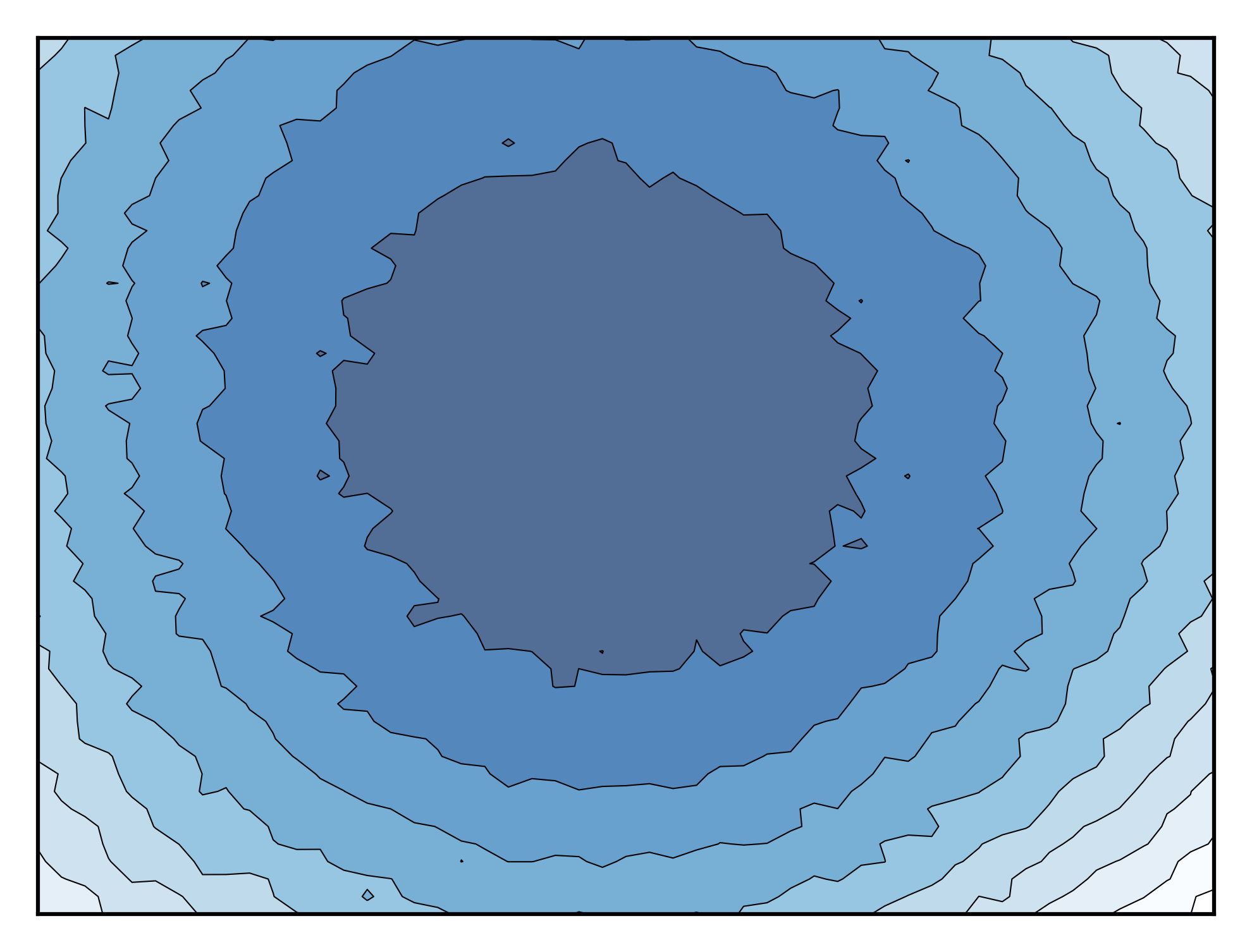}
    }
    \subfigure[Focal-SAM: Tail]{
        \includegraphics[width=0.2\textwidth]{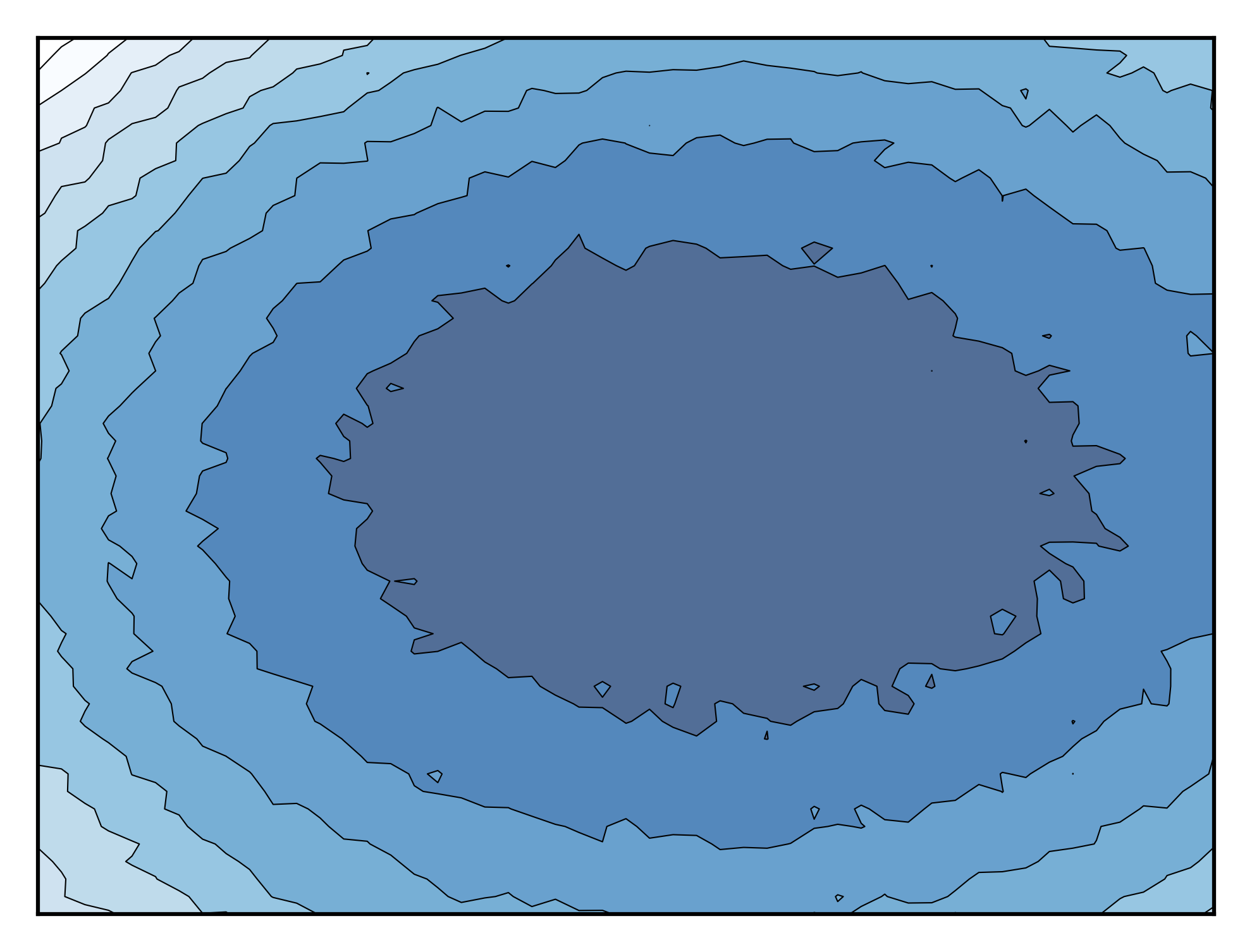}
    }

    \caption{Visualization of loss landscape for head and tail classes of ResNet models trained with SAM, ImbSAM, CC-SAM, and Focal-SAM on CIFAR-100 LT using VS loss respectively.}
    \label{fig: loss landscape on CIFAR-100 LT}

\end{figure*}

\begin{figure*}[!h]
    \centering

    % 第一行子图
    \subfigure[SAM: Head]{
        \includegraphics[width=0.2\textwidth]{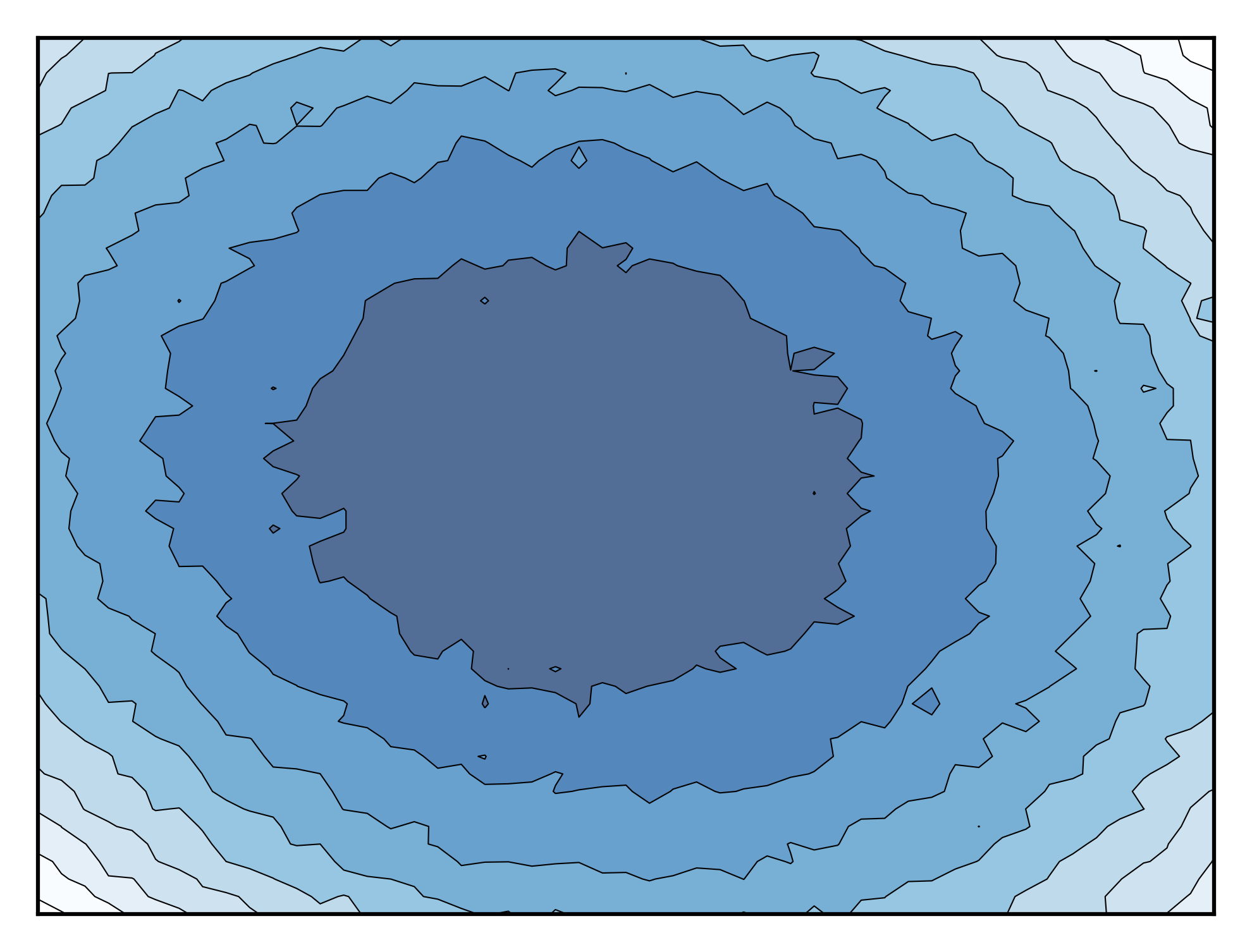}
    }
    \subfigure[ImbSAM: Head]{
        \includegraphics[width=0.2\textwidth]{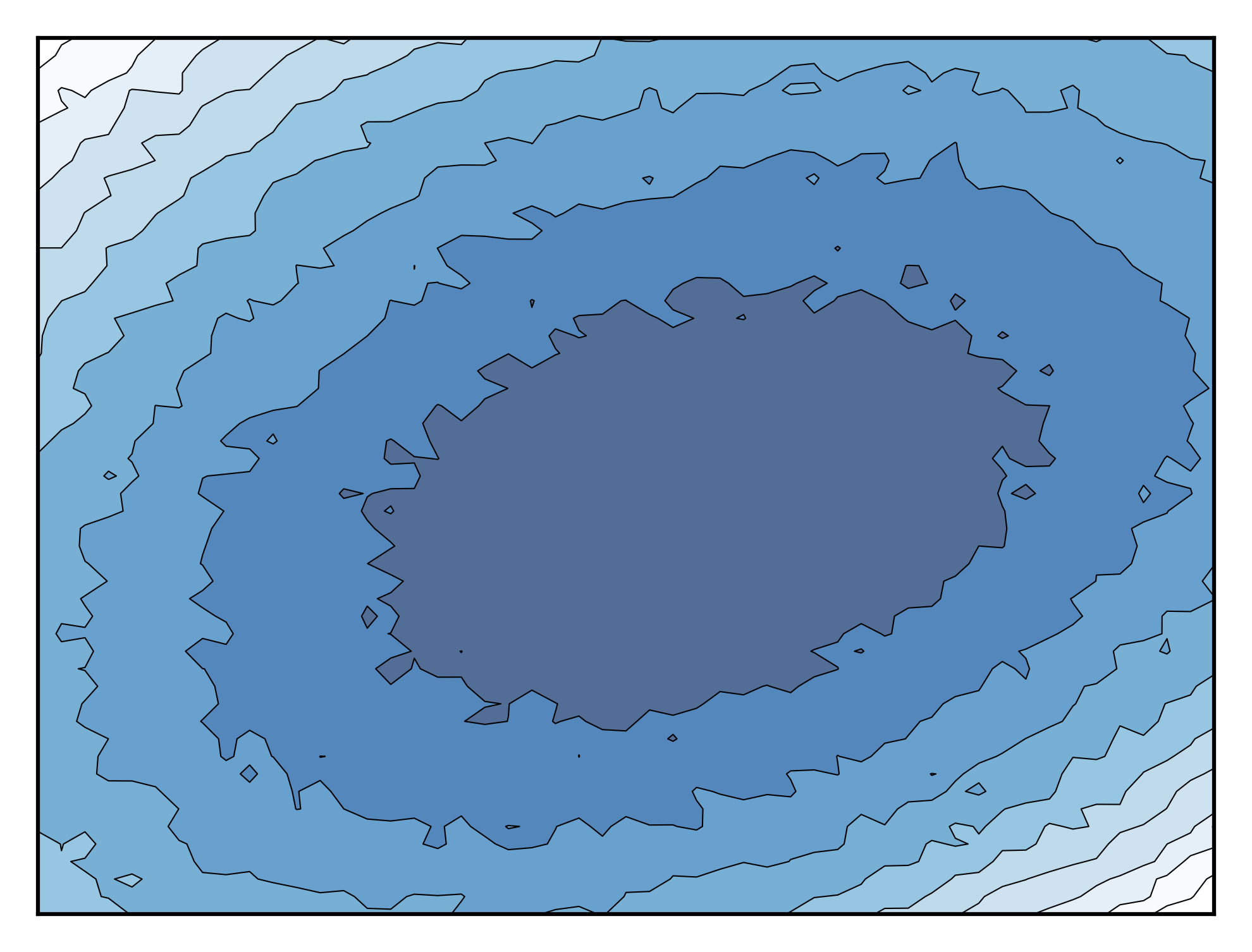}
    }
    \subfigure[CC-SAM: Head]{
        \includegraphics[width=0.2\textwidth]{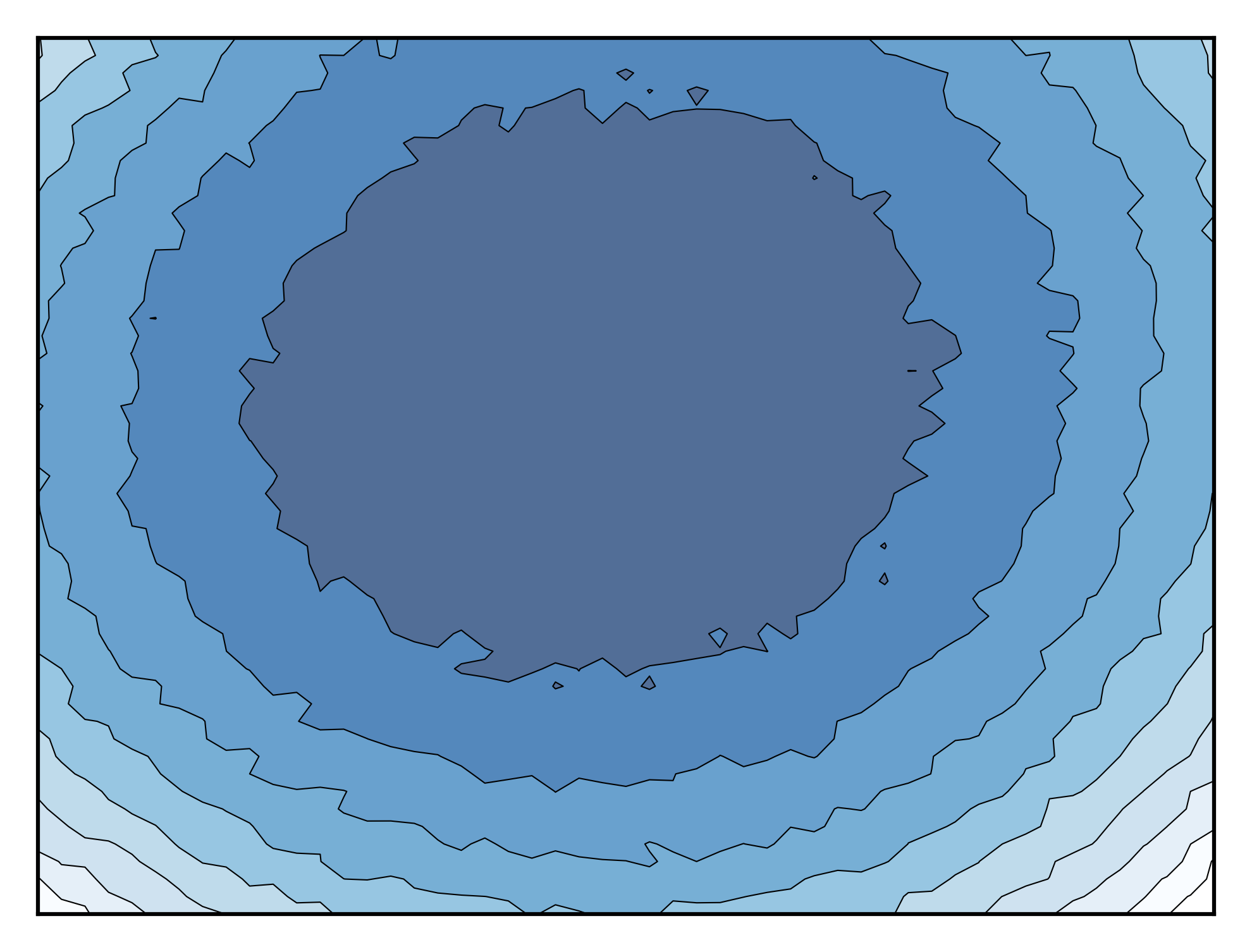}
    }
    \subfigure[Focal-SAM: Head]{
        \includegraphics[width=0.2\textwidth]{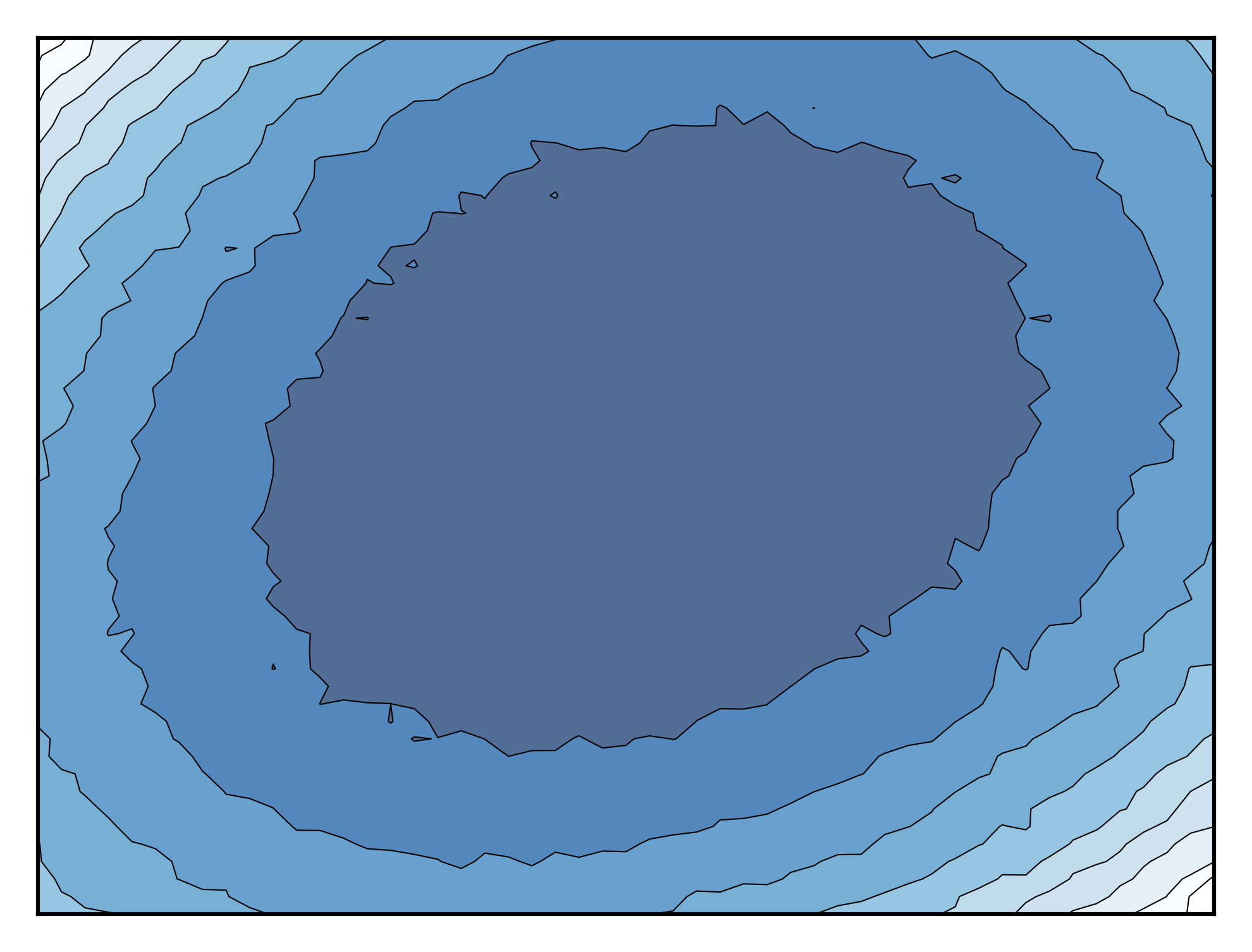}
    }

    % 第二行子图
    \subfigure[SAM: Tail]{
        \includegraphics[width=0.2\textwidth]{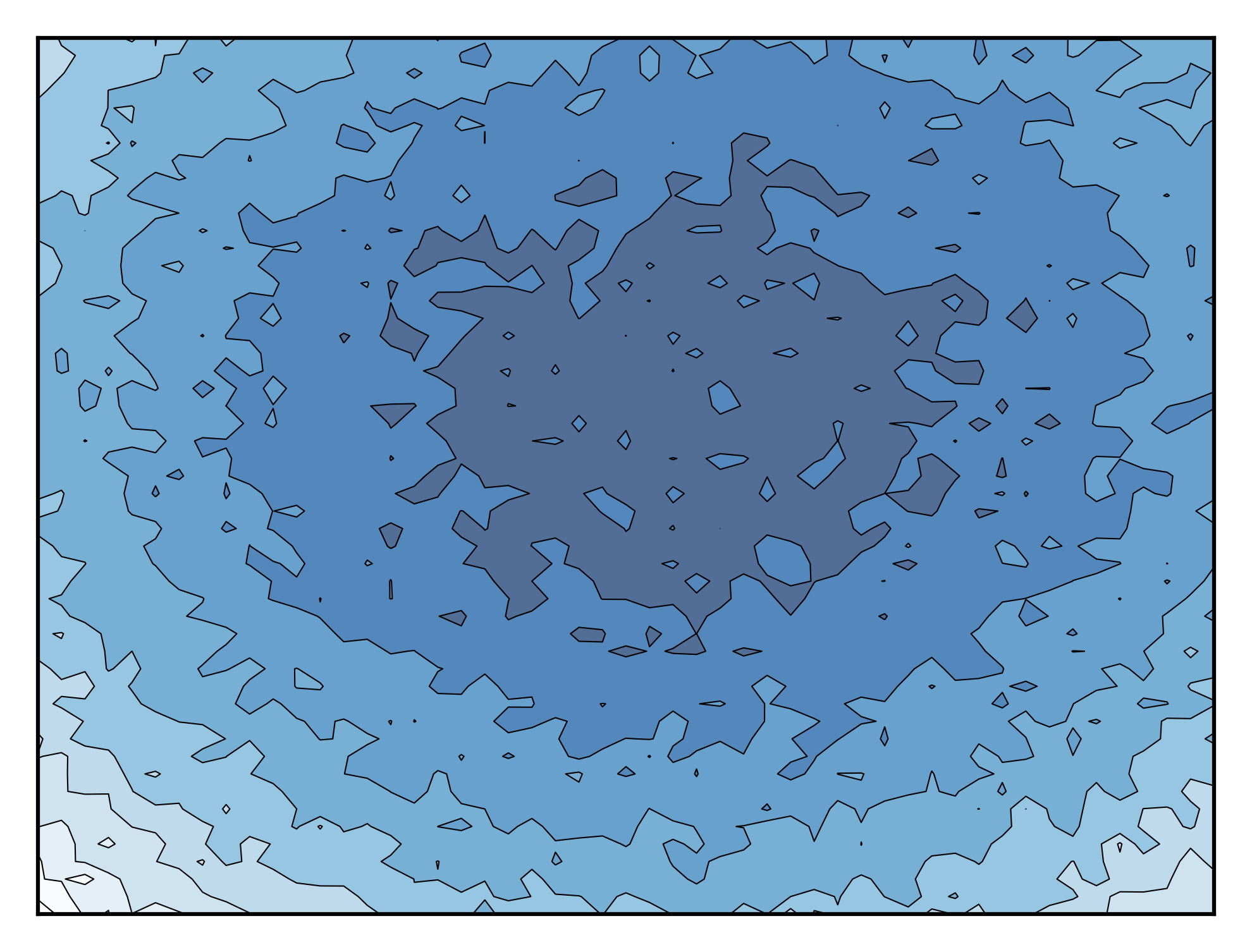}
    }
    \subfigure[ImbSAM: Tail]{
        \includegraphics[width=0.2\textwidth]{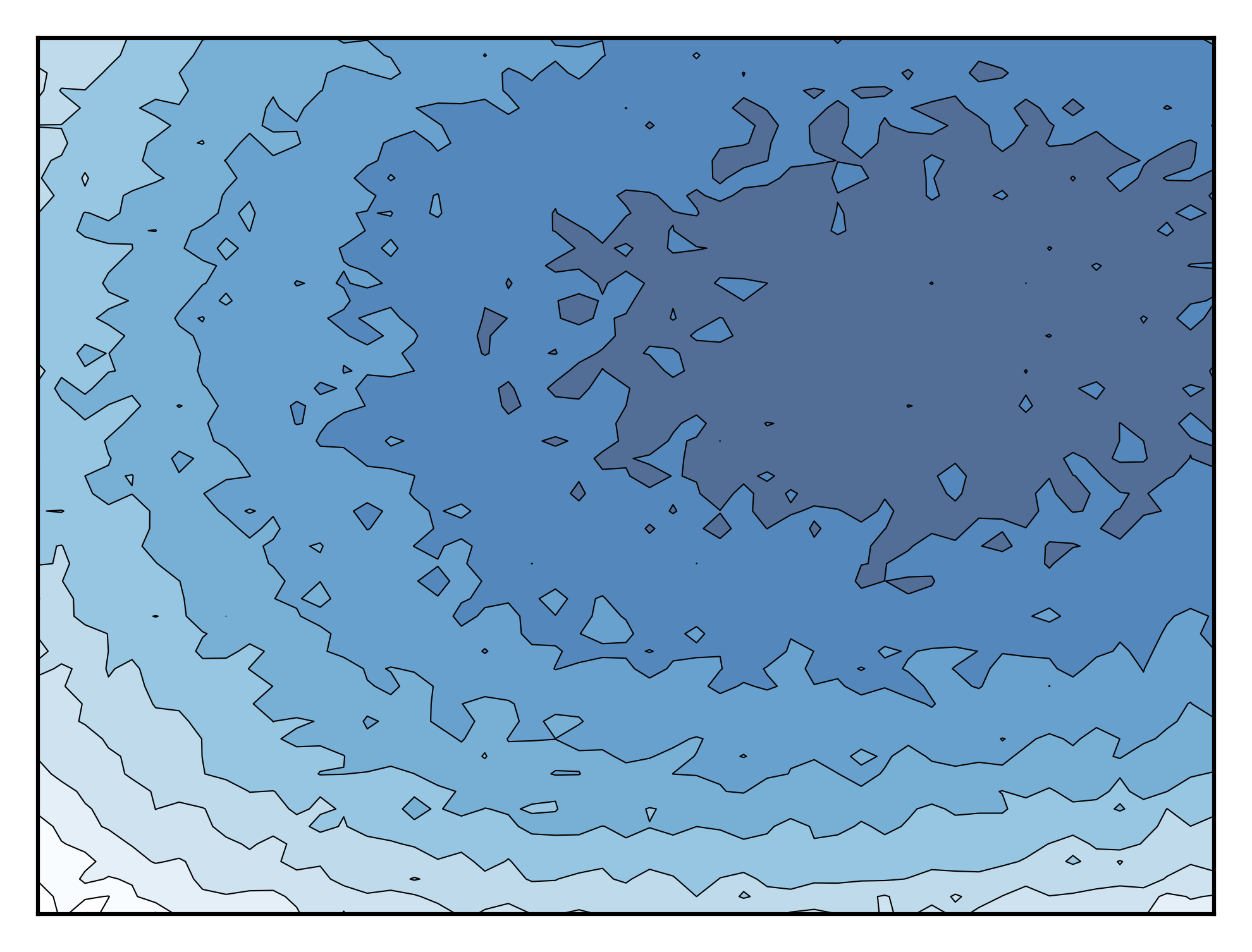}
    }
    \subfigure[CC-SAM: Tail]{
        \includegraphics[width=0.2\textwidth]{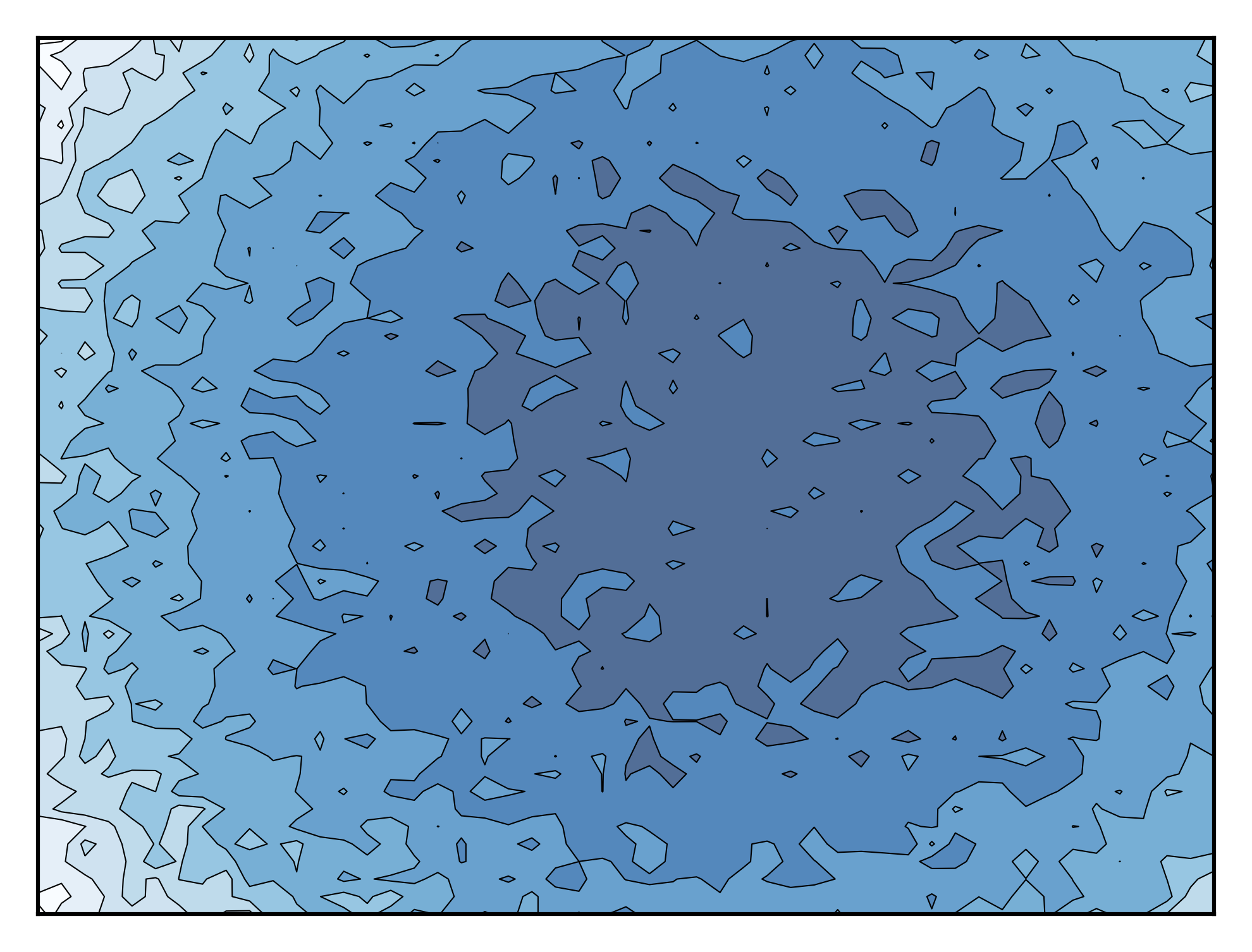}
    }
    \subfigure[Focal-SAM: Tail]{
        \includegraphics[width=0.2\textwidth]{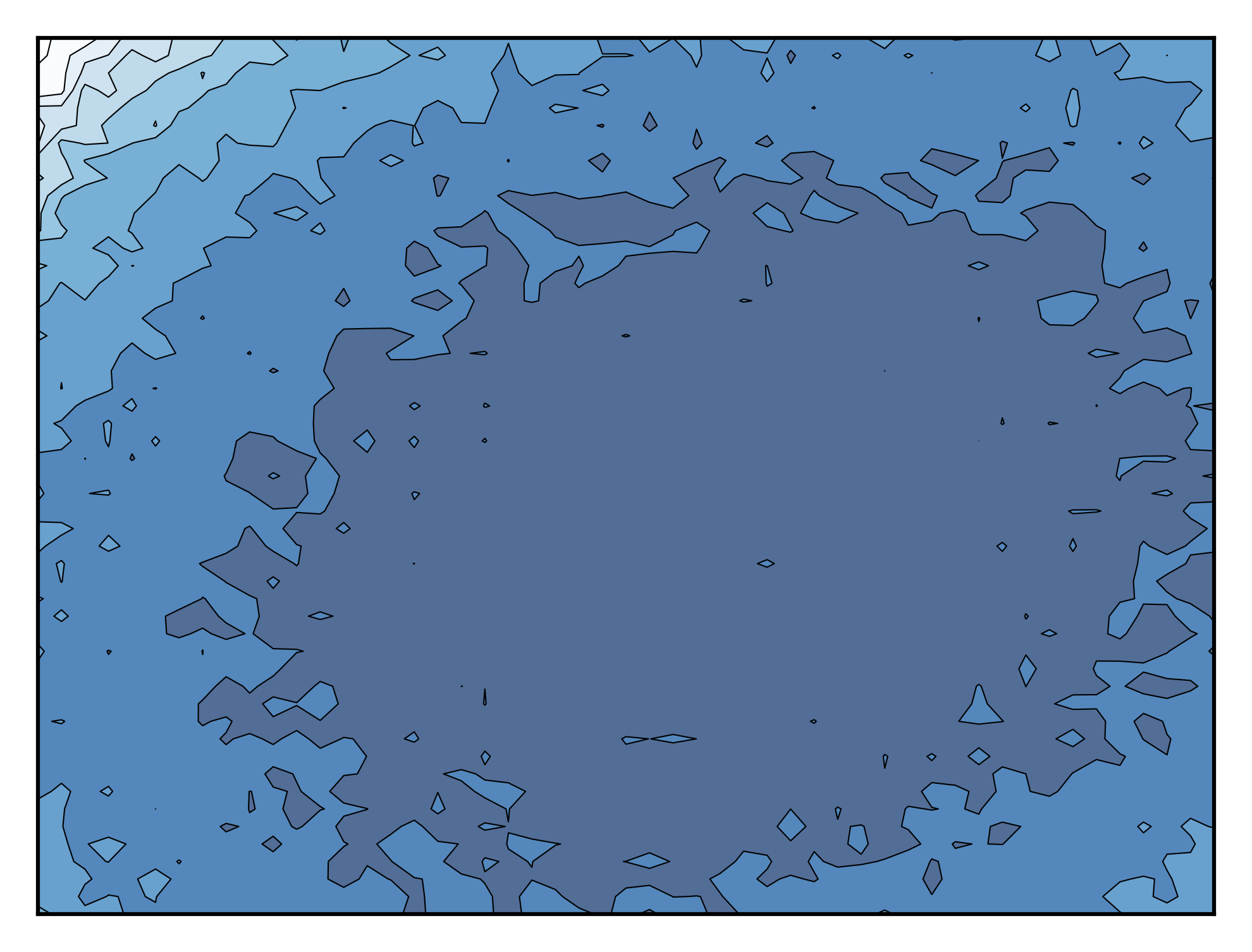}
    }

    \caption{Visualization of loss landscape for head and tail classes of ResNet models trained with SAM, ImbSAM, CC-SAM, and Focal-SAM on CIFAR-10 LT using VS loss respectively.}
    \label{fig: loss landscape on CIFAR-10 LT}

\end{figure*}

\subsection{Ablation Study About Perturbation Radius $\rho$}

Fig.\ref{fig: ablation study of rho} illustrates the impact of the hyperparameter $\rho$ on the performance of Focal-SAM when combined with LDAM+DRW, LA, and VS methods on the CIFAR-LT datasets during ResNet models training. As $\rho$ increases, Focal-SAM's performance initially improves but then declines. This indicates a trade-off between achieving flatter minima and reducing training loss. The optimal value of $\rho$ for Focal-SAM is approximately $0.3$, which is higher than the commonly optimal value for SAM on balanced training datasets, as reported by \citet{foret2021sharpnessaware}. This observation is consistent with \citet{rangwani2022escaping}, who suggest that a larger $\rho$ can enhance performance in long-tailed learning.

\begin{figure*}[!h]
    \centering

    % 右侧 minipage
    \begin{minipage}[b]{0.6\textwidth}
        % 第二行子图
        \subfigure[CIFAR-10 LT]{
            \includegraphics[width=0.4\textwidth]{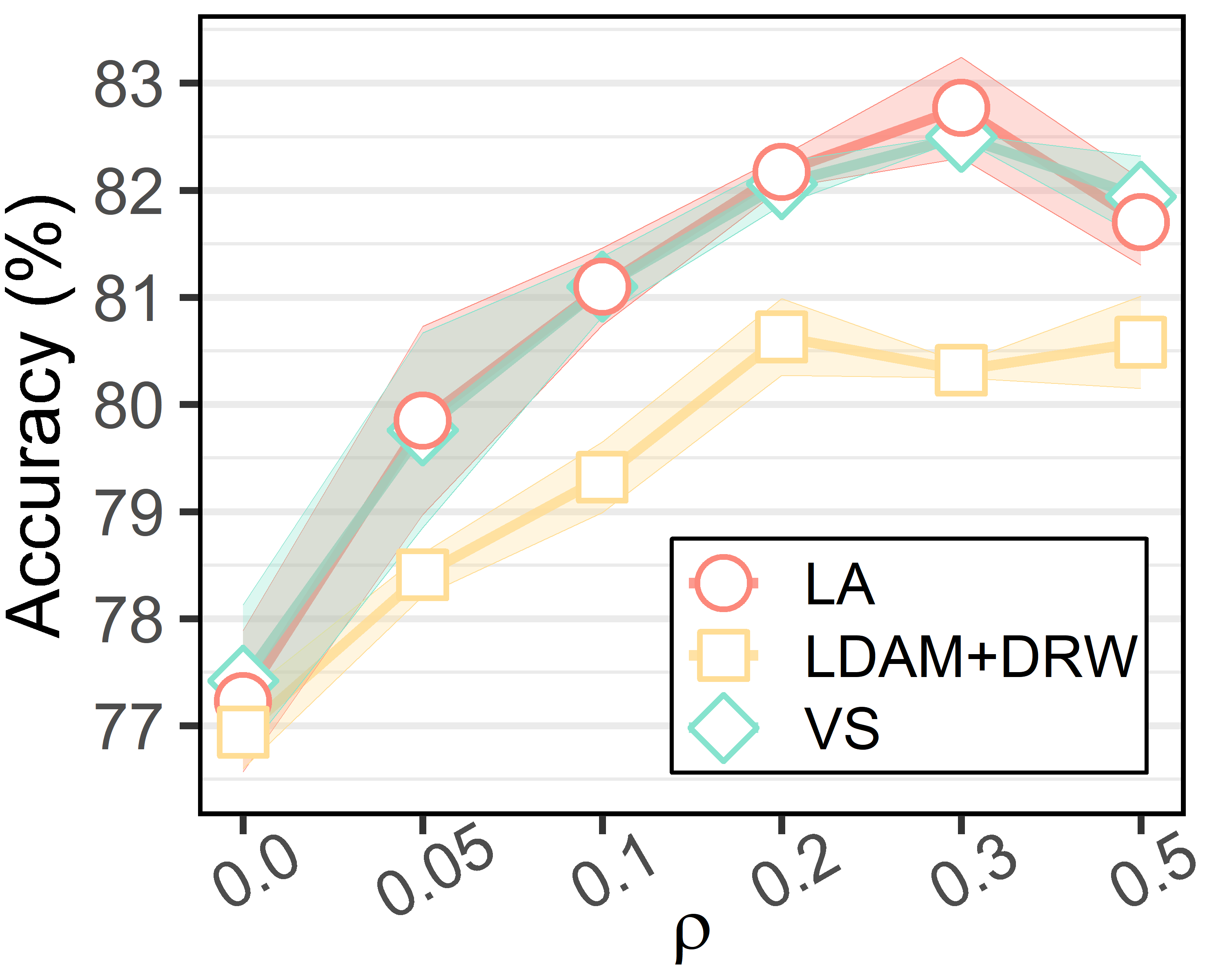}
        }%
        \hfill
        \subfigure[CIFAR-100 LT]{
            \includegraphics[width=0.4\textwidth]{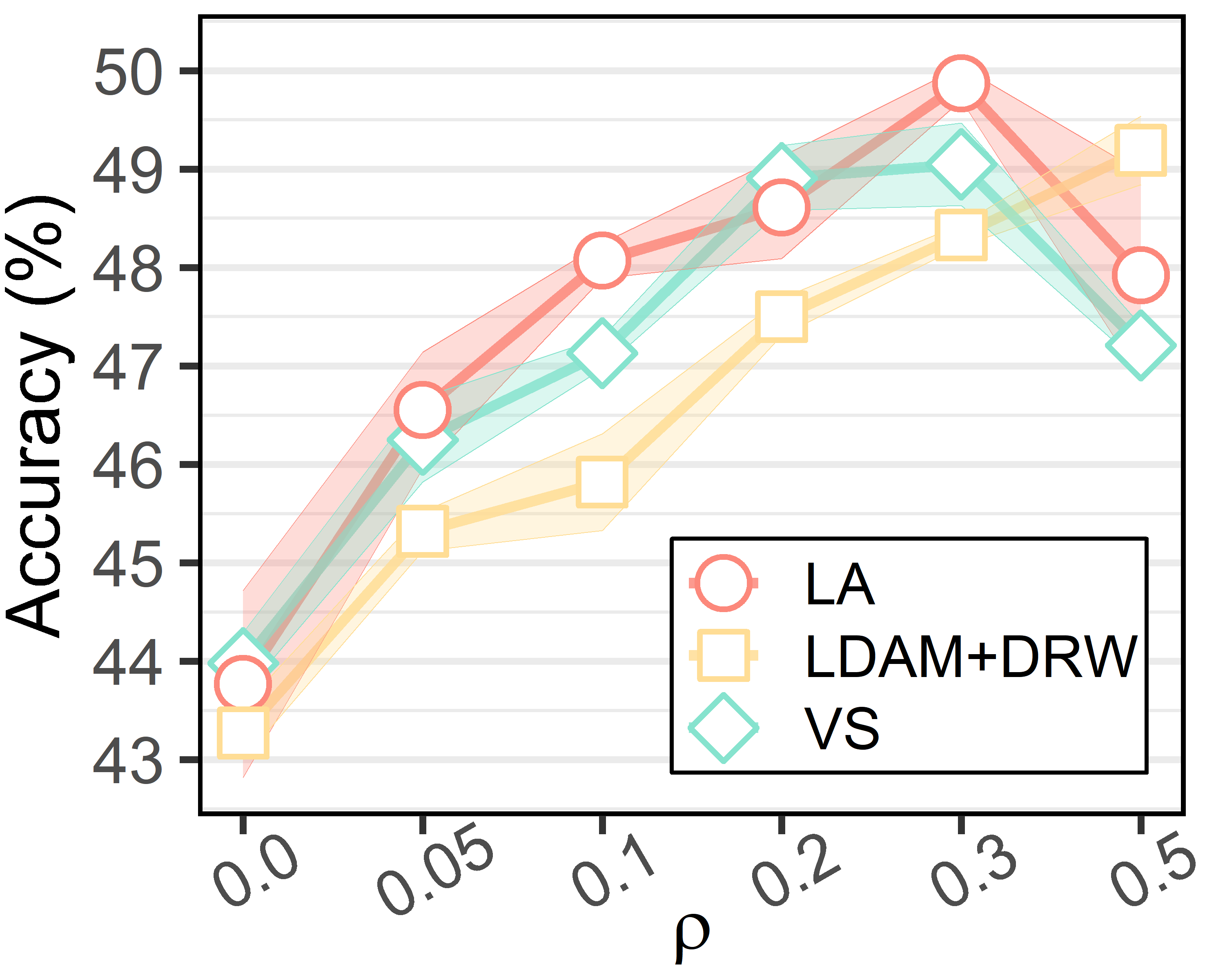}
        }
        \caption{Ablation Study of Focal-SAM \textit{w.r.t.} $\rho$}
        \label{fig: ablation study of rho}
    \end{minipage}

\end{figure*}

\subsection{Additional Results for Eigen Spectral Density of Hessian}

This section presents additional results of the spectral density of hessian for ResNet models trained with SAM, ImbSAM, CC-SAM, and Focal-SAM. We analyze models trained on CIFAR-10 LT and CIFAR-100 LT datasets using VS and CE loss functions. The results are visualized in Fig.\ref{fig: eigen spectral density of hessian on CIFAR-100 LT using VS loss}, Fig.\ref{fig: eigen spectral density of hessian on CIFAR-10 LT using CE loss} and Fig.\ref{fig: eigen spectral density of hessian on CIFAR-100 LT using CE loss}.

The results indicate that the largest eigenvalue $\lambda_{max}$ and the trace $tr(H)$ of the Hessian for tail classes are generally smaller with ImbSAM than with SAM. This suggests that ImbSAM flattens the loss landscape for tail classes more effectively. However, $\lambda_{max}$ and $tr(H)$ for head classes are typically larger with ImbSAM than with SAM, indicating that ImbSAM's coarse-grained strategy of excluding all head classes from SAM terms sharpens the loss landscape for those classes. In contrast, CC-SAM applies finer control over the loss landscape by using class-dependent perturbation radii, generally achieving lower $\lambda_{max}$ and $tr(H)$ for head and tail classes. Overall, both $\lambda_{max}$ and $tr(H)$ for head and tail classes are relatively lower with Focal-SAM than other SAM-based methods. This further suggests that Focal-SAM provides fine-grained control over the loss landscape, leading to a flatter landscape for both head and tail classes.

\begin{figure*}[!h]
    \centering

    % 第一行子图
    \subfigure[SAM: Head Classes]{
        \includegraphics[width=0.23\textwidth]{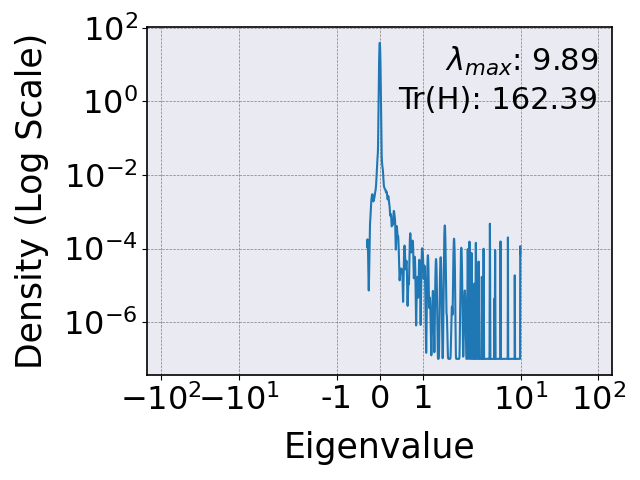}
    }
    \subfigure[ImbSAM: Head Classes]{
        \includegraphics[width=0.23\textwidth]{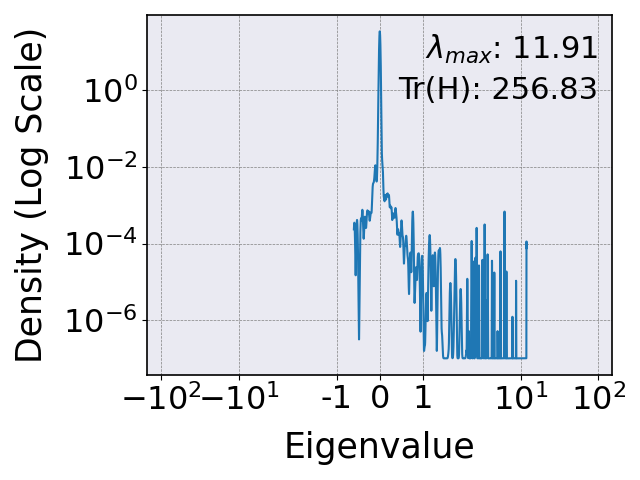}
    }
    \subfigure[CC-SAM: Head Classes]{
        \includegraphics[width=0.23\textwidth]{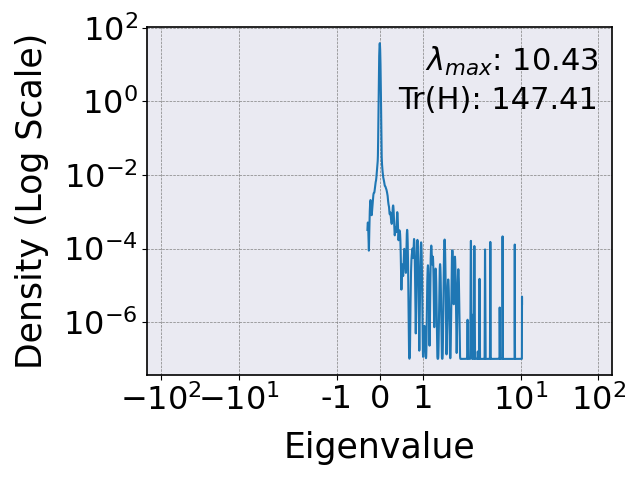}
    }
    \subfigure[Focal-SAM: Head Classes]{
        \includegraphics[width=0.23\textwidth]{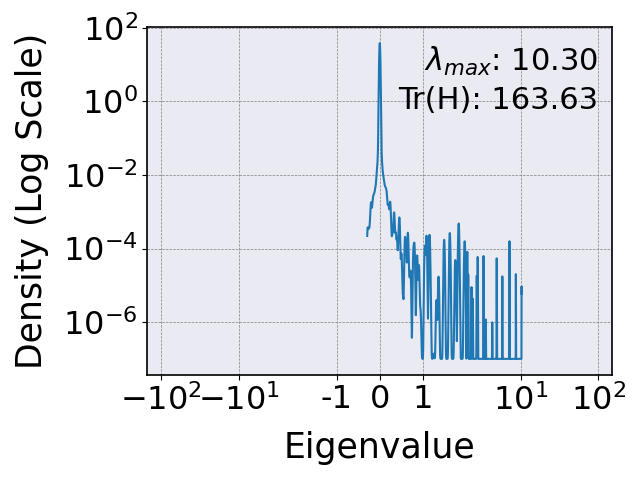}
    }

    % 第二行子图
    \subfigure[SAM: Tail Classes]{
        \includegraphics[width=0.23\textwidth]{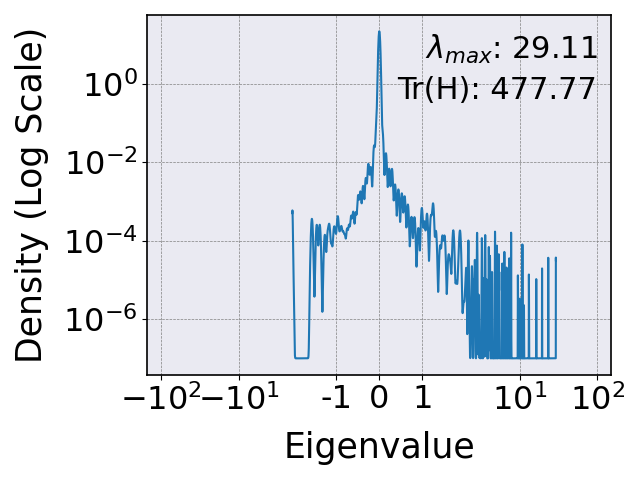}
    }
    \subfigure[ImbSAM: Tail Classes]{
        \includegraphics[width=0.23\textwidth]{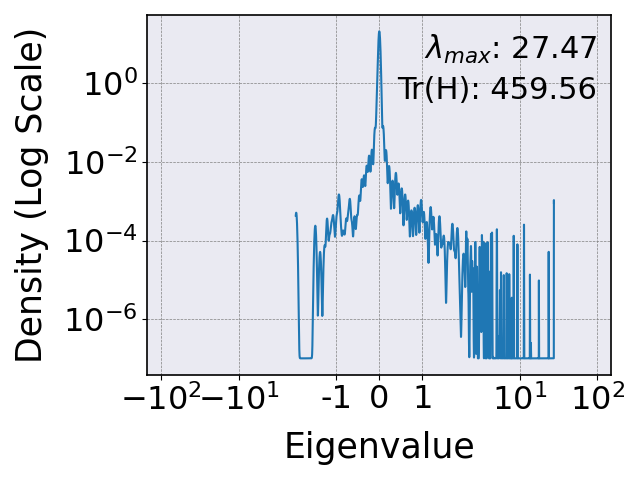}
    }
    \subfigure[CC-SAM: Tail Classes]{
        \includegraphics[width=0.23\textwidth]{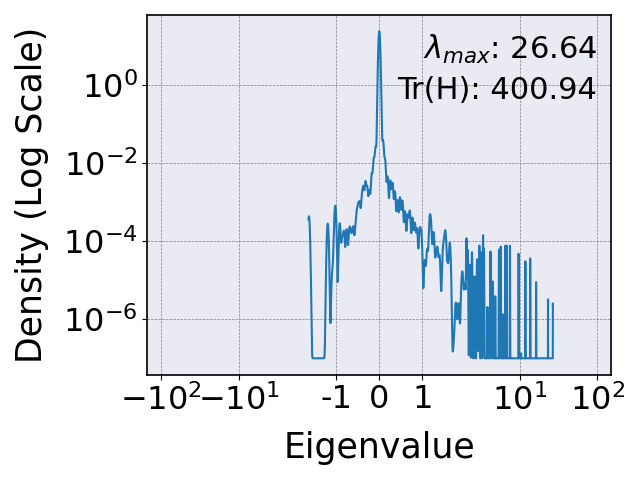}
    }
    \subfigure[Focal-SAM: Tail Classes]{
        \includegraphics[width=0.23\textwidth]{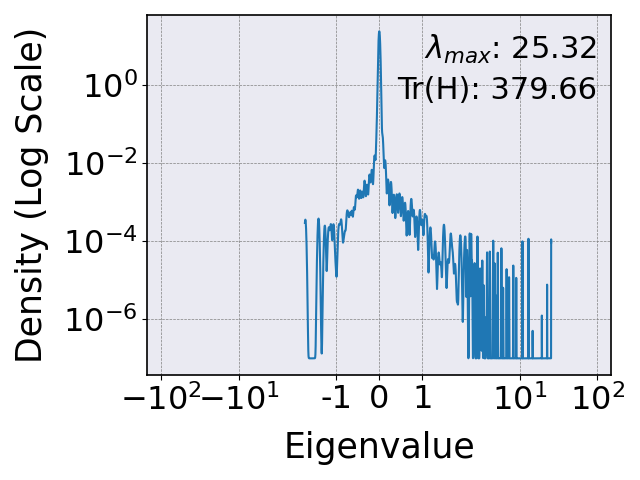}
    }

    \caption{Eigen Spectral Density of Hessian for head and tail classes of ResNet models trained with SAM, ImbSAM, CC-SAM, and Focal-SAM on CIFAR-100 LT using VS loss respectively. A smaller $\lambda_{max}$ and $Tr(H)$ generally indicate a flatter loss landscape.}
    \label{fig: eigen spectral density of hessian on CIFAR-100 LT using VS loss}

\end{figure*}

\begin{figure*}[!h]
    \centering

    % 第一行子图
    \subfigure[SAM: Head Classes]{
        \includegraphics[width=0.23\textwidth]{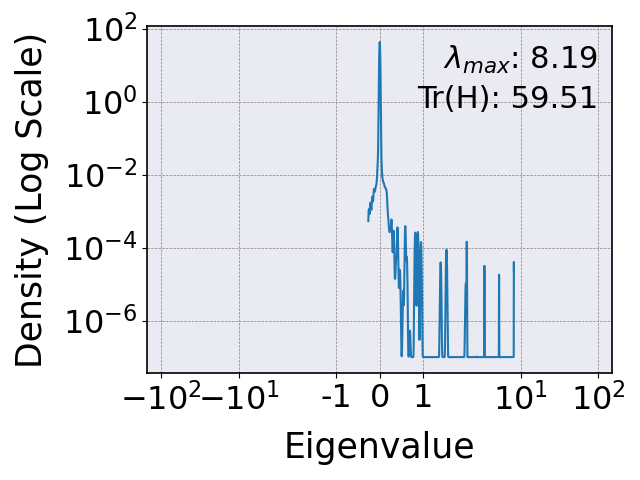}
    }
    \subfigure[ImbSAM: Head Classes]{
        \includegraphics[width=0.23\textwidth]{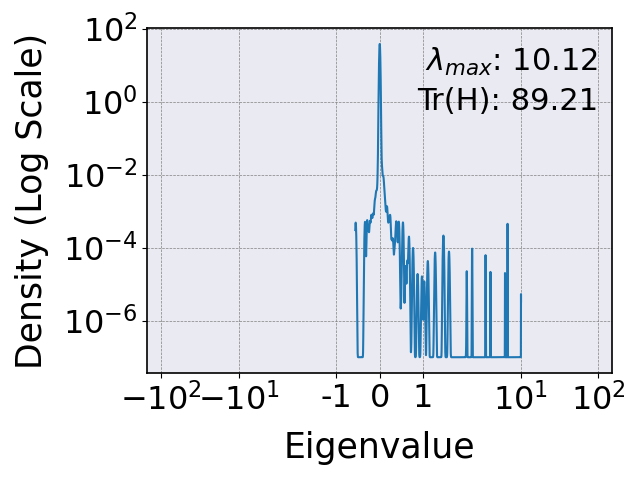}
    }
    \subfigure[CC-SAM: Head Classes]{
        \includegraphics[width=0.23\textwidth]{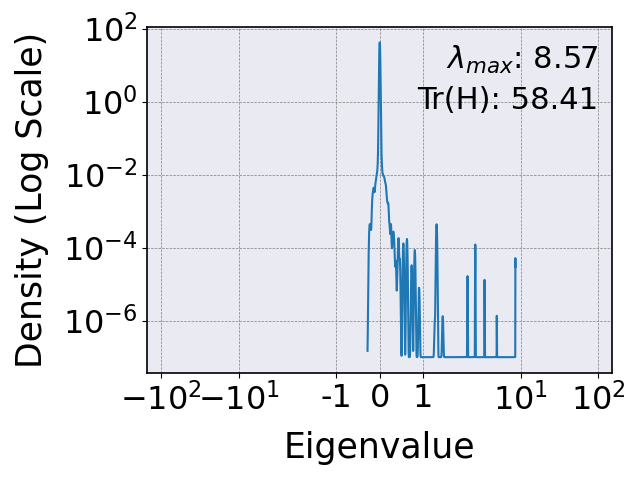}
    }
    \subfigure[Focal-SAM: Head Classes]{
        \includegraphics[width=0.23\textwidth]{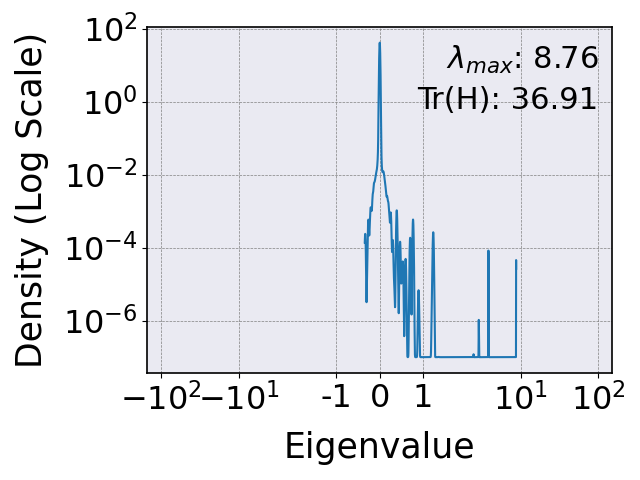}
    }

    % 第二行子图
    \subfigure[SAM: Tail Classes]{
        \includegraphics[width=0.23\textwidth]{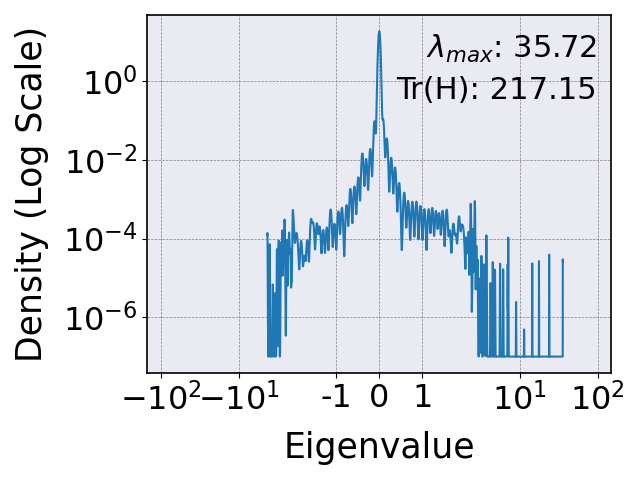}
    }
    \subfigure[ImbSAM: Tail Classes]{
        \includegraphics[width=0.23\textwidth]{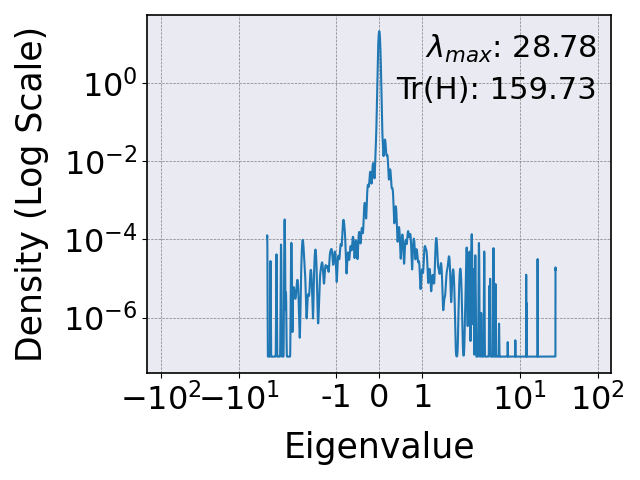}
    }
    \subfigure[CC-SAM: Head Classes]{
        \includegraphics[width=0.23\textwidth]{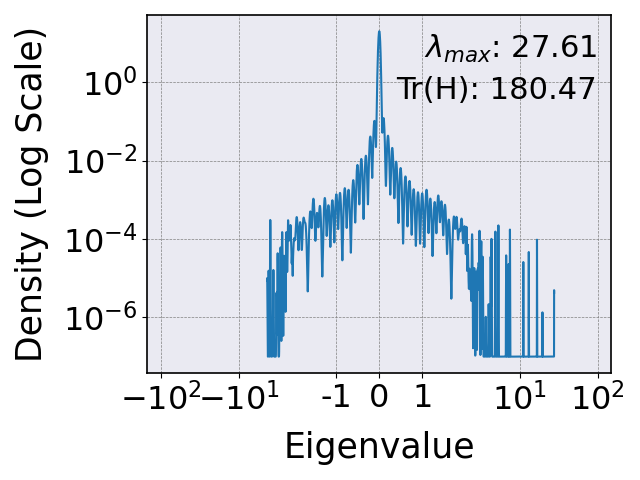}
    }
    \subfigure[Focal-SAM: Tail Classes]{
        \includegraphics[width=0.23\textwidth]{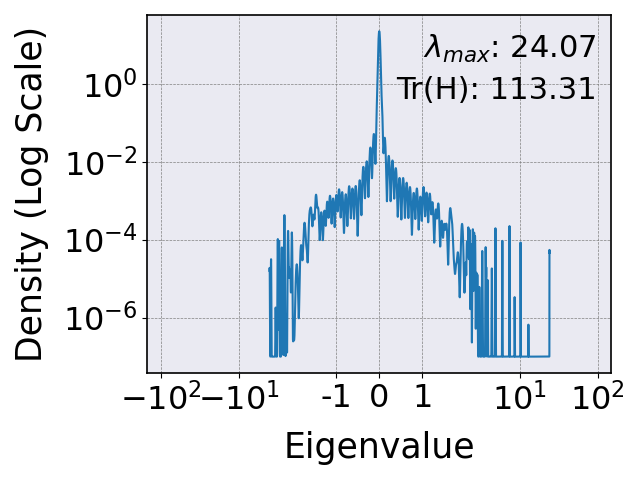}
    }

    \caption{Eigen Spectral Density of Hessian for head and tail classes of ResNet models trained with SAM, ImbSAM, CC-SAM, and Focal-SAM on CIFAR-10 LT using CE loss respectively. A smaller $\lambda_{max}$ and $Tr(H)$ generally indicate a flatter loss landscape.}
    \label{fig: eigen spectral density of hessian on CIFAR-10 LT using CE loss}

\end{figure*}

\begin{figure*}[!h]
    \centering

    % 第一行子图
    \subfigure[SAM: Head Classes]{
        \includegraphics[width=0.23\textwidth]{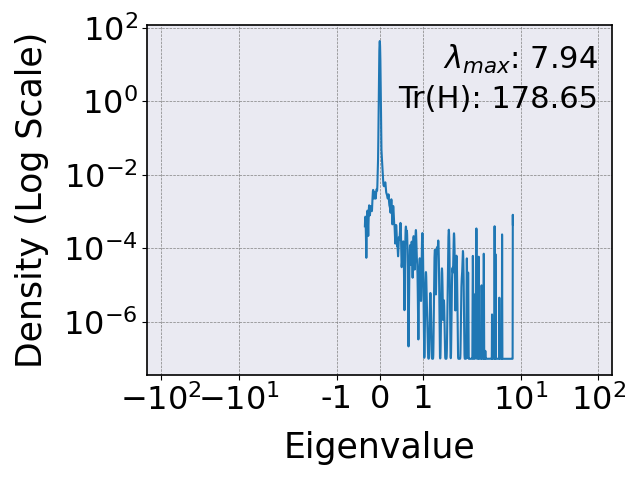}
    }
    \subfigure[ImbSAM: Head Classes]{
        \includegraphics[width=0.23\textwidth]{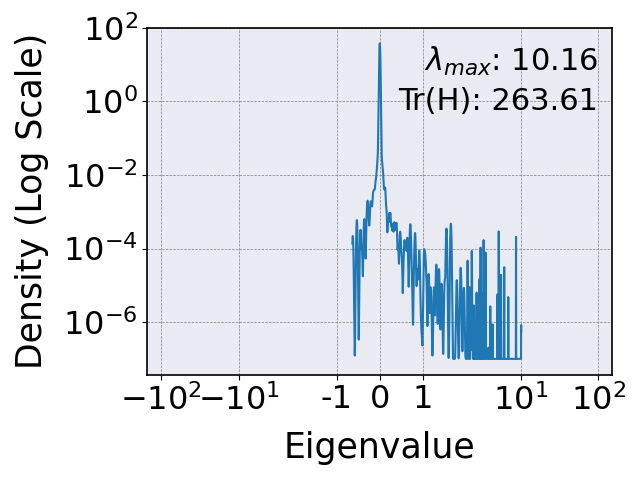}
    }
    \subfigure[CC-SAM: Head Classes]{
        \includegraphics[width=0.23\textwidth]{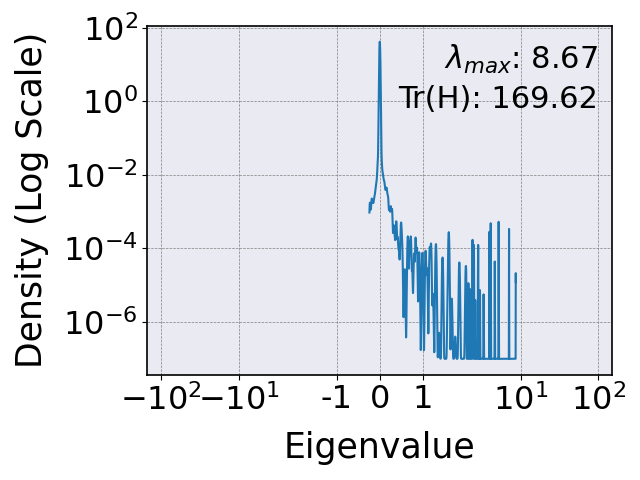}
    }
    \subfigure[Focal-SAM: Head Classes]{
        \includegraphics[width=0.23\textwidth]{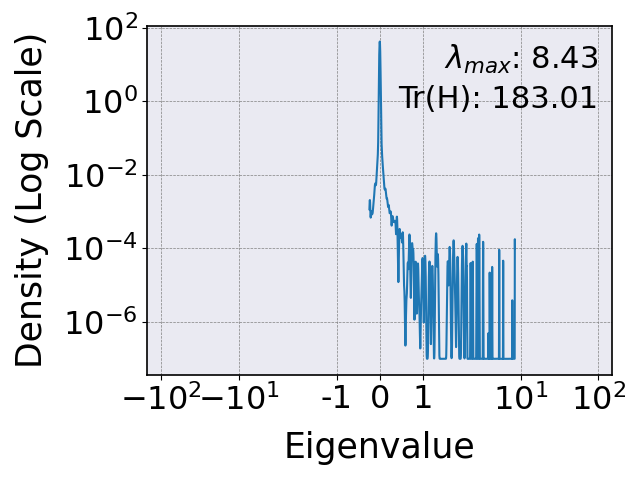}
    }

    % 第二行子图
    \subfigure[SAM: Tail Classes]{
        \includegraphics[width=0.23\textwidth]{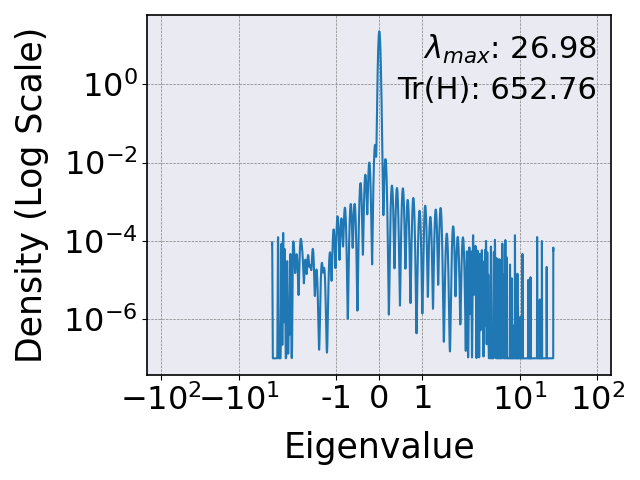}
    }
    \subfigure[ImbSAM: Tail Classes]{
        \includegraphics[width=0.23\textwidth]{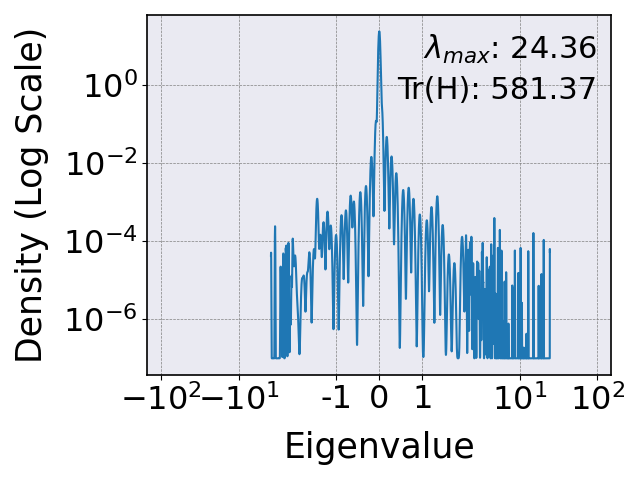}
    }
    \subfigure[CC-SAM: Tail Classes]{
        \includegraphics[width=0.23\textwidth]{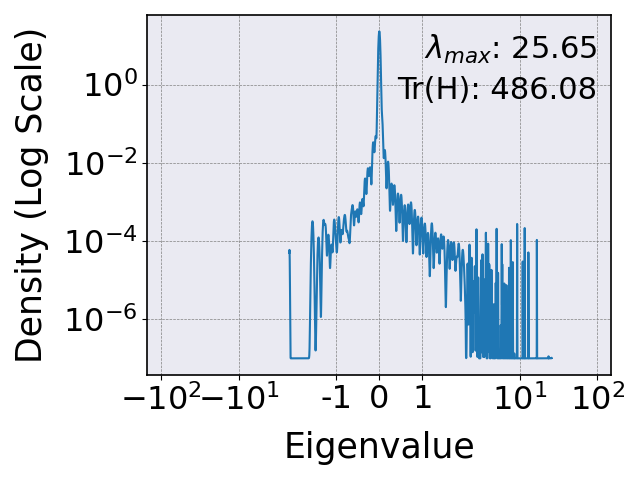}
    }
    \subfigure[Focal-SAM: Tail Classes]{
        \includegraphics[width=0.23\textwidth]{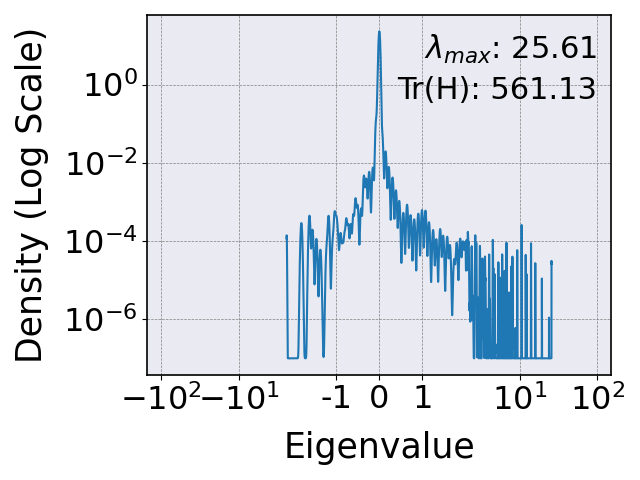}
    }

    \caption{Eigen Spectral Density of Hessian for head and tail classes of ResNet models trained with SAM, ImbSAM, CC-SAM, and Focal-SAM on CIFAR-100 LT using CE loss respectively. A smaller $\lambda_{max}$ and $Tr(H)$ generally indicate a flatter loss landscape.}
    \label{fig: eigen spectral density of hessian on CIFAR-100 LT using CE loss}

\end{figure*}

%%%%%%%%%%%%%%%%%%%%%%%%%%%%%%%%%%%%%%%%%%%%%%%%%%%%%%%%%%%%%%%%%%%%%%%%%%%%%%%
%%%%%%%%%%%%%%%%%%%%%%%%%%%%%%%%%%%%%%%%%%%%%%%%%%%%%%%%%%%%%%%%%%%%%%%%%%%%%%%

\end{document}